\newtheorem{theorem}{Theorem}
\newtheorem{lemma}{Lemma}
\newtheorem{defini}{Definition}
\newtheorem{Corollary}{Corollary}
\newtheorem{remark}{Remark}
\newtheorem{question}{Question}
\newtheorem{applemma}{Lemma}
\newtheorem{appCorollary}{Corollary}
\newenvironment{proof}{{\noindent\it Proof.}\quad}{\hfill $\square$\par}
\title{What About Inputing Policy in Value Function: Policy Representation and Policy-extended Value Function Approximator}
\author {
    Hongyao Tang,\textsuperscript{\rm 1}
    Zhaopeng Meng,\textsuperscript{\rm 1}
    Jianye Hao,\textsuperscript{\rm 1}\thanks{Corresponding author: Jianye Hao $<$jianye.hao@tju.edu.cn$>$. Please contact Hongyao Tang $<$bluecontra@tju.edu.cn$>$ for queries and discussions.}
    Chen Chen, \textsuperscript{\rm 2}
    Daniel Graves, \textsuperscript{\rm 2}
    Dong Li, \textsuperscript{\rm 2}
    Changmin Yu, \textsuperscript{\rm 3}
    Hangyu Mao, \textsuperscript{\rm 2}
    Wulong Liu, \textsuperscript{\rm 2}
    Yaodong Yang,\textsuperscript{\rm 1}
    Wenyuan Tao,\textsuperscript{\rm 1}
    Li Wang\textsuperscript{\rm 1}
}
\begin{document}

\maketitle

\begin{abstract}
  We study Policy-extended Value Function Approximator (PeVFA) in Reinforcement Learning (RL),
  which extends conventional value function approximator (VFA) to take as input not only the state (and action) but also an explicit policy representation.
  Such an extension enables PeVFA to preserve values of multiple policies at the same time
  and brings an appealing characteristic, i.e., \emph{value generalization among policies}.
  We formally analyze the value generalization under Generalized Policy Iteration (GPI).
  From theoretical and empirical lens,
  we show that generalized value estimates offered by PeVFA
  may have lower initial approximation error to 
  true values of successive policies,
  which is expected to improve consecutive value approximation during GPI.
  Based on above clues, we introduce a new form of GPI with PeVFA 
  which leverages the value generalization along policy improvement path.
  Moreover, we propose a representation learning framework for RL policy, 
  providing several approaches to learn effective policy embeddings from policy network parameters or state-action pairs.
  In our experiments, we evaluate the efficacy of value generalization offered by PeVFA and policy representation learning in several OpenAI Gym continuous control tasks.
  For a representative instance of algorithm implementation,
  Proximal Policy Optimization (PPO) re-implemented under the paradigm of GPI with PeVFA achieves about 40\% performance improvement on its vanilla counterpart in most environments.
\end{abstract}

\section{Introduction}
\label{sec:intro}
Reinforcement Learning (RL) has been widely considered as a promising way to learn optimal policies in many decision-making problems  \cite{Mnih2015DQN,Lillicrap2015DDPG,SilverHMGSDSAPL16AlphaGO,YouLYPL18GCPN,schreck2019retrosyn,vinyals2019grandmaster,HafnerLB020Dream}.
One fundamental element of RL
is value function which defines the long-term evaluation of a policy.
With function approximation (e.g., deep neural networks),
a value function approximator (VFA)
is able to approximate the values of a policy under large and continuous state spaces.
As commonly recognized, most RL algorithms can be described as Generalized Policy Iteration (GPI) 
\cite{SuttonB98}.
As illustrated on the left of Fig.\ref{figure:GPI}, 
at each iteration the VFA is trained to approximate the true values of current policy (i.e., policy evaluation),
regarding which the policy is further improved (i.e., policy improvement).
The value function approximation error hinders the effectiveness of policy improvement 
and then the overall optimality of GPI \cite{BertsekasT96NDP,ScherrerGGLG15AMPI}.
Unfortunately, such errors are inevitable under function approximation. 
A large number of samples are usually required to ensure high-quality value estimates,
resulting in the sample-inefficiency of deep RL algorithms.
Therefore, this raises an urgent need for more efficient value approximation methods
\cite{Hasselt10DoubleQ,BellemareDM17C51,Fujimoto2018TD3,KuznetsovSGV20TQC}.

An intuitive idea to improve the efficiency value approximation is to leverage the knowledge on the values of previous encountered policies.
However, a conventional VFA usually approximates the values of one policy and values learned from old policies are over-written gradually during the learning process. 
This means that the previously learned knowledge cannot be preserved and utilized with one conventional VFA.
Thus, such limitations prevent the potentials to leverage the previous knowledge for future learning.
In this paper, we study Policy-extended Value Function Approximator (PeVFA),
which additionally takes an explicit policy representation as input in contrast to conventional VFA. 
Thanks to the policy representation input, PeVFA is able to approximate values for multiple policies
and induces value generalization among policies.
We formally analyze the generalization of approximate values among policies in a general form. 
From both theoretical and empirical lens, 
we show that the generalized value estimates can be closer to the true values of the successive policy,
which can be beneficial to consecutive value approximation along the policy improvement path, called \textit{local generalization}.
Based on above clues,
we introduce a new form of GPI with PeVFA (the right of Fig.\ref{figure:GPI}) that leverages the local generalization to improve the efficiency of consecutive value approximation along the policy improvement path.


\begin{figure*}
\begin{center}
\includegraphics[width=0.75\textwidth]{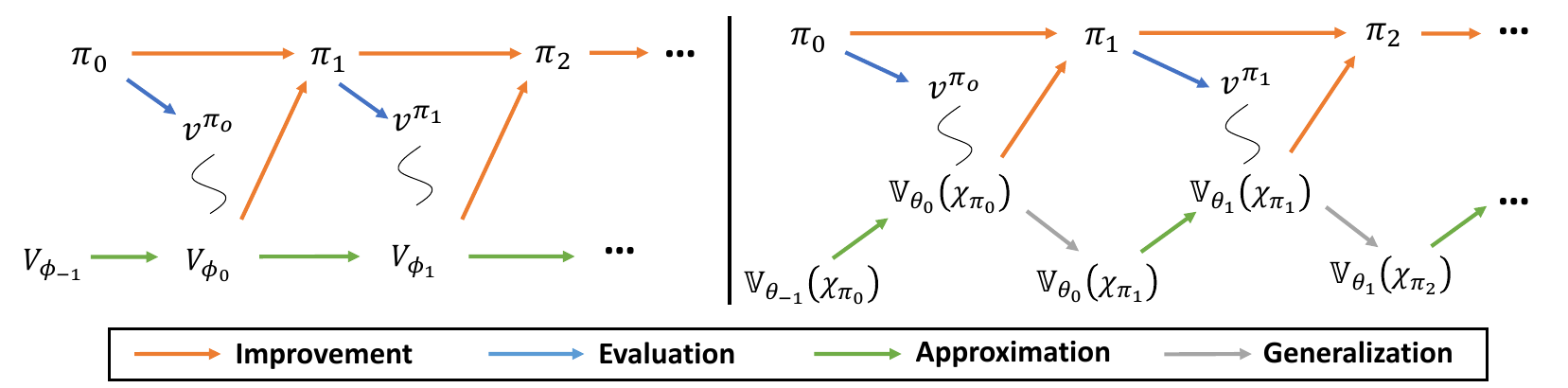}
\end{center}
\vspace{-0.1cm}
\caption{Generalized Policy Iteration (GPI) with function approximation.
\emph{Left}: GPI with conventional value function approximator $V_{\phi}$.
\emph{Right}: GPI with PeVFA $\mathbb{V}_{\theta}(\chi_{\pi})$ (Sec. \ref{sec:PeVFA}) where extra generalization steps exist.
The subscripts of policy $\pi$ and value function parameters $\phi, \theta$ denote the iteration number.
The squiggle lines represent non-perfect approximation of true values.
}
\label{figure:GPI}
\end{figure*}

One key point of GPI with PeVFA is the representation of policy since it determines how PeVFA generalizes the values.
For this,
we propose a framework to learn effective low-dimensional embedding of RL policy.
We use network parameters or state-action pairs as policy data and encode them into low-dimensional embeddings;
then the embeddings are trained to capture the effective information through contrastive learning and policy recovery. 
Finally, we evaluate the efficacy of GPI with PeVFA and our policy representations.
In principle, GPI with PeVFA is general and can be implemented in different ways.
As a practical instance, 
we re-implement Proximal Policy Optimization (PPO) with PeVFA and propose PPO-PeVFA algorithm.
Our experimental results on several OpenAI Gym continuous control tasks demonstrate the effectiveness of both value generalization offered by PeVFA and learned policy representations, 
with an about 40\% improvement in average returns achieved by our best variants on standard PPO in most tasks.

We summarize our main contributions below.
1) We study the value generalization among policies induced by PeVFA. 
From both theoretical and empirical aspects, we shed the light on the situations where the generalization can be beneficial to the learning along policy improvement path.
2) We propose a framework for policy representation learning.
To our knowledge, we make the first attempt to learn a low-dimensional embedding of over 10k network parameters for an RL policy.
3) We introduce GPI with PeVFA that leverages the value generalization in a general form.
Our experimental results demonstrate the potential of PeVFA in deriving practical and more effective RL algorithms.


\section{Related Work}
\label{sec:related_work}

\paragraph{Extensions of Conventional Value Function}
Sutton et al. \shortcite{SuttonMDDPWP11Horde} propose General Value Functions (GVFs) as a general form of knowledge representation of rewards and arbitrary cumulants.
Later, conventional value functions are extended to take extra inputs for different purposes of generalization.
One notable work is Universal Value Function Approximator (UVFA) \cite{SchaulHGS15UVFA}, which is proposed to generalize values among different goals for goal-conditioned RL.
UVFA is further developed in \cite{AndrychowiczCRS17HER,Nachum19HIRO,EysenbachGLS20Rewriting}
and influences the occurrence of other value function extensions in
context-based Meta-RL \cite{RakellyZFLQ19PEARL,Lee20CADM},
Hierarchical RL \cite{WangY0R20I2HRL}
and multiagent RL
\cite{HeB16Opponent,GroverAGBE18MAPR}
and etc.
Most of the above works study how to generalize the policy or value function among extrinsic factors, i.e., environments, tasks and opponents;
while we mainly study the value generalization among policies along policy improvement path, an intrinsic learning process of the agent itself.

\textbf{Policy Embedding and Representation.}
Although 
not well studied, 
representation (or embedding) learning for RL policies is involved in a few works
\cite{HausmanS0HR18PESkill,GroverAGBE18MAPR,ArnekvistKS19VPE}.
The most common way to learn a policy representation is to extract from interaction experiences.
As a representative, \cite{GroverAGBE18MAPR} propose learning the representation of opponent policy from interaction trajectories with a generative policy recovery loss and a discriminative triplet loss. 
These losses are later adopted in \cite{WangY0R20I2HRL,Raileanu20PDVF}.
Another straightforward idea is to represent policy parameters.
Network Fingerprint \cite{Harb20PENs} is such a differentiable representation that uses the concatenation of the vectors of action distribution outputted by policy network on a set of probing states.
The probing state set is co-optimized along with the primary learning objective, which can be non-trivial especially when the dimensionality of the set is high.
Besides, some early attempts in learning low-dimensional embedding of policy parameters are studies in Evolutionary Algorithms \cite{GaierAM20DiscoverRep,Rakicevic21PMS}, mainly with the help of VAE \cite{Kingma2013AEVB}.
Our work introduce a learning framework of policy representation including both above two perspectives. 

\paragraph{PVN and PVFs}
Recently, several works study the generalization among policy space.
\cite{Harb20PENs} propose Policy Evaluation Network (PVN) to directly approximate the distribution of policy $\pi$'s objective function $J(\pi)=\mathbb{E}_{\rho_0}[v^{\pi}(s_0)]$ with initial state $s_0 \sim \rho_0$.
PVN takes as input Network Fingerprint (mentioned above) of policy network.
After training on a pre-collected set of policies, a random initialized policy can be optimized in a zero-shot manner 
with the policy \textbf{g}radients of PVN by backpropagting \textbf{t}hrough the differentiable \textbf{p}olicy \textbf{i}nput.
We call such gradients \textit{GTPI} for short below.
Similar ideas are later integrated with task-specific context learning in multi-task RL \cite{Raileanu20PDVF}, 
leveraging the generalization among policies and tasks for fast policy adaptation on new tasks.
In PVN \cite{Harb20PENs}, as an early attempt, the generalization among policies is studied with small policy network and simple tasks;
besides, the most regular online learning setting is not studied.
Concurrent to our work, \cite{Faccio20PVFs} propose a class of Parameter-based Value Functions (PVFs) that take vectorized policy parameters as inputs.
Based on PVFs, new policy gradient algorithms are introduced in the form of a combination of conventional policy gradients and GTPI (i.e., by backpropagating through policy parameters in PVFs).
Except for zero-shot policy optimization as conducted in PVN, PVFs are also evaluated for online policy learning.
Due to directly taking parameters as input, PVFs suffer from the curse of dimensionality when the number of parameters is high.
Besides, GTPI can be non-trivial to rein since policy parameter space are complex 
and extrapolation generalization error can be large when the value function is only trained on finite policies (usually much fewer than state-action samples) thus further resulting in erroneous policy gradients.
We provide more discussions on GTPI in Appendix \ref{app:GTPI}.

Our work differs with PVFs from several aspects.
First, we make use of learned policy representation rather than policy network parameters.
Second, we do not resort to GTPI for the policy update in our algorithms but focus on utilizing value generalization for more efficient value estimation in GPI. 
Furthermore, 
we shed the light on two important problems --- how value generalization among policies can happen formally and whether it is beneficial to learning or not --- which are neglected in
in previous works
from both theoretical and empirical lens.
We refer one to read the original papers and our Appendix  \ref{app:pr_related_works} for better understandings of the differences.


\section{Policy-extended Value Function Approximator}
\label{sec:PeVFA}

In this section, we propose Policy-extended Value Function Approximator (PeVFA), an extension of conventional VFA that explicitly takes as input a policy representation.
First, we introduce the formulation (Sec. \ref{subsec:problem_formulation}),
then we study value generalization among policies theoretically (Sec. \ref{subsec:generalization}) along with some empirical evidences (Sec. \ref{subsec:demonstrative_exp}).
Finally, we derive a new form of GPI (Sec. \ref{subsec:PeVFA_GPI}).

\subsection{Formulation}
\label{subsec:problem_formulation}
Consider a Markov Decision Process (MDP) 
defined as
$\langle \mathcal{S}, \mathcal{A}, r, \mathcal{P}, \gamma \rangle$ where $\mathcal{S}$ is the state space, $\mathcal{A}$ is the action space, $r$ is the (bounded) reward function, $\mathcal{P}$ is the transition function and $\gamma \in [0,1)$ is the discount factor.
A policy $\pi \in P(\mathcal{A})^{|S|}$ defines the distribution over all actions for each state.
The goal of an RL agent is to find an optimal policy $\pi^{*}$ that maximizes the expected long-term discounted return.
The state-value function $v^{\pi}(s)$ is defined as the expected discounted return obtained through following the policy $\pi$ from a state $s$:
$v^{\pi}(s) = \mathbb{E}_{\pi}  \left[\sum_{t=0}^{\infty} \gamma^{t} r_{t+1}|s_0=s \right]$ for 
where $r_{t+1} = r(s_t,a_t)$.
We use $V^{\pi}$ to denote the vectorized form of value function.

In a general form, we define \textit{policy-extended value function} $\mathbb{V}: \mathcal{S} \times \Pi \rightarrow \mathbb{R}$ over state and policy space:
$\mathbb{V}(s, \pi) = v^{\pi}(s)$ for all $s \in \mathcal{S}$ and $\pi \in \Pi$.
In this paper, we focus on $\mathbb{V}(s, \pi)$ and policy-extended action-value function $\mathbb{Q}(s,a,\pi)$ can be obtained similarly.
We use $\mathbb{V}(\pi)$
to denote the value vector for all states in the following.
The key point is that $\mathbb{V}$ 
is able to preserve the values of multiple policies.
With function approximation, 
a PeVFA is expected to approximate the values of policies among policy space, i.e., $\{V^{\pi}\}_{\pi \in \Pi}$
then enable value generalization among policies.

Formally, 
given a function $g: \Pi \rightarrow \mathcal{X} \subseteq \mathbb{R}^n$ that maps any policy $\pi$ to 
an $n$-dimensional representation $\chi_{\pi} = g(\pi) \in \mathcal{X}$,
a PeVFA $\mathbb{V}_{\theta}$ with parameter $\theta \in \Theta$ 
is to minimize the approximation error over all possible states and policies generally:
$F_{\mu,p,\rho}(\theta, g, \Pi) = \sum_{\pi \in \Pi} \mu(\pi) \|\mathbb{V}_{\theta}(\chi_{\pi}) - V^{\pi}\|_{p, \rho}$,
where $\mu$, $\rho$ are distributions over policies and states respectively, $\|f\|_{p,\rho}=(\int_{s}\rho(\mathrm{d}s)|f(s)|^{p})^{1/p}$ is $\rho$-weighted $L_{p}$-norm \cite{LagoudakisP03LSPI,ScherrerGGLG15AMPI} for any $f: \mathcal{S} \rightarrow \mathbb{R}$.
The policy distribution $\mu$ of interest depends on the scenario where value generalization is considered.
As illustrated in Fig.\ref{figure:twe_generalization}, we provide two value generalization scenarios.
In the global generalization scenario,
a uniform distribution over known policy set may be considered 
with a general purpose of value generalization for unknown policies.
For the specific local generalization scenario along policy improvement path during GPI,
a sophisticated distribution that adaptively weights recent policies more during the learning process
may be more suitable in this case.
In the following, we care more about the local generalization scenario and use uniform state distribution $\rho$ and $L_2$-norm for demonstration. 
The subscripts are omitted and we use $\|\cdot\|$ for clarity.

\begin{figure*}
\centering
\hspace{-0cm}
\subfigure[Global Generalization]{
\includegraphics[width=0.18\textwidth]{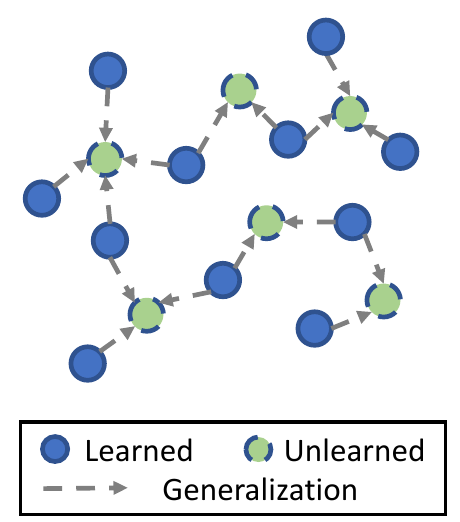}
\label{figure:global_gen}
}
\hspace{-0.cm}
\subfigure[Local Generalization]{
\includegraphics[width=0.57\textwidth]{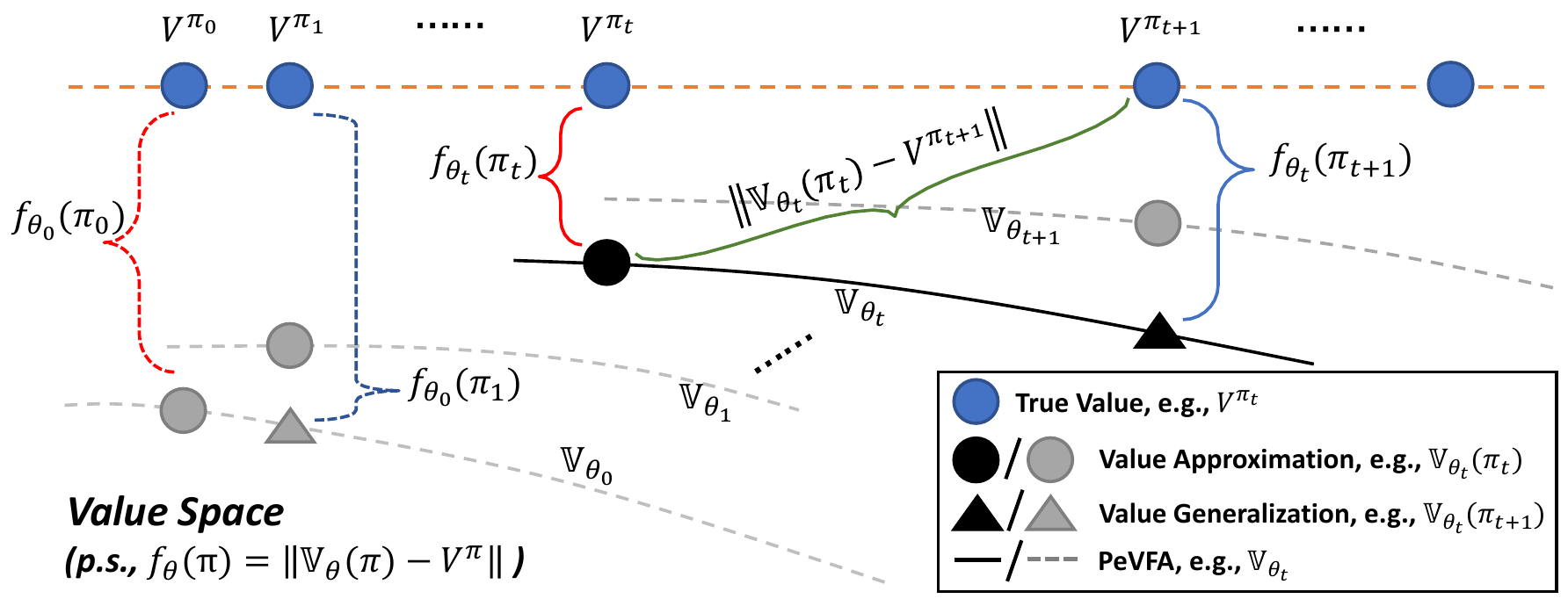}
\label{figure:local_gen}
}
\vspace{-0.2cm}
\caption{
Illustrations of value generalization among policies of PeVFA.
Each circle denotes value function (estimate) of a policy.
(a) \emph{Global Generalization}: values of known policies can be generalized to unknown policies.
(b) \emph{Local Generalization}: values of previous policies (e.g., $\pi_t$) can be generalized to successive policies (e.g., $\pi_{t+1}$) along policy improvement path.
}
\vspace{-0.4cm}
\label{figure:twe_generalization}
\end{figure*}

\subsection{Insights on Generalization among Policies}
\label{subsec:generalization}

In this part, 
we provide preliminary theoretical analysis on value generalization among policies induced by PeVFA,
to shed some light on whether the generalization can be beneficial to conventional RL under GPI paradigm.
We start from a two-policy case and study whether the value approximation learned for one policy can be generalized to the other one.
Later, we study the local generalization scenario (Fig.\ref{figure:local_gen}) and shed the light on the superiority of PeVFA for GPI.
All the proofs are provided in Appendix \ref{app:proofs}.


For the convenience of demonstration, we use an identical policy representation function,
i.e., $\chi_{\pi} = \pi$,
and define the approximation loss of PeVFA $\mathbb{V}_{\theta}$ for any policy $\pi \in \Pi$
as $f_{\theta}(\pi) = \| \mathbb{V}_{\theta}(\pi) - V^{\pi} \| \ge 0$.
We use the following definitions
for a formal description of value approximation process with PeVFA
and local property of loss function $f_\theta$ that influences generalization \cite{NovakBAPS18SenGen,Wang18GenPro} respectively:
\begin{defini}[$\pi$-Value Approximation]
\label{defi:approx_mapping}
We define a value approximation process $\mathscr{P}_{\pi}: \Theta \rightarrow \Theta$ with PeVFA as a $\gamma$-contraction mapping on the approximation loss for policy $\pi$, i.e., 
for $\hat{\theta} = \mathscr{P}_{\pi}(\theta)$, we have $f_{\hat{\theta}}(\pi) \le \gamma f_{\theta}(\pi)$ where $\gamma \in [0,1)$.
\vspace{-0.1cm}
\end{defini}
\begin{defini}[$L$-Continuity]
\label{defi:l_continuity}
We call $f_{\theta}$ is $L$-continuous at policy $\pi$ 
if $f_{\theta}$ is Lipschitz continuous at $\pi$ with a constant $L \in [0, \infty)$, 
i.e., $| f_{\theta}(\pi) - f_{\theta}(\pi^{\prime}) | \le L \cdot
d(\pi,\pi^{\prime})$ for $\pi^{\prime} \in \Pi$ 
with some distance metric $d$ for policy space $\Pi$.
\end{defini}
With Definition \ref{defi:approx_mapping}, the consecutive value approximation for the policies along policy improvement path during GPI can be described as:
$\theta_{-1} \xrightarrow{\mathscr{P}_{\pi_0}} \theta_{0} \xrightarrow{\mathscr{P}_{\pi_1}} \theta_{1} \xrightarrow{\mathscr{P}_{\pi_2}} \dots$,
as the green arrows illustrated in Fig.\ref{figure:GPI}.
One may refer to Appendix \ref{disc:defini} for a discussion on the rationality of the two definitions.

To start our analysis, we first study the generalized value approximation loss in a two-policy case where only the value of policy $\pi_1$ is approximated by PeVFA as below:
\begin{lemma}
\label{lem:gen_bound}
\vspace{-0.2cm}
For $\theta \xrightarrow{\mathscr{P}_{\pi_1}} \hat{\theta}$,
if $f_{\hat{\theta}}$ is $\hat{L}$-continuous at $\pi_1$ and $f_{\theta}(\pi_1) \le f_{\theta}(\pi_2)$, we have: 
$f_{\hat{{\theta}}}(\pi_2) \le \gamma f_{\theta}(\pi_2) + \mathcal{M}(\pi_1, \pi_2, \hat{L})$,
where $\mathcal{M}(\pi_1, \pi_2, \hat{L}) = \hat{L} \cdot d(\pi_1,\pi_2)$.
\vspace{-0.2cm}
\end{lemma}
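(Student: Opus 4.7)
The plan is to chain the three hypotheses in a straightforward way: the $L$-continuity of $f_{\hat{\theta}}$ at $\pi_1$ lets me move from the loss at $\pi_2$ to the loss at $\pi_1$ (for the updated parameter $\hat{\theta}$), the contraction property of $\mathscr{P}_{\pi_1}$ bounds the loss at $\pi_1$ after the update by a $\gamma$-fraction of the loss at $\pi_1$ before the update, and the monotonicity assumption $f_{\theta}(\pi_1) \le f_{\theta}(\pi_2)$ then replaces $f_{\theta}(\pi_1)$ by $f_{\theta}(\pi_2)$. Concatenating these three inequalities produces exactly the claimed bound.

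More concretely, I would first invoke Definition \ref{defi:l_continuity} applied to $f_{\hat{\theta}}$ at $\pi_1$ with the test policy $\pi_2$, obtaining
\[
f_{\hat{\theta}}(\pi_2) \;\le\; f_{\hat{\theta}}(\pi_1) + \hat{L}\cdot d(\pi_1,\pi_2).
\]
Next, since by hypothesis $\hat{\theta}=\mathscr{P}_{\pi_1}(\theta)$, Definition \ref{defi:approx_mapping} gives $f_{\hat{\theta}}(\pi_1)\le\gamma\,f_{\theta}(\pi_1)$, so substituting yields
\[
f_{\hat{\theta}}(\pi_2) \;\le\; \gamma\,f_{\theta}(\pi_1) + \hat{L}\cdot d(\pi_1,\pi_2).
\]
Finally, using the assumption $f_{\theta}(\pi_1)\le f_{\theta}(\pi_2)$ and recalling $\mathcal{M}(\pi_1,\pi_2,\hat{L})=\hat{L}\cdot d(\pi_1,\pi_2)$, I arrive at the lemma's conclusion.

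Honestly, there is no real obstacle here: the statement is essentially a one-line composition of the two definitions with the auxiliary monotonicity hypothesis. The only choices worth flagging are (i) which direction of the $L$-continuity inequality to use (we need the upper one-sided bound $f_{\hat{\theta}}(\pi_2) - f_{\hat{\theta}}(\pi_1) \le \hat{L}\cdot d(\pi_1,\pi_2)$, which follows from the two-sided Lipschitz condition), and (ii) the fact that the hypothesis $f_{\theta}(\pi_1)\le f_{\theta}(\pi_2)$ is what ensures the $\gamma$-contraction applied at $\pi_1$ can be re-expressed as a bound in terms of the loss at $\pi_2$ — which is precisely the quantity we want the right-hand side to involve so that the lemma can later be iterated along the policy improvement path in the local generalization analysis.
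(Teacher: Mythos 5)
Your proposal is correct and follows exactly the same three-step chain as the paper's own proof: Lipschitz continuity of $f_{\hat{\theta}}$ at $\pi_1$ to bound $f_{\hat{\theta}}(\pi_2)$ by $f_{\hat{\theta}}(\pi_1) + \hat{L}\cdot d(\pi_1,\pi_2)$, then the $\gamma$-contraction of $\mathscr{P}_{\pi_1}$, then the hypothesis $f_{\theta}(\pi_1)\le f_{\theta}(\pi_2)$. The paper's only additional content is a side remark that the locality margin $\mathcal{M}$ can take higher-order forms under Lipschitz-gradient or Lipschitz-Hessian assumptions, which is not needed for the stated lemma.
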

\begin{Corollary}
\label{cor:gen_cond}
$\mathscr{P}_{\pi_1}$
is $\gamma_{g}$-contraction ($\gamma_{g} \in [0,1)$) for $\pi_2$ when $f_{\theta}(\pi_2) > \frac{\hat{L} \cdot d(\pi_1, \pi_2)}{1 - \gamma}$.
\vspace{-0.2cm}
\end{Corollary}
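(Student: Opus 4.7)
The plan is to derive the corollary as a direct algebraic consequence of Lemma~\ref{lem:gen_bound}. Recall that a $\gamma_g$-contraction for $\pi_2$ means $f_{\hat{\theta}}(\pi_2) \le \gamma_g f_{\theta}(\pi_2)$ with $\gamma_g \in [0,1)$. Lemma~\ref{lem:gen_bound} already supplies an upper bound of the form $\gamma f_{\theta}(\pi_2) + \hat{L}\cdot d(\pi_1,\pi_2)$, so the task is to massage the additive $\hat{L}\cdot d(\pi_1,\pi_2)$ term into a multiplicative factor of $f_{\theta}(\pi_2)$.

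First I would start from the bound in Lemma~\ref{lem:gen_bound} and, assuming $f_{\theta}(\pi_2) > 0$ (which is implied by the hypothesis $f_{\theta}(\pi_2) > \hat{L}\cdot d(\pi_1,\pi_2)/(1-\gamma)$, since the right-hand side is nonnegative and strictly positive whenever $\hat{L}\cdot d(\pi_1,\pi_2) > 0$; the trivial case $\hat{L}\cdot d(\pi_1,\pi_2) = 0$ reduces to the contraction property of $\mathscr{P}_{\pi_1}$ itself), divide both sides by $f_{\theta}(\pi_2)$ to obtain
\begin{equation*}
\frac{f_{\hat{\theta}}(\pi_2)}{f_{\theta}(\pi_2)} \le \gamma + \frac{\hat{L}\cdot d(\pi_1,\pi_2)}{f_{\theta}(\pi_2)}.
\end{equation*}
Next I would define the candidate contraction coefficient
\begin{equation*}
\gamma_g \;:=\; \gamma + \frac{\hat{L}\cdot d(\pi_1,\pi_2)}{f_{\theta}(\pi_2)},
\end{equation*}
which clearly satisfies $\gamma_g \ge \gamma \ge 0$. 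The crucial check is $\gamma_g < 1$, which is equivalent to $\hat{L}\cdot d(\pi_1,\pi_2) < (1-\gamma) f_{\theta}(\pi_2)$, i.e. exactly the stated hypothesis $f_{\theta}(\pi_2) > \hat{L}\cdot d(\pi_1,\pi_2)/(1-\gamma)$.

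Putting these pieces together gives $f_{\hat{\theta}}(\pi_2) \le \gamma_g f_{\theta}(\pi_2)$ with $\gamma_g \in [0,1)$, which is precisely the definition of $\mathscr{P}_{\pi_1}$ being a $\gamma_g$-contraction for $\pi_2$ (in the sense of Definition~\ref{defi:approx_mapping}, extended from $\pi_1$ to $\pi_2$). I do not anticipate any serious obstacle here; the only subtlety is handling the degenerate case $f_{\theta}(\pi_2)=0$ and $\hat{L}\cdot d(\pi_1,\pi_2)=0$ separately (where the contraction is trivial, since the bound forces $f_{\hat{\theta}}(\pi_2)=0$ as well), and being careful that the hypothesis is interpreted as a strict inequality so that $\gamma_g$ stays strictly below $1$.
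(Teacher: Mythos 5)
Your proposal is correct and follows essentially the same route as the paper: both start from the bound in Lemma~\ref{lem:gen_bound} with the Lipschitz form of the locality margin and observe that the stated hypothesis is exactly the condition under which $\gamma f_{\theta}(\pi_2) + \hat{L}\cdot d(\pi_1,\pi_2) < f_{\theta}(\pi_2)$. Your explicit construction of $\gamma_g = \gamma + \hat{L}\cdot d(\pi_1,\pi_2)/f_{\theta}(\pi_2)$ and the remark on the degenerate case are slightly more careful than the paper's presentation, but the argument is the same.
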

Lemma \ref{lem:gen_bound} shows that the post-$\mathscr{P}_{\pi_1}$ approximation loss for $\pi_2$ is upper bounded by
a generalized contraction of prior loss
plus a locality margin term $\mathcal{M}$ which is related to $\pi_1$, $\pi_2$ and 
the locality property of $f_{\hat{\theta}}$.
In general, the form of $\mathcal{M}$ depends on the local property assumed. 
Some higher-order variants are provided in Appendix \ref{prf:lem1}.
For a step further, Corollary \ref{cor:gen_cond} reveals the condition where
a contraction on value approximation loss for $\pi_2$ is achieved when PeVFA is only trained to approximate the values of $\pi_1$.
Concretely, such a condition is apt to reach 
with tighter contraction for policy $\pi_1$ is, closer two policies, or smoother approximation loss function $f_{\hat{\theta}}$.

%

Then we consider the local generalization scenario
as illustrated in Fig.\ref{figure:local_gen}.
For any iteration $t$ of GPI,
the values of current policy $\pi_t$ are approximated by PeVFA,
followed by a improved policy $\pi_{t+1}$ whose values are to be approximated in the next iteration.
The value generalization from each $\pi_t$ and $\pi_{t+1}$ can be similarly considered as the two-policy case.
In addition to the former results, we shed the light on the value generalization loss of PeVFA along policy improvement path below:
\begin{lemma}
\label{lem:recursive_relation}
\vspace{-0.2cm}
For $\theta_{-1} \xrightarrow{\mathscr{P}_{\pi_0}} \theta_{0} \xrightarrow{\mathscr{P}_{\pi_1}} \theta_{1} \xrightarrow{\mathscr{P}_{\pi_2}} \dots$ with $\gamma_t$ for each $\mathscr{P}_{\pi_t}$,
if $f_{\theta_t}$ is $L_{t}$-continuous at $\pi_t$ for any $t \ge 0$,
we have
$f_{\theta_t}(\pi_{t+1}) \le \gamma_t f_{\theta_{t-1}}(\pi_{t}) + \mathcal{M}_{t}$,
where $\mathcal{M}_{t} = L_{t} \cdot d(\pi_t, \pi_{t+1})$.
\vspace{-0.15cm}
\end{lemma}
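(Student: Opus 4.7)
The plan is to derive the claimed bound by chaining the contraction hypothesis (Definition~\ref{defi:approx_mapping}) with the local Lipschitz hypothesis (Definition~\ref{defi:l_continuity}) in a short triangle-style argument. The key observation is that the right-hand side decomposes naturally into a piece that reflects how well $\mathscr{P}_{\pi_t}$ reduces the loss at $\pi_t$ itself, plus a locality margin that quantifies how much the loss can grow when we move from $\pi_t$ to the nearby $\pi_{t+1}$.

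First I would use the $L_t$-continuity of $f_{\theta_t}$ at $\pi_t$ to write
\begin{equation*}
  f_{\theta_t}(\pi_{t+1}) \;\le\; f_{\theta_t}(\pi_t) + L_t \cdot d(\pi_t, \pi_{t+1}),
\end{equation*}
which is just the one-sided form of the Lipschitz inequality in Definition~\ref{defi:l_continuity}. Next I would apply the contraction property: since $\theta_t = \mathscr{P}_{\pi_t}(\theta_{t-1})$, Definition~\ref{defi:approx_mapping} yields $f_{\theta_t}(\pi_t) \le \gamma_t\, f_{\theta_{t-1}}(\pi_t)$. Substituting this into the display above immediately gives
\begin{equation*}
  f_{\theta_t}(\pi_{t+1}) \;\le\; \gamma_t\, f_{\theta_{t-1}}(\pi_t) + L_t \cdot d(\pi_t, \pi_{t+1}) \;=\; \gamma_t\, f_{\theta_{t-1}}(\pi_t) + \mathcal{M}_t,
\end{equation*}
which is the stated bound. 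No recursive unrolling is needed because the lemma is a single-step recurrence between iteration $t-1$ and iteration $t$.

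The argument is essentially as short as it looks; the part that actually requires care is the bookkeeping around which parameters the Lipschitz constant belongs to. The hypothesis is $L_t$-continuity of $f_{\theta_t}$ at $\pi_t$, i.e.\ the \emph{post}-contraction function at the anchor policy of that contraction, not $L_t$-continuity of $f_{\theta_{t-1}}$ or continuity at $\pi_{t+1}$. This is what forces the triangle step to move from $\pi_{t+1}$ back to $\pi_t$, and what determines the order in which the two hypotheses are applied: first the Lipschitz step for $f_{\theta_t}$, then the contraction step for $\mathscr{P}_{\pi_t}$. Once this is observed, the result follows directly from the two defining conditions in Section~\ref{subsec:generalization}, with no further estimates needed.
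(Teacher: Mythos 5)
Your proof is correct and follows exactly the paper's argument: apply the $L_t$-continuity of $f_{\theta_t}$ at $\pi_t$ to bound $f_{\theta_t}(\pi_{t+1})$ by $f_{\theta_t}(\pi_t) + L_t\cdot d(\pi_t,\pi_{t+1})$, then use the $\gamma_t$-contraction of $\mathscr{P}_{\pi_t}$ to replace $f_{\theta_t}(\pi_t)$ with $\gamma_t f_{\theta_{t-1}}(\pi_t)$. Your remark about which parameter and which anchor policy the Lipschitz hypothesis attaches to is exactly the right bookkeeping point.
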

\begin{Corollary}
\label{cor:local_gen_induction}
By induction, we have
$f_{\theta_t}(\pi_{t+1}) \le \prod_{i=0}^t \gamma_t f_{\theta_{-1}}(\pi_{0}) + \sum_{i=0}^{t-1} \prod_{j=i+1}^{t} \gamma_j \mathcal{M}_{i} + \mathcal{M}_{t}$ .
\vspace{-0.1cm}
\end{Corollary}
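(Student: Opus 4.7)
The plan is to prove Corollary \ref{cor:local_gen_induction} by a routine induction on $t$ that unrolls the one-step recurrence supplied by Lemma \ref{lem:recursive_relation}. Since Lemma \ref{lem:recursive_relation} already states $f_{\theta_t}(\pi_{t+1}) \le \gamma_t f_{\theta_{t-1}}(\pi_t) + \mathcal{M}_t$, no new contraction or continuity assumptions are required; the whole argument is essentially the closed-form solution to a linear recurrence with time-varying coefficients.

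First I would introduce the shorthand $a_t := f_{\theta_t}(\pi_{t+1})$ so that Lemma \ref{lem:recursive_relation} reads $a_t \le \gamma_t a_{t-1} + \mathcal{M}_t$ for every $t \ge 0$, with $a_{-1} = f_{\theta_{-1}}(\pi_0)$. The base case $t=0$ then follows by one application of Lemma \ref{lem:recursive_relation}, giving $a_0 \le \gamma_0 f_{\theta_{-1}}(\pi_0) + \mathcal{M}_0$, which matches the claim once the empty product is read as $1$ and the empty sum as $0$.

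For the inductive step, I would assume the bound at step $t-1$ and apply Lemma \ref{lem:recursive_relation} once more: $a_t \le \gamma_t a_{t-1} + \mathcal{M}_t$. Substituting the inductive hypothesis and distributing the $\gamma_t$ factor multiplies the leading term's product by the extra $\gamma_t$ (turning $\prod_{i=0}^{t-1}\gamma_i$ into $\prod_{i=0}^{t}\gamma_i$) and prepends $\gamma_t$ to the product coefficient of each $\mathcal{M}_i$ with $i \le t-2$. The previously standalone $\mathcal{M}_{t-1}$ term from the hypothesis absorbs a factor of $\gamma_t$ and becomes the $i=t-1$ summand in the new sum (consistent with $\prod_{j=t}^{t}\gamma_j = \gamma_t$), while the freshly added $\mathcal{M}_t$ plays the role of the lone tail term in the claimed expression.

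The only place where care is needed is the index bookkeeping when re-indexing the sum after multiplying through by $\gamma_t$, specifically to verify that the upper limit of each inner product advances from $t-1$ to $t$ and that the $\mathcal{M}_{t-1}$ contribution merges cleanly into the updated sum. Beyond this purely mechanical check, there is no significant obstacle, so the corollary follows directly from Lemma \ref{lem:recursive_relation} by induction.
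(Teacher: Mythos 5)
Your proposal is correct and follows exactly the paper's own argument: the paper also proves this corollary by repeatedly applying the one-step recurrence from Lemma \ref{lem:recursive_relation} and unrolling it into the closed form, with the same bookkeeping of products of $\gamma_j$ attached to each $\mathcal{M}_i$. Your explicit base case and inductive step simply make the paper's chain of inequalities more formal.
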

The above results indicate that the value generalization loss can be recursively bounded and has a upper bound formed by a repeated contraction on initial loss plus the accumulation of locality margins induced from each local generalization.
An infinity-case discussion
is in Appendix \ref{prf:cor2}.

Although value generalization among policies demonstrated above is also intuitive,
it is not necessarily useful to conventional RL under GPI paradigm.
Therefore, the next question is naturally whether 
PeVFA with value generalization among policies is preferable to the conventional VFA;
if yes, what the case is to be like.
To this end, 
we 
introduce a desirable condition which reveals the superiority of PeVFA during
consecutive value approximation along the policy improvement path:
\begin{theorem}
\label{thm:closer_target}
During $\theta_{-1} \xrightarrow{\mathscr{P}_{\pi_0}} \theta_{0} \xrightarrow{\mathscr{P}_{\pi_1}} \theta_{1} \xrightarrow{\mathscr{P}_{\pi_2}} \dots$,
for any $t \ge 0$, if $f_{\theta_t}(\pi_t) + f_{\theta_t}(\pi_{t+1}) \le 
\|V^{\pi_t} - V^{\pi_{t+1}}\|$,
then $f_{\theta_t}(\pi_{t+1}) \le \|\mathbb{V}_{\theta_{t}}(\pi_t) - V^{\pi_{t+1}}\| $.
\end{theorem}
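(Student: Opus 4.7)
The plan is to prove Theorem \ref{thm:closer_target} through a direct application of the (reverse) triangle inequality in the normed space where the value vectors live, combined with the hypothesis that the two approximation errors sum to something no larger than the true-value separation between consecutive policies.

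The key observation is that $f_{\theta_t}(\pi_{t+1}) = \|\mathbb{V}_{\theta_t}(\pi_{t+1}) - V^{\pi_{t+1}}\|$ by definition, while the right-hand side $\|\mathbb{V}_{\theta_t}(\pi_t) - V^{\pi_{t+1}}\|$ measures the error one would incur by using the stale PeVFA estimate evaluated at the previous policy $\pi_t$ as a prediction for $V^{\pi_{t+1}}$. So the theorem compares "generalized PeVFA estimate at $\pi_{t+1}$" versus "stale PeVFA estimate at $\pi_t$" as approximations of $V^{\pi_{t+1}}$, and claims the former is better under the stated condition.

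First I would write the triangle inequality on the three points $V^{\pi_t}$, $\mathbb{V}_{\theta_t}(\pi_t)$, $V^{\pi_{t+1}}$ in the $L_2$ normed space:
\begin{equation*}
\|V^{\pi_t} - V^{\pi_{t+1}}\| \;\le\; \|V^{\pi_t} - \mathbb{V}_{\theta_t}(\pi_t)\| + \|\mathbb{V}_{\theta_t}(\pi_t) - V^{\pi_{t+1}}\| \;=\; f_{\theta_t}(\pi_t) + \|\mathbb{V}_{\theta_t}(\pi_t) - V^{\pi_{t+1}}\|.
\end{equation*}
Rearranging gives a lower bound on the "stale estimate" error: $\|\mathbb{V}_{\theta_t}(\pi_t) - V^{\pi_{t+1}}\| \ge \|V^{\pi_t} - V^{\pi_{t+1}}\| - f_{\theta_t}(\pi_t)$. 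Next I would invoke the hypothesis $f_{\theta_t}(\pi_t) + f_{\theta_t}(\pi_{t+1}) \le \|V^{\pi_t} - V^{\pi_{t+1}}\|$, which rearranges to $f_{\theta_t}(\pi_{t+1}) \le \|V^{\pi_t} - V^{\pi_{t+1}}\| - f_{\theta_t}(\pi_t)$. Chaining the two inequalities yields $f_{\theta_t}(\pi_{t+1}) \le \|\mathbb{V}_{\theta_t}(\pi_t) - V^{\pi_{t+1}}\|$, which is exactly the claim.

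There is no real obstacle here; the argument is two lines of triangle inequality plus an algebraic rearrangement, and it holds for every $t \ge 0$ independently since the statement is pointwise in $t$. The more interesting (but non-mathematical) step is interpreting the hypothesis: it says the PeVFA is already accurate enough on both $\pi_t$ and $\pi_{t+1}$ that the combined error is dominated by the genuine gap $\|V^{\pi_t} - V^{\pi_{t+1}}\|$ between true value functions along the improvement path. I would close by remarking that this is the precise formal sense in which PeVFA's generalized estimate serves as a better initialization for policy evaluation of $\pi_{t+1}$ than the conventional VFA (whose estimate is effectively $\mathbb{V}_{\theta_t}(\pi_t)$, i.e., warm-started from the previous policy's learned values), thereby justifying the "local generalization" benefit advertised in Section \ref{subsec:generalization}.
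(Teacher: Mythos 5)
Your proof is correct and is essentially identical to the paper's own argument: both apply the triangle inequality to the three points $V^{\pi_t}$, $\mathbb{V}_{\theta_t}(\pi_t)$, $V^{\pi_{t+1}}$, then chain with the hypothesis and cancel $f_{\theta_t}(\pi_t)$. No gaps.
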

Theorem \ref{thm:closer_target} shows that the generalized value estimates $\mathbb{V}_{\theta_{t}}(\pi_{t+1})$ can be closer to the true values of policy $\pi_{t+1}$ than $\mathbb{V}_{\theta_{t}}(\pi_t)$.
Note that $\mathbb{V}_{\theta_{t}}(\pi_t)$ is the value approximation for $\pi_t$ which is equivalent to the counterpart $V_{\phi_t}$ for a conventional VFA as value generalization among policies does not exist.
%
To consecutive value approximation along policy improvement path, 
this means that the value generalization of PeVFA has the potential to offer closer start points at each iteration.
If such closer start points can often exist,
we expect PeVFA to be preferable to conventional VFA since 
value approximation can be more efficient with PeVFA and it in turn facilitates the overall GPI process.

However, the condition in Theorem \ref{thm:closer_target} is not necessarily met in practice.
It depends on the locality margins that may be related to function family and optimization method of PeVFA, as well as the scale of policy improvement.
We conjecture that there are many looser sufficient conditions that lead to the consequence of Theorem 1, and the presented condition is the strictest one among them to achieve.
This can be interpreted by considering the geometrical relationship between $V^{\pi_t},V^{\pi_{t+1}},\mathbb{V}_{\theta_t}(\pi_t)$ and $\mathbb{V}_{\theta_t}(\pi_{t+1})$.
One special case that requires the sufficient condition presented in Theorem 1 is where $\mathbb{V}_{\theta_t}(\pi_t)$ and $\mathbb{V}_{\theta_t}(\pi_{t+1})$ and lie on the line segment between $V^{\pi_t}$ and $V^{\pi_{t+1}}$. 
We leave these further theoretical investigations for future work.
Instead, we empirically examine the existence of such desirable generalization results in the following.

\subsection{Empirical Evidences}
\label{subsec:demonstrative_exp}
We empirically investigate the value generalization of PeVFA
with didactic environments.
In this section, PeVFA $\mathbb{V}_{\theta}$ is parameterized by neural network 
and we use the concatenation of all weights and biases of the policy network as a straightforward representation $\chi_{\pi}$ for each policy, called \textit{Raw Policy Representation (RPR)}.
Experimental details are provided in Appendix \ref{app:demon_details}.

First, we demonstrate the global generalization (illustrated in Fig.\ref{figure:global_gen}) in a continuous 2D Point Walker environment.
We build the policy set $\Pi$ with synthetic policies, each of which is a randomly initialized 2-layer \emph{tanh}-activated neural network with 2 units for each layer.
The size of $\Pi$ is 20k and the behavioral diversity of synthetic policies is verified (see Fig.\ref{figure:syn_popu} in Appendix).
We divide $\Pi$ into training set (i.e., known policies $\Pi_0$) and testing set (i.e., unseen policies $\Pi_1$).
We rollout the policies in the environment to collect trajectories, based on which we perform value approximation training.
Fig.\ref{figure:global_gen_order_opt} shows the value predictions for policies from training and testing set (100 for each).
Our results show that a PeVFA trained on $\Pi_0$ achieves reasonable generalization performance when evaluating on $\Pi_1$.
The average losses on training and testing set are 1.782 and 2.071 over 6 trials.

\begin{figure}[t]
\centering
\hspace{-0.25cm}
\subfigure[]{
\includegraphics[width=0.16\textwidth]{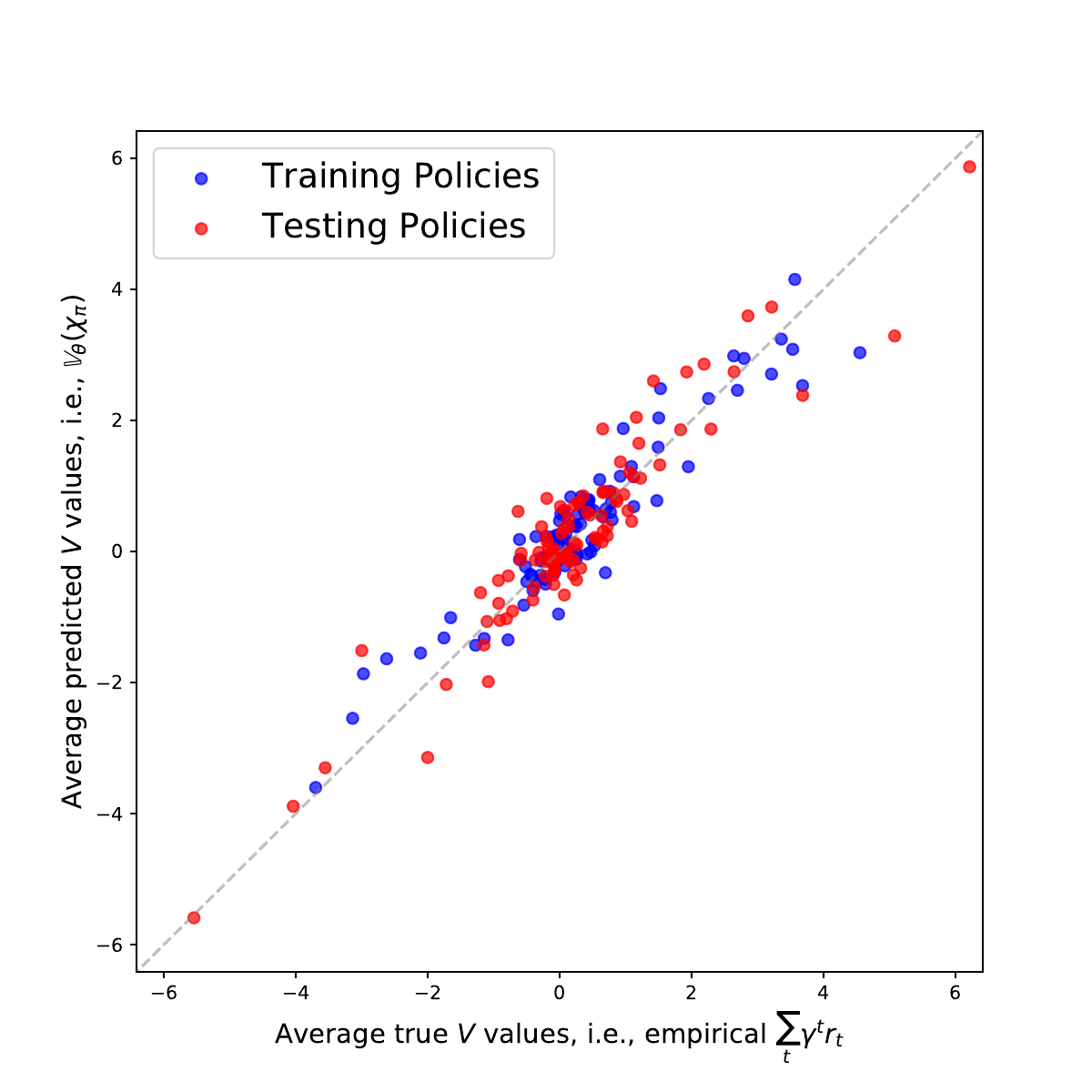}
\label{figure:global_gen_order_opt}
}
\hspace{0.cm}
\subfigure[]{
\includegraphics[width=0.24\textwidth]{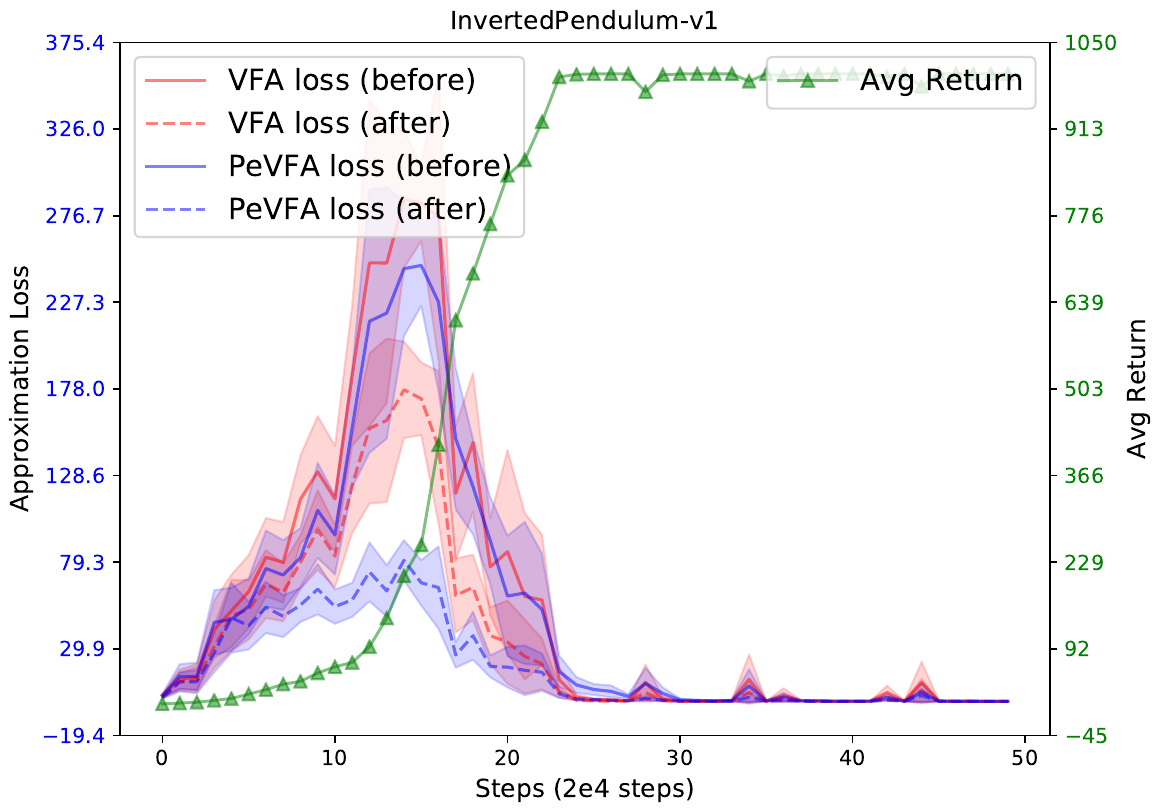}
\label{figure:local_gen_invertpen}
}
\vspace{-0.2cm}
\caption{Empirical evidences of two kinds of generalization of PeVFA.
(a) \emph{Global generalization}:
PeVFA shows comparable value estimation performance on testing policy set (red) after learning on training policy set (blue).
(b) \emph{Local generalization}: PeVFA ($\mathbb{V}_{\theta}(\chi_{\pi})$) shows lower losses
than conventional VFA ($V_{\phi}$) before and after the value approximation training for successive policies
along policy improvement path.
In (b), the left axis is for approximation loss (lower is better) and the right axis is for average return as a reference of the policy learning process (green curve).
}
\vspace{-0.5cm}
\label{figure:demon_exp}
\end{figure}

Next, we investigate the value generalization along policy improvement path, i.e., local generalization as in Fig.\ref{figure:local_gen}.
We use a 2-layer 8-unit policy network trained by standard PPO algorithm \cite{Schulman2017PPO} in MuJoCo continuous control tasks.
Parallel to the conventional value network $V_{\phi}(s)$ (i.e., VFA) in PPO, 
we set a PeVFA network $\mathbb{V}_{\theta}(s,\chi_{\pi})$ as a reference for the comparison on value approximation loss.
Compared to $V_{\phi}$,
PeVFA $\mathbb{V}_{\theta}(s,\chi_{\pi})$ takes RPR as input and approximates the values of all historical policies ($\{\pi_i\}_{i=0}^{t}$) in addition.
We compare the value approximation losses of $V_{\phi}$ (red) and $\mathbb{V}_{\theta}$ (blue)
before (solid) and after (dashed) updating with on-policy samples collected by the improved policy $\pi_{t+1}$ at each iteration.
Fig.\ref{figure:local_gen_invertpen} shows the results for InvertedPendulum-v1.
Results for all 7 MuJoCo tasks can be found in Appendix \ref{app:local_gen_mujoco}.
By comparing approximation losses before updating (red and blue solid curves), we can observe that
the approximation loss of $\mathbb{V}_{\theta_{t}}(\chi_{\pi_{t+1}})$ is almost consistently lower than that of $V_{\phi_t}$.
This means that the generalized value estimates offered by PeVFA are usually closer to the true values of $\pi_{t+1}$,
demonstrating the consequence arrived in Theorem \ref{thm:closer_target}.
For the dashed curves, 
it shows that PeVFA $\mathbb{V}_{\theta_{t+1}}(\chi_{\pi_{t+1}})$ can achieve lower approximation loss for $\pi_{t+1}$ than conventional VFA $V_{\phi_{t+1}}$ 
after the same number of training with the same on-policy samples.
The empirical evidence above indicates that PeVFA can be preferable to the conventional VFA for consecutive value approximation.
The generalized value estimates along policy improvement path have the potential to expedite the process of GPI.

\begin{algorithm}[t]
\small
  \caption{RL under the paradigm of GPI with PeVFA ($\mathbb{V}(s,\chi_{\pi})$ is used for demonstration)}
  \begin{algorithmic}[1]
    \State Initialize policy $\pi_{0}$, policy representation model $g$, PeVFA $\mathbb{V}_{-1}$ and experience buffer $\mathcal{D}$ 
	\For{iteration $t = 0, 1, \dots$}
        \State Rollout policy $\pi_{t}$ in the environment and obtain $k$ trajectories $\mathcal{T}_{t} = \{\tau_{i}\}_{i=0}^{k}$ 
        \State  Get representation $\chi_{\pi_t} = g(\pi)$ for policy $\pi_t$ and add experiences $(\chi_{\pi_t}, \mathcal{T}_{t})$ in buffer $\mathcal{D}$
        \If{$t \ \% \ M = 0$}
            \State Update PeVFA $\mathbb{V}_{t-1}(s,\chi_{\pi_i})$ for previous policies with data $\{(\chi_{\pi_i}, \mathcal{T}_{i})\}_{i=0}^{t-1}$
            \State Update policy representation model $g$, e.g., with approaches provided in Sec. \ref{sec:policy_repres}
        \EndIf
        \State Update PeVFA $\mathbb{V}_{t-1}(s,\chi_{\pi_t})$ for current policy $\chi_{\pi_t}$ and set $\mathbb{V}_{t} \longleftarrow \mathbb{V}_{t-1}$
        \State Update $\pi_t$ w.r.t $\mathbb{V}_{t}(s,\chi_{\pi_t})$ by policy improvement algorithm and set $\pi_{t+1} \longleftarrow \pi_{t}$
	\EndFor
  \end{algorithmic}
\label{algorithm:RL_PeVFA}
\end{algorithm}

\subsection{Reinforcement Learning with PeVFA}
\label{subsec:PeVFA_GPI}

Based on the results above,
we expect to leverage the value generalization of PeVFA to facilitate RL.
In Algorithm \ref{algorithm:RL_PeVFA}, 
we propose a general description of RL algorithm under the paradigm of GPI with PeVFA.
For each iteration, the interaction experiences of current policy and the policy representation are stored in a buffer (line 3-4).
At an interval of $M$ iterations,
PeVFA is trained via value approximation for previous policies with the stored data and the policy representation model is updated according to the method used (line 5-8).
This part is unique to PeVFA for preservation and generalization of knowledge of historical policies.
Next, 
value approximation for current policy is performed with PeVFA (line 9).
A key difference here is that the generalized value estimates (i.e., $\mathbb{V}_{t-1}(\chi_{\pi_{t}})$) are used as start points.
Afterwards, 
a successive policy is obtained from typical policy improvement (line 10).
Algorithm \ref{algorithm:RL_PeVFA} can be implemented in different ways and we propose an instance implemented based on PPO \cite{Schulman2017PPO} in our experiments later.
We refer the readers to Appendix \ref{app:GPI_with_PeVFA} for more discussions on GPI with PeVFA.
In the next section, we introduce our methods for policy representation learning.


\section{Policy Representation Learning}
\label{sec:policy_repres}

\begin{figure}
\begin{center}
\includegraphics[width=0.475\textwidth]{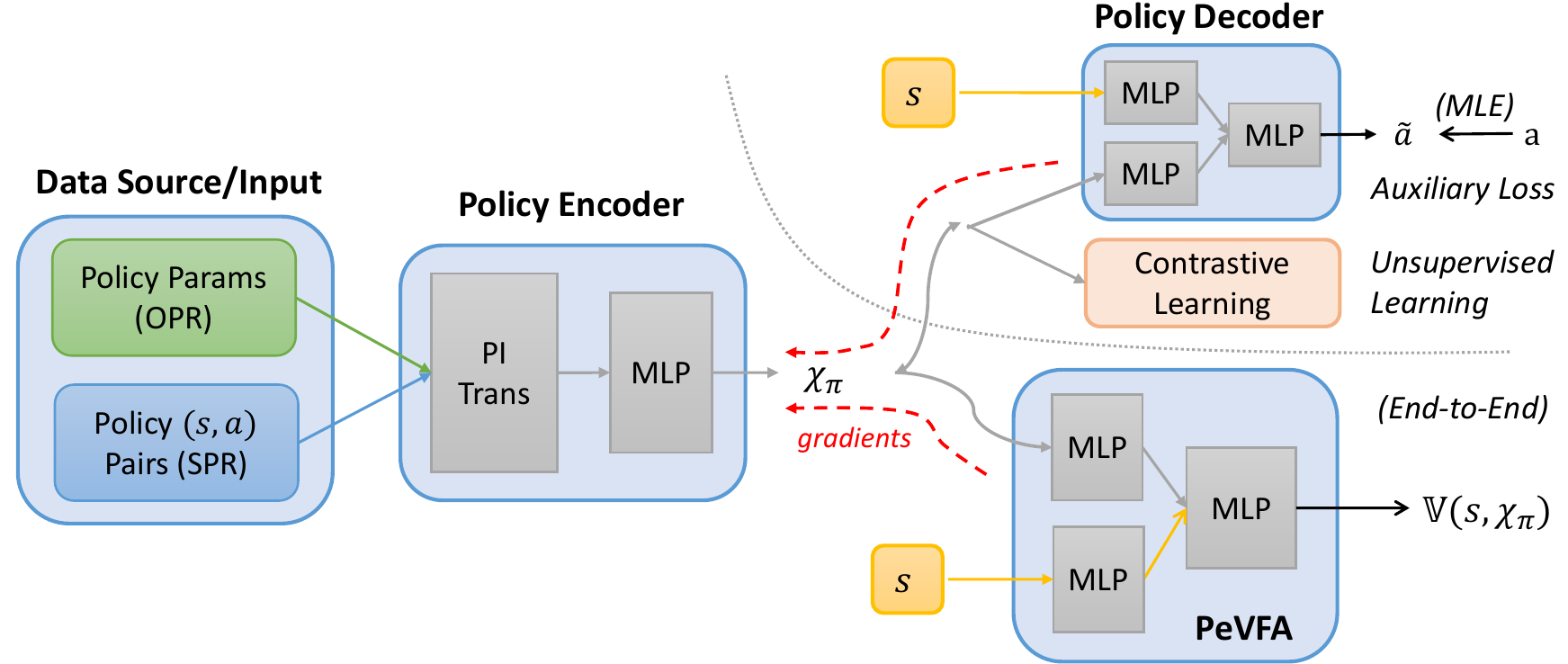}
\end{center}
\vspace{-0.2cm}
\caption{The framework of policy representation training.
Policy network parameters used for OPR or policy state-action pairs used for SPR are fed into policy encoder with permutation-invariant (PI) transformations followed by an MLP, producing the representation $\chi_{\pi}$.
Afterwards, $\chi_{\pi}$ can be trained by gradients from the value approximation loss of PeVFA (i.e., End-to-End), 
as well as the auxiliary loss of policy recovery or contrastive learning (i.e., InfoNCE) loss.
}
\vspace{-0.4cm}
\label{figure:pr_framework}
\end{figure}

To derive practical deep RL algorithms,
one key point is policy representation, i.e., a low-dimensional embedding of RL policy.
Intuitively, policy representation influences the approximation and generalization of PeVFA.
Thus, 
it is of interest to find 
an effective policy representation
based on which the superiority of PeVFA can be
leveraged to improve RL algorithms.
To our knowledge, policy representation is not well studied
and 
it remains unclear on how to obtain an effective representation for an RL policy in a general case in practice.
In previous section, 
we demonstrate the effectiveness of using policy parameters as a naive representation when policy network is small,
called RPR.
However, a usual policy network may have large number of parameters,
thus making it inefficient and even irrational to use RPR for approximation and generalization \cite{Harb20PENs,Faccio20PVFs}.
More generally, 
policy parameters of the policy we wish to represent may not be accessible.

To this end, we propose a general framework of policy representation learning as illustrated in Fig.\ref{figure:pr_framework}.
The first thing to consider is data source, i.e., from which we can extract the information for an effective policy representation.
Recall that the policy is a distribution over state and action space of high dimensionality.
The features of such a distribution is not directly available.
Therefore, we consider two kinds of data source below that indirectly contains the information of policies:
1) \emph{Surface Policy Representation (SPR)}:
The first data source is state-action pairs (or trajectories \cite{GroverAGBE18MAPR}), 
since they reflect how policy may behave under such states.
This data source is general since no explicit form of policy is assumed.
In a geometric view, learning policy representation from state-action pairs can be viewed as capturing the features of policy via scattering sample points on the curved surface of policy distribution.
2) \emph{Origin Policy Representation (OPR)}:
The other data source is parameters of policy since they determine the underlying form of policy distribution.
Such a data source is often available during the learning process of deep RL algorithms when policy is parameterized by neural networks.
Generally, we consider a policy network to be an MLP with well represented state features (e.g., features extracted by CNN for pixels or by LSTM for sequences) as input.

The remaining question is how we extract the policy representation from the data sources mentioned above.
As shown in Fig.\ref{figure:pr_framework}, 
we use permutation-invariant (PI) transformations followed by an MLP
to encode the data of policy $\pi$ into an embedding $\chi_{\pi}$ for both SPR and OPR.
For SPR, each state-action pair of $\{(s_i,a_i)\}_{i=1}^k$ is fed into a common MLP, followed by a Mean-Reduce operation on the outputted features across $k$.
For OPR, we perform PI transformation (similar as done for state-action pairs) inner-layer weights and biases $\{(w_i, b_i)\}_{i=1}^h$ for each layer first, 
where $h$ denotes the number of nodes in this layer and $w_i, b_i$ is the income weight vector from previous layer and the bias of $i$th node;
then we concatenate encoding of layers and obtain the OPR.
An illustration for the encoding of OPR is in Fig.\ref{figure:opr_encoder} of Appendix.

To train the policy embedding $\chi_{\pi}$
obtained above,
the most straightforward way is to backpropagate the value approximation loss of PeVFA in an \textit{End-to-End (E2E)} fashion as illustrated on the lower-right of Fig.\ref{figure:pr_framework}.
In addition, we provide two self-supervised training losses for both OPR and SPR, as illustrated on the upper-right of Fig.\ref{figure:pr_framework}.
The first one is an \textit{auxiliary loss (AUX) of policy recovery} \cite{GroverAGBE18MAPR}, i.e., to recover the action distributions of $\pi$ from $\chi_{\pi}$ under different states.
To be specific, an auxiliary policy decoder $\bar{\pi}(\cdot|s,\chi_{\pi})$ is trained through
behavioral cloning,
formally to minimize cross-entropy objective
$\mathcal{L}_{\text{AUX}} =- \mathbb{E}_{(s,a)} \left[ \log \bar{\pi}(a|s,\chi_{\pi}) \right]$.
For the second one,
we propose to train $\chi_{\pi}$ by \textit{Contrastive Learning (CL)} \cite{Srinivas20CURL,Schwarzer20MPR}:
policies are encouraged to be close to similar ones (i.e., positive samples $\pi^{+}$),
and to be apart from different ones (i.e., negative samples $\pi^{-}$) in representation space.
For each policy, we construct positive samples by data augmentation on policy data, depending on SPR or OPR considered;
and different policies along the policy improvement path naturally provide negative samples for each other.
Finally, the embedding $\chi_{\pi}$ is optimized through minimizing the InfoNCE loss \cite{Oord18CPC} below: 
$\mathcal{L}_{\text{CL}} = - \mathbb{E}_{(\pi^{+},\{\pi^{-}\})}
\left[ 
\log \frac{\exp(\chi_{\pi}^{T} W \chi_{\pi^{+}})}{\exp(\chi_{\pi}^{T} W \chi_{\pi^{+}}) + \sum_{\pi^{-}} \exp(\chi_{\pi}^{T} W \chi_{\pi^{-}})} 
\right]$.

Now, the training of policy representation in Algorithm \ref{algorithm:RL_PeVFA} can be performed with any combination of data sources and training losses provided above.
The pseudo-code of the overall policy representation training framework is shown in Algorithm \ref{algorithm:overall_view_pr} in Appendix \ref{app:overall_view_pr}.
In addition, 
complete implementation details and more discussions (e.g., on the scalability, representation criteria) are provided in Appendix \ref{app:PR}.

\section{Experiments}
\label{sec:experiments}

\begin{table*}[t]
  \caption{Average returns ($\pm$ half a std) over 10 trials for algorithms. 
  Each result is the maximum evaluation along the training process.
  Top two values for each environment are bold. 
  }
  \label{table:overall_evaluations}
  \centering  
  \scalebox{0.68}{
  \begin{tabular}{c|ccc|ccc|ccc}
    \toprule 
    & \multicolumn{3}{c}{Benchmarks} & \multicolumn{3}{c}{Origin Policy Representation (Ours)} & \multicolumn{3}{c}{Surface Policy Representation (Ours)} \\
    \cmidrule(r){2-4} \cmidrule(r){5-7} \cmidrule(r){8-10}
    Environments & PPO & Ran PR & RPR & E2E & CL & AUX & E2E & CL & AUX\\
    \midrule
    HalfCheetah-v1 &
    2621 & 2470 & 2325 \small{$\pm$ 399.27} & 3171 \small{$\pm$ 427.63} & \textbf{3725 \small{$\pm$ 348.55}} & 3175 \small{$\pm$ 517.52} & 2774 \small{$\pm$ 233.39} & \textbf{3349 \small{$\pm$ 341.42}} & 3216 \small{$\pm$ 506.39} \\
    Hopper-v1 &
    1639 & 1226 & 1097 \small{$\pm$ 213.47} & 2085 \small{$\pm$ 310.91} & 2351 \small{$\pm$ 231.11} & 2214 \small{$\pm$ 360.78} & 2227 \small{$\pm$ 297.35} & \textbf{2392 \small{$\pm$ 263.93}} & \textbf{2577 \small{$\pm$ 217.73}} \\
    Walker2d-v1 &
    1505 & 1269 & 317 \small{$\pm$ 152.68} & 1856 \small{$\pm$ 305.51} & 2038 \small{$\pm$ 315.51} & \textbf{2044 \small{$\pm$ 316.32}} & 1930.57  \small{$\pm$ 456.02} & \textbf{2203 \small{$\pm$ 381.95}} & 1980 \small{$\pm$ 325.54} \\
    Ant-v1 &
    2835 & 2742 & 2143 \small{$\pm$ 406.64} & 3581 \small{$\pm$ 185.43} & \textbf{4019 \small{$\pm$ 162.47}} & \textbf{3784 \small{$\pm$ 268.99}} & 3173 \small{$\pm$ 184.75} & 3632 \small{$\pm$ 134.27} & 3397 \small{$\pm$ 200.03} \\
    InvDouPend-v1 & 
    9344 & 9355 & 8856 \small{$\pm$ 551.90} & \textbf{9357 \small{$\pm$ 0.29}} & 9355 \small{$\pm$ 0.64} & 9355 \small{$\pm$ 0.68} & 9355 \small{$\pm$ 0.89} & \textbf{9356 \small{$\pm$ 0.96}} & 9355 \small{$\pm$ 1.42} \\
    LunarLander-v2 & 
    219 & 226 & -22 \small{$\pm$ 35.08} & \textbf{238 \small{$\pm$ 3.37}} & \textbf{239 \small{$\pm$ 3.70}} & 234 \small{$\pm$ 3.47} & 236 \small{$\pm$ 3.13} & 234 \small{$\pm$ 3.13} & 235 \small{$\pm$ 5.70} \\
    \bottomrule
  \end{tabular}
  }
\vspace{-0.4cm}
\end{table*}

In this section, we conduct experimental study with focus on the following questions:
\begin{question}
\vspace{-0.1cm}
\label{question:pevfa}
    Can value generalization offered by PeVFA improve a deep RL algorithm in practice?
\vspace{-0.25cm}
\end{question}
\begin{question}
\label{question:policy_repre}
    Can our proposed framework to learn effective policy representation?
\vspace{-0.1cm}
\end{question}
Our experiments are conducted in several OpenAI Gym continuous control tasks (1 from Box2D and 5 from MuJoCo) \cite{Brockman2016Gym,Todorov2012MuJoCo}.
All experimental details and curves can be found in Appendix \ref{app:demon_details}.


\textbf{Algorithm Implementation.}
We use PPO \cite{Schulman2017PPO} as the basic algorithm and propose a representative implementation of Algorithm \ref{algorithm:RL_PeVFA}, called \textbf{PPO-PeVFA}.
PPO is a policy optimization algorithm that follows the paradigm of GPI (Fig.\ref{figure:GPI}, left).
A value network $V_{\phi}(s)$ with parameters $\phi$ (i.e., conventional VFA) is trained to approximate the value of current policy $\pi$;
while $\pi$ is optimized with respect to a surrogate objective \cite{Schulman2015TRPO} using advantages calculated by $V_{\phi}$ and GAE \cite{Schulman2016GAE}.
Compared with original PPO,
PPO-PeVFA makes use of a PeVFA network $\mathbb{V}_{\theta}(s, \chi_{\pi})$ with parameters $\theta$ rather than the conventional VFA $V_{\phi}(s)$,
and follows the training scheme as in Algorithm \ref{algorithm:RL_PeVFA}.
Note PPO-PeVFA uses the same policy optimization method as original PPO and only differs at value approximation.

\textbf{Baselines and Variants.}
Except for original PPO as a default baseline,
we use another two baselines:
1) PPO-PeVFA with randomly generated policy representation for each policy, denoted by \textbf{Ran PR}; 
2) PPO-PeVFA with Raw Policy Representation (\textbf{RPR}), i.e., use the vector of all parameters of policy network as representation as adopted in PVFs \cite{Faccio20PVFs}.
Our variants of PPO-PeVFA differ at the policy representation used.
In total, we consider 6 variants denoted by the combination of the policy data choice (i.e., \textbf{OPR}, \textbf{SPR}) and representation principle choice (i.e., \textbf{E2E}, \textbf{CL}, \textbf{AUX}).

\textbf{Experimental Details.}
For all baselines and variants, we use a normal-scale policy network with 2 layers and 64 units for each layer, 
resulting in over 3k to 10k (e.g., Ant-v1) policy parameters depending on the environments.
We do not assume the access to pre-collected policies.
Thus the size of policy set increases from 1 (i.e., the initial policy) during the learning process,
to about 1k to 2 for a single trial.
The dimensionality of all kinds of policy representation expect for RPR is set to 64.
The buffer $D$ maintains recent 200k steps of interaction experience and the policy data of corresponding policy.
The number of interaction step of each trial is 1M for InvDouPend-v1 and LunarLander-v2, 4M for Ant-v1 and 2M for the others.

\textbf{Results.}
The overall experimental results are summarized in Tab.\ref{table:overall_evaluations}.
In Fig.\ref{figure:evaluation_merged}, we provide aggregated results across all environments expect for InvDouPend-v1 and LunarLander-v2 (since most algorithms achieve near-optimal results),
where all returns are normalized by the results of PPO in Tab.\ref{table:overall_evaluations}.
Full learning curves are omitted and can be found in Appendix \ref{app:complete_learning_curves}.

\begin{figure}[t]
\centering
\hspace{-0.1cm}
\subfigure[]{
\includegraphics[width=0.225\textwidth]{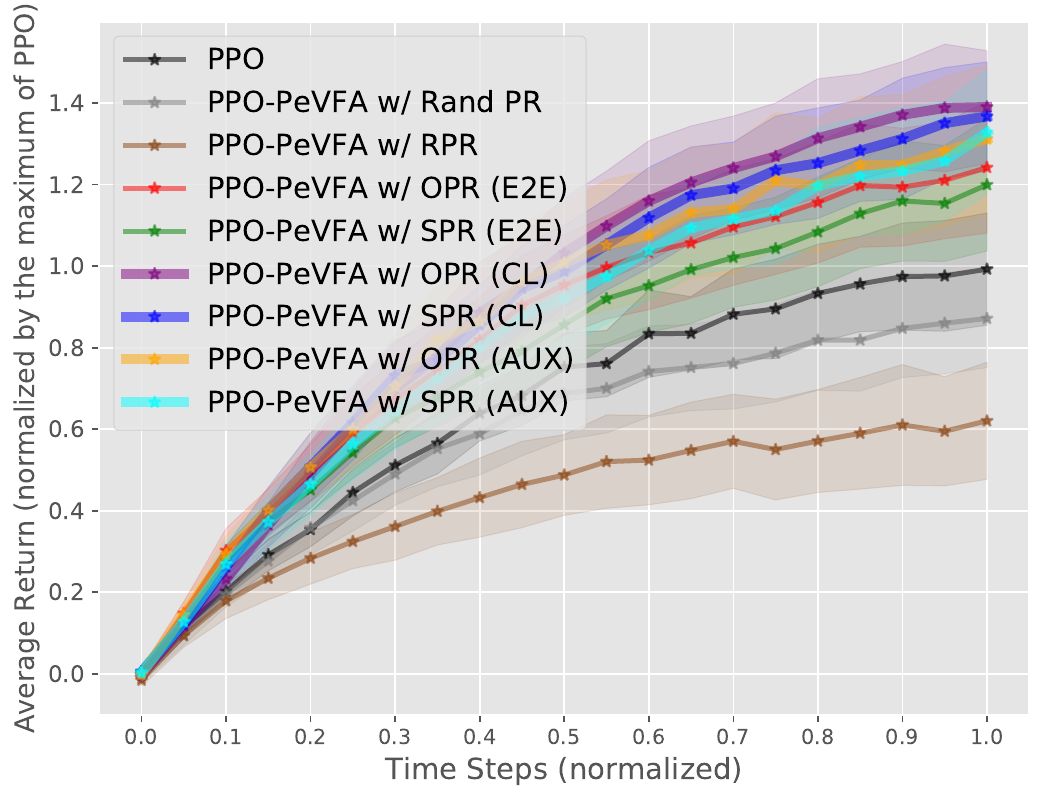}
\label{figure:evaluation_merged}
}
\hspace{-0.1cm}
\subfigure[]{
\includegraphics[width=0.22\textwidth]{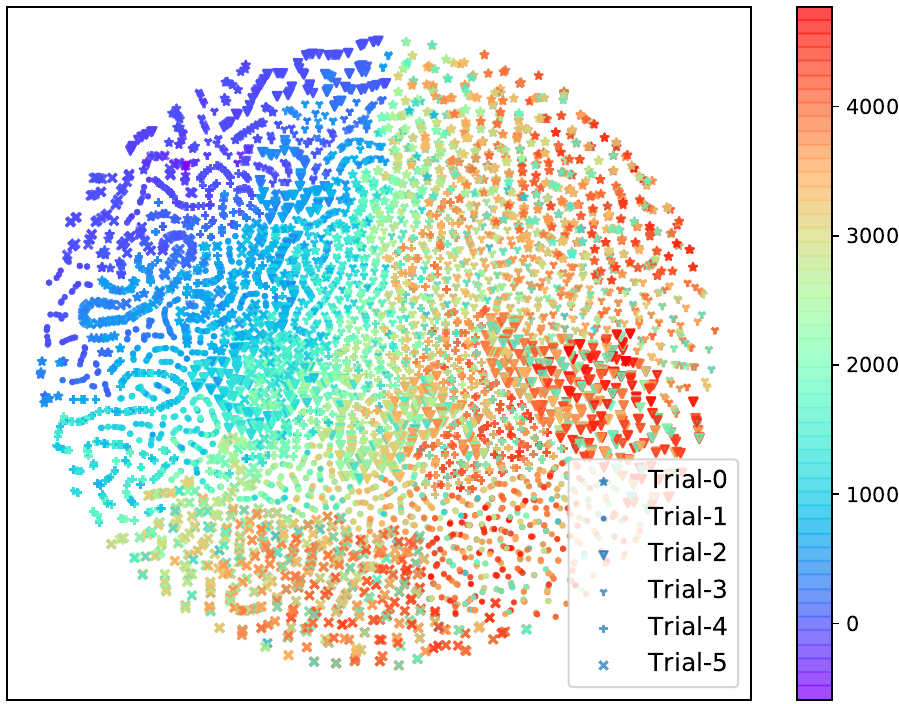}
\label{figure:visual_demo}
}
\vspace{-0.2cm}
\caption{
(a) Normalized averaged returns aggregated over 4 MuJoCo tasks. 
(b)
A t-SNE visualization for representations learned by PPO-PeVFA OPR (E2E) in Ant-v1.
In total, 6k policies from 5 trials (denoted by different markers) are plotted,
which are colored according to average return.
}
\vspace{-0.5cm}
\end{figure}

\textbf{To Question \ref{question:pevfa}.}
From Tab.\ref{table:overall_evaluations},
we can find that both PPO-PeVFA w/ OPR (E2E) and PPO-PeVFA w/ SPR (E2E) outperforms PPO in all 6 tasks,
and achieve over 20\% improvement in Fig.\ref{figure:evaluation_merged}.
This demonstrates the effectiveness of PeVFA.
Moreover, the improvement is further enlarged (to about 40\%) by CL and AUX for both OPR and SPR.
This indicates that the superiority of PeVFA can be further utilized with better policy representation
that offers a more suitable space for value generalization.

\textbf{To Question \ref{question:policy_repre}.}
In Tab.\ref{table:overall_evaluations}, 
consistent degeneration is observed for PPO-PeVFA w/ Ran PR
due to the negative effects on generalization caused by the randomness and disorder of policy representation.
This phenomenon seems to be more severe for PPO-PeVFA w/ RPR due to the complexity of high-dimensional parameter space.
In contrast, the improvement achieved by our proposed PPO-PeVFA variants
shows that 
effective policy representation can be learned from policy parameters (OPR) and state-action pairs (SPR) though value approximation loss (i.e., E2E)
and further improved when additional self-supervised representation learning is involved as CL and AUX.
Overall, OPR slightly outperforms SPR as CL does over AUX.
We hypothesize that it is due to the stochasticity of state-action pairs which serve as inputs of SPR and training samples for AUX.
This reveals the space for future improvement.
In addition, we visualize the learned representation
in Fig.\ref{figure:visual_demo}.
We can observe that policies from different trials are locally continuous and show different modes of embedding trajectories due to random initialization and optimization;
while a global evolvement among trials emerges with respect to policy performance.

\section{Conclusion and Future Work}
\label{sec:discussion}
In this paper, we propose Policy-extended Value Function Approximator (PeVFA) and study value generalization among policies.
We propose a new form of GPI based on PeVFA 
which is potentially preferable to conventional VFA for value approximation.
Moreover, we propose a general framework to learn low-dimensional embedding of RL policy.
Our experiments demonstrate the effectiveness of the generalization characteristic of PeVFA and our proposed policy representation learning methods.

Our work opens up some research directions on value generalization among policies and policy representation.
A possible future study on the theory of value generalization among policies is to consider the interplay between approximation error, policy improvement and local generalization 
during GPI with PeVFA.
Besides, analysis on influence factors of value generalization among policies (e.g., policy representation, architecture of PeVFA) and other utilization of PeVFA are expected.
For better policy representation,
inspirations on OPR may be got from studies on Manifold Hypothesis of neural network;
the selection of more informative state-action pairs for SPR is also worth research. 



\section*{Acknowledgments}
The work is supported by the National Natural Science Foundation of China (Grant Nos.: U1836214),
the New Generation of Artificial Intelligence Science and Technology Major Project of Tianjin under grant: 19ZXZNGX00010.

\small
\bibliography{AAAI2022_PeVFA}

\begin{thebibliography}{70}
\providecommand{\natexlab}[1]{#1}

\bibitem[{Andrychowicz et~al.(2017)Andrychowicz, Crow, Ray, Schneider, Fong,
  Welinder, McGrew, Tobin, Abbeel, and Zaremba}]{AndrychowiczCRS17HER}
Andrychowicz, M.; Crow, D.; Ray, A.; Schneider, J.; Fong, R.; Welinder, P.;
  McGrew, B.; Tobin, J.; Abbeel, P.; and Zaremba, W. 2017.
\newblock Hindsight Experience Replay.
\newblock In \emph{NeurIPS}, 5048--5058.

\bibitem[{Andrychowicz et~al.(2020)Andrychowicz, Raichuk, Stanczyk, Orsini,
  Girgin, Marinier, Hussenot, Geist, Pietquin, Michalski, Gelly, and
  Bachem}]{Andrychowicz20OPMatters}
Andrychowicz, M.; Raichuk, A.; Stanczyk, P.; Orsini, M.; Girgin, S.; Marinier,
  R.; Hussenot, L.; Geist, M.; Pietquin, O.; Michalski, M.; Gelly, S.; and
  Bachem, O. 2020.
\newblock What Matters In On-Policy Reinforcement Learning? {A} Large-Scale
  Empirical Study.
\newblock \emph{CoRR}, abs/2006.05990.

\bibitem[{Arnekvist, Kragic, and Stork(2019)}]{ArnekvistKS19VPE}
Arnekvist, I.; Kragic, D.; and Stork, J.~A. 2019.
\newblock {VPE:} Variational Policy Embedding for Transfer Reinforcement
  Learning.
\newblock In \emph{ICRA}, 36--42.

\bibitem[{Bellemare, Dabney, and Munos(2017)}]{BellemareDM17C51}
Bellemare, M.~G.; Dabney, W.; and Munos, R. 2017.
\newblock A Distributional Perspective on Reinforcement Learning.
\newblock In \emph{ICML}, volume~70, 449--458.

\bibitem[{Bertsekas and Tsitsiklis(1996)}]{BertsekasT96NDP}
Bertsekas, D.~P.; and Tsitsiklis, J.~N. 1996.
\newblock \emph{Neuro-dynamic programming}, volume~3 of \emph{Optimization and
  neural computation series}.
\newblock Athena Scientific.

\bibitem[{Brockman et~al.(2016)Brockman, Cheung, Pettersson, Schneider,
  Schulman, Tang, and Zaremba}]{Brockman2016Gym}
Brockman, G.; Cheung, V.; Pettersson, L.; Schneider, J.; Schulman, J.; Tang,
  J.; and Zaremba, W. 2016.
\newblock OpenAI Gym.
\newblock \emph{CoRR}, abs/1606.01540.

\bibitem[{Chen et~al.(2020)Chen, Kornblith, Norouzi, and Hinton}]{Chen20SimCLR}
Chen, T.; Kornblith, S.; Norouzi, M.; and Hinton, G.~E. 2020.
\newblock A Simple Framework for Contrastive Learning of Visual
  Representations.
\newblock \emph{CoRR}, abs/2002.05709.

\bibitem[{Cobbe et~al.(2021)Cobbe, Hilton, Klimov, and
  Schulman}]{CobbeHKS21PPG}
Cobbe, K.; Hilton, J.; Klimov, O.; and Schulman, J. 2021.
\newblock Phasic Policy Gradient.
\newblock In \emph{ICML}, volume 139 of \emph{Proceedings of Machine Learning
  Research}, 2020--2027.

\bibitem[{Degris, White, and Sutton(2012)}]{Degris2012OffPAC}
Degris, T.; White, M.; and Sutton, R.~S. 2012.
\newblock Off-Policy Actor-Critic.
\newblock \emph{CoRR}, abs/1205.4839.

\bibitem[{D'Oro and Jaskowski(2020)}]{DOroJ20UsefulCritic}
D'Oro, P.; and Jaskowski, W. 2020.
\newblock How to Learn a Useful Critic? Model-based Action-Gradient-Estimator
  Policy Optimization.
\newblock In \emph{NIPS}.

\bibitem[{Engstrom et~al.(2020)Engstrom, Ilyas, Santurkar, Tsipras, Janoos,
  Rudolph, and Madry}]{EngstromISTJRM20ImpMatters}
Engstrom, L.; Ilyas, A.; Santurkar, S.; Tsipras, D.; Janoos, F.; Rudolph, L.;
  and Madry, A. 2020.
\newblock Implementation Matters in Deep {RL:} {A} Case Study on {PPO} and
  {TRPO}.
\newblock In \emph{ICLR}.

\bibitem[{Eysenbach et~al.(2020)Eysenbach, Geng, Levine, and
  Salakhutdinov}]{EysenbachGLS20Rewriting}
Eysenbach, B.; Geng, X.; Levine, S.; and Salakhutdinov, R.~R. 2020.
\newblock Rewriting History with Inverse {RL:} Hindsight Inference for Policy
  Improvement.
\newblock In \emph{NeurIPS}.

\bibitem[{Faccio and Schmidhuber(2020)}]{Faccio20PVFs}
Faccio, F.; and Schmidhuber, J. 2020.
\newblock Parameter-based Value Functions.
\newblock \emph{CoRR}, abs/2006.09226.

\bibitem[{Fu et~al.(2020)Fu, Tang, Hao, Chen, Feng, Li, and Liu}]{Fu20CMRL}
Fu, H.; Tang, H.; Hao, J.; Chen, C.; Feng, X.; Li, D.; and Liu, W. 2020.
\newblock Towards Effective Context for Meta-Reinforcement Learning: an
  Approach based on Contrastive Learning.
\newblock \emph{CoRR}, abs/2009.13891.

\bibitem[{Fujimoto, v.~Hoof, and Meger(2018)}]{Fujimoto2018TD3}
Fujimoto, S.; v.~Hoof, H.; and Meger, D. 2018.
\newblock Addressing Function Approximation Error in Actor-Critic Methods.
\newblock In \emph{ICML}.

\bibitem[{Gaier, Asteroth, and Mouret(2020)}]{GaierAM20DiscoverRep}
Gaier, A.; Asteroth, A.; and Mouret, J. 2020.
\newblock Discovering representations for black-box optimization.
\newblock In \emph{GECCO}, 103--111.

\bibitem[{Grosnit et~al.(2021)Grosnit, Tutunov, Maraval, Griffiths,
  Cowen{-}Rivers, Yang, Zhu, Lyu, Chen, Wang, Peters, and
  Bou{-}Ammar}]{Grosnit2021HighD}
Grosnit, A.; Tutunov, R.; Maraval, A.; Griffiths, R.; Cowen{-}Rivers, A.; Yang,
  L.; Zhu, L.; Lyu, W.; Chen, Z.; Wang, J.; Peters, J.; and Bou{-}Ammar, H.
  2021.
\newblock High-Dimensional Bayesian Optimisation with Variational Autoencoders
  and Deep Metric Learning.
\newblock \emph{CoRR}, abs/2106.03609.

\bibitem[{Grover et~al.(2018)Grover, Al{-}Shedivat, Gupta, Burda, and
  Edwards}]{GroverAGBE18MAPR}
Grover, A.; Al{-}Shedivat, M.; Gupta, J.~K.; Burda, Y.; and Edwards, H. 2018.
\newblock Learning Policy Representations in Multiagent Systems.
\newblock In \emph{ICML}, volume~80, 1797--1806.

\bibitem[{Haarnoja et~al.(2018)Haarnoja, Zhou, Abbeel, and
  Levine}]{HaarnojaZAL18SAC}
Haarnoja, T.; Zhou, A.; Abbeel, P.; and Levine, S. 2018.
\newblock Soft Actor-Critic: Off-Policy Maximum Entropy Deep Reinforcement
  Learning with a Stochastic Actor.
\newblock In \emph{ICML}, 1856--1865.

\bibitem[{Hafner et~al.(2020)Hafner, Lillicrap, Ba, and
  Norouzi}]{HafnerLB020Dream}
Hafner, D.; Lillicrap, T.~P.; Ba, J.; and Norouzi, M. 2020.
\newblock Dream to Control: Learning Behaviors by Latent Imagination.
\newblock In \emph{ICLR}.

\bibitem[{Harb et~al.(2020)Harb, Schaul, Precup, and Bacon}]{Harb20PENs}
Harb, J.; Schaul, T.; Precup, D.; and Bacon, P. 2020.
\newblock Policy Evaluation Networks.
\newblock \emph{CoRR}, abs/2002.11833.

\bibitem[{Hausman et~al.(2018)Hausman, Springenberg, Wang, Heess, and
  Riedmiller}]{HausmanS0HR18PESkill}
Hausman, K.; Springenberg, J.~T.; Wang, Z.; Heess, N.; and Riedmiller, M.~A.
  2018.
\newblock Learning an Embedding Space for Transferable Robot Skills.
\newblock In \emph{ICLR}.

\bibitem[{He and Boyd{-}Graber(2016)}]{HeB16Opponent}
He, H.; and Boyd{-}Graber, J.~L. 2016.
\newblock Opponent Modeling in Deep Reinforcement Learning.
\newblock In \emph{ICML}, volume~48, 1804--1813.

\bibitem[{He et~al.(2020)He, Fan, Wu, Xie, and Girshick}]{He0WXG20MoCo}
He, K.; Fan, H.; Wu, Y.; Xie, S.; and Girshick, R.~B. 2020.
\newblock Momentum Contrast for Unsupervised Visual Representation Learning.
\newblock In \emph{CVPR}, 9726--9735.

\bibitem[{Igl et~al.(2020)Igl, Farquhar, Luketina, Boehmer, and
  Whiteson}]{Igl20Impact}
Igl, M.; Farquhar, G.; Luketina, J.; Boehmer, W.; and Whiteson, S. 2020.
\newblock The Impact of Non-stationarity on Generalisation in Deep
  Reinforcement Learning.
\newblock \emph{CoRR}, abs/2006.05826.

\bibitem[{Keskar et~al.(2017)Keskar, Mudigere, Nocedal, Smelyanskiy, and
  Tang}]{KeskarMNST17LargeBatch}
Keskar, N.~S.; Mudigere, D.; Nocedal, J.; Smelyanskiy, M.; and Tang, P. T.~P.
  2017.
\newblock On Large-Batch Training for Deep Learning: Generalization Gap and
  Sharp Minima.
\newblock In \emph{ICLR}.

\bibitem[{Kingma and Welling(2014)}]{Kingma2013AEVB}
Kingma, D.~P.; and Welling, M. 2014.
\newblock Auto-Encoding Variational Bayes.
\newblock In \emph{ICLR}.

\bibitem[{Kostrikov, Yarats, and Fergus(2020)}]{Kostrikov20DrQ}
Kostrikov, I.; Yarats, D.; and Fergus, R. 2020.
\newblock Image Augmentation Is All You Need: Regularizing Deep Reinforcement
  Learning from Pixels.
\newblock \emph{CoRR}, abs/2004.13649.

\bibitem[{Kuznetsov et~al.(2020)Kuznetsov, Shvechikov, Grishin, and
  Vetrov}]{KuznetsovSGV20TQC}
Kuznetsov, A.; Shvechikov, P.; Grishin, A.; and Vetrov, D.~P. 2020.
\newblock Controlling Overestimation Bias with Truncated Mixture of Continuous
  Distributional Quantile Critics.
\newblock In \emph{ICML}, volume 119, 5556--5566.

\bibitem[{Lagoudakis and Parr(2003)}]{LagoudakisP03LSPI}
Lagoudakis, M.~G.; and Parr, R. 2003.
\newblock Least-Squares Policy Iteration.
\newblock \emph{J. Mach. Learn. Res.}, 4: 1107--1149.

\bibitem[{Lan et~al.(2020)Lan, Pan, Fyshe, and White}]{LanPFW20MMQ}
Lan, Q.; Pan, Y.; Fyshe, A.; and White, M. 2020.
\newblock Maxmin Q-learning: Controlling the Estimation Bias of Q-learning.
\newblock In \emph{ICLR}.

\bibitem[{Laskin et~al.(2020)Laskin, Lee, Stooke, Pinto, Abbeel, and
  Srinivas}]{Laskin20RAD}
Laskin, M.; Lee, K.; Stooke, A.; Pinto, L.; Abbeel, P.; and Srinivas, A. 2020.
\newblock Reinforcement Learning with Augmented Data.
\newblock \emph{CoRR}, abs/2004.14990.

\bibitem[{Lee et~al.(2020)Lee, Seo, Lee, Lee, and Shin}]{Lee20CADM}
Lee, K.; Seo, Y.; Lee, S.; Lee, H.; and Shin, J. 2020.
\newblock Context-aware Dynamics Model for Generalization in Model-Based
  Reinforcement Learning.
\newblock \emph{CoRR}, abs/2005.06800.

\bibitem[{Levy et~al.(2019)Levy, Konidaris, Jr., and Saenko}]{LevyKPS19HAC}
Levy, A.; Konidaris, G.~D.; Jr., R.~P.; and Saenko, K. 2019.
\newblock Learning Multi-Level Hierarchies with Hindsight.
\newblock In \emph{ICLR}.

\bibitem[{Lillicrap et~al.(2015)Lillicrap, Hunt, Pritzel, Heess, Erez, Tassa,
  Silver, and Wierstra}]{Lillicrap2015DDPG}
Lillicrap, T.~P.; Hunt, J.~J.; Pritzel, A.; Heess, N.; Erez, T.; Tassa, Y.;
  Silver, D.; and Wierstra, D. 2015.
\newblock Continuous control with deep reinforcement learning.
\newblock In \emph{ICLR}.

\bibitem[{Liu, Simonyan, and Yang(2019)}]{LiuSY19DARTS}
Liu, H.; Simonyan, K.; and Yang, Y. 2019.
\newblock {DARTS:} Differentiable Architecture Search.
\newblock In \emph{ICLR}.

\bibitem[{Luo et~al.(2018)Luo, Tian, Qin, Chen, and Liu}]{LuoTQCL18NAO}
Luo, R.; Tian, F.; Qin, T.; Chen, E.; and Liu, T. 2018.
\newblock Neural Architecture Optimization.
\newblock In \emph{NeurIPS}, 7827--7838.

\bibitem[{Maaten and Hinton(2008)}]{Maaten2008TSNE}
Maaten, L. V.~D.; and Hinton, G.~E. 2008.
\newblock Visualizing Data using t-SNE.
\newblock \emph{Journal of Machine Learning Research}, 9: 2579--2605.

\bibitem[{Mnih et~al.(2016)Mnih, Badia, Mirza, Graves, Lillicrap, Harley,
  Silver, and Kavukcuoglu}]{Mnih2016AC}
Mnih, V.; Badia, A.~P.; Mirza, M.; Graves, A.; Lillicrap, T.~P.; Harley, T.;
  Silver, D.; and Kavukcuoglu, K. 2016.
\newblock Asynchronous Methods for Deep Reinforcement Learning.
\newblock In \emph{ICML}.

\bibitem[{Mnih et~al.(2015)Mnih, Kavukcuoglu, Silver, Rusu, Veness, Bellemare,
  Graves, Riedmiller, Fidjeland, Ostrovski, Petersen, Beattie, Sadik,
  Antonoglou, King, Kumaran, Wierstra, Legg, and Hassabis}]{Mnih2015DQN}
Mnih, V.; Kavukcuoglu, K.; Silver, D.; Rusu, A.~A.; Veness, J.; Bellemare,
  M.~G.; Graves, A.; Riedmiller, M.~A.; Fidjeland, A.; Ostrovski, G.; Petersen,
  S.; Beattie, C.; Sadik, A.; Antonoglou, I.; King, H.; Kumaran, D.; Wierstra,
  D.; Legg, S.; and Hassabis, D. 2015.
\newblock Human-level control through deep reinforcement learning.
\newblock \emph{Nature}, 518(7540): 529--533.

\bibitem[{Nachum et~al.(2018)Nachum, Gu, Lee, and Levine}]{Nachum19HIRO}
Nachum, O.; Gu, S.; Lee, H.; and Levine, S. 2018.
\newblock Data-Efficient Hierarchical Reinforcement Learning.
\newblock \emph{CoRR}, abs/1805.08296.

\bibitem[{Nachum et~al.(2019)Nachum, Gu, Lee, and Levine}]{NachumGLL19NearOpt}
Nachum, O.; Gu, S.; Lee, H.; and Levine, S. 2019.
\newblock Near-Optimal Representation Learning for Hierarchical Reinforcement
  Learning.
\newblock In \emph{ICLR}.

\bibitem[{Nesterov and Polyak(2006)}]{NesterovP06Cubic}
Nesterov, Y.~E.; and Polyak, B.~T. 2006.
\newblock Cubic regularization of Newton method and its global performance.
\newblock \emph{Math. Program.}, 108(1): 177--205.

\bibitem[{Notin, Hern{\'{a}}ndez{-}Lobato, and Gal(2021)}]{Notin2021BlackBox}
Notin, P.; Hern{\'{a}}ndez{-}Lobato, J.; and Gal, Y. 2021.
\newblock Improving black-box optimization in {VAE} latent space using decoder
  uncertainty.
\newblock \emph{CoRR}, abs/2107.00096.

\bibitem[{Novak et~al.(2018)Novak, Bahri, Abolafia, Pennington, and
  Sohl{-}Dickstein}]{NovakBAPS18SenGen}
Novak, R.; Bahri, Y.; Abolafia, D.~A.; Pennington, J.; and Sohl{-}Dickstein, J.
  2018.
\newblock Sensitivity and Generalization in Neural Networks: an Empirical
  Study.
\newblock In \emph{ICLR}.

\bibitem[{Oord, Li, and Vinyals(2018)}]{Oord18CPC}
Oord, A.; Li, Y.; and Vinyals, O. 2018.
\newblock Representation Learning with Contrastive Predictive Coding.
\newblock \emph{CoRR}, abs/1807.03748.

\bibitem[{Raileanu et~al.(2020)Raileanu, Goldstein, Szlam, and
  Fergus}]{Raileanu20PDVF}
Raileanu, R.; Goldstein, M.; Szlam, A.; and Fergus, R. 2020.
\newblock Fast Adaptation via Policy-Dynamics Value Functions.
\newblock \emph{CoRR}, abs/2007.02879.

\bibitem[{Rakelly et~al.(2019)Rakelly, Zhou, Finn, Levine, and
  Quillen}]{RakellyZFLQ19PEARL}
Rakelly, K.; Zhou, A.; Finn, C.; Levine, S.; and Quillen, D. 2019.
\newblock Efficient Off-Policy Meta-Reinforcement Learning via Probabilistic
  Context Variables.
\newblock In \emph{ICML}, volume~97, 5331--5340.

\bibitem[{Rakicevic, Cully, and Kormushev(2021)}]{Rakicevic21PMS}
Rakicevic, N.; Cully, A.; and Kormushev, P. 2021.
\newblock Policy Manifold Search: Exploring the Manifold Hypothesis for
  Diversity-based Neuroevolution.
\newblock \emph{CoRR}, abs/2104.13424.

\bibitem[{Schaul et~al.(2015)Schaul, Horgan, Gregor, and
  Silver}]{SchaulHGS15UVFA}
Schaul, T.; Horgan, D.; Gregor, K.; and Silver, D. 2015.
\newblock Universal Value Function Approximators.
\newblock In \emph{ICML}, volume~37, 1312--1320.

\bibitem[{Scherrer et~al.(2015)Scherrer, Ghavamzadeh, Gabillon, Lesner, and
  Geist}]{ScherrerGGLG15AMPI}
Scherrer, B.; Ghavamzadeh, M.; Gabillon, V.; Lesner, B.; and Geist, M. 2015.
\newblock Approximate modified policy iteration and its application to the game
  of Tetris.
\newblock \emph{J. Mach. Learn. Res.}, 16: 1629--1676.

\bibitem[{Schreck, Coley, and Bishop(2019)}]{schreck2019retrosyn}
Schreck, J.~S.; Coley, C.~W.; and Bishop, K.~J. 2019.
\newblock Learning retrosynthetic planning through simulated experience.
\newblock \emph{ACS central science}, 5(6): 970--981.

\bibitem[{Schulman et~al.(2015)Schulman, Levine, Abbeel, Jordan, and
  Moritz}]{Schulman2015TRPO}
Schulman, J.; Levine, S.; Abbeel, P.; Jordan, M.~I.; and Moritz, P. 2015.
\newblock Trust Region Policy Optimization.
\newblock In \emph{ICML}, 1889--1897.

\bibitem[{Schulman et~al.(2016)Schulman, Moritz, Levine, Jordan, and
  Abbeel}]{Schulman2016GAE}
Schulman, J.; Moritz, P.; Levine, S.; Jordan, M.~I.; and Abbeel, P. 2016.
\newblock High-Dimensional Continuous Control Using Generalized Advantage
  Estimation.
\newblock In \emph{ICLR}.

\bibitem[{Schulman et~al.(2017)Schulman, Wolski, Dhariwal, Radford, and
  Klimov}]{Schulman2017PPO}
Schulman, J.; Wolski, F.; Dhariwal, P.; Radford, A.; and Klimov, O. 2017.
\newblock Proximal Policy Optimization Algorithms.
\newblock \emph{CoRR}, abs/1707.06347.

\bibitem[{Schwarzer et~al.(2020)Schwarzer, Anand, Goel, Hjelm, Courville, and
  Bachman}]{Schwarzer20MPR}
Schwarzer, M.; Anand, A.; Goel, R.; Hjelm, R.~D.; Courville, A.~C.; and
  Bachman, P. 2020.
\newblock Data-Efficient Reinforcement Learning with Momentum Predictive
  Representations.
\newblock \emph{CoRR}, abs/2007.05929.

\bibitem[{Silver et~al.(2016)Silver, Huang, Maddison, Guez, Sifre, van~den
  Driessche, Schrittwieser, Antonoglou, Panneershelvam, Lanctot, Dieleman,
  Grewe, Nham, Kalchbrenner, Sutskever, Lillicrap, Leach, Kavukcuoglu, Graepel,
  and Hassabis}]{SilverHMGSDSAPL16AlphaGO}
Silver, D.; Huang, A.; Maddison, C.~J.; Guez, A.; Sifre, L.; van~den Driessche,
  G.; Schrittwieser, J.; Antonoglou, I.; Panneershelvam, V.; Lanctot, M.;
  Dieleman, S.; Grewe, D.; Nham, J.; Kalchbrenner, N.; Sutskever, I.;
  Lillicrap, T.~P.; Leach, M.; Kavukcuoglu, K.; Graepel, T.; and Hassabis, D.
  2016.
\newblock Mastering the game of Go with deep neural networks and tree search.
\newblock \emph{Nature}, 529(7587): 484--489.

\bibitem[{Silver et~al.(2014)Silver, Lever, Heess, Degris, Wierstra, and
  Riedmiller}]{Silver2014DPG}
Silver, D.; Lever, G.; Heess, N.; Degris, T.; Wierstra, D.; and Riedmiller,
  M.~A. 2014.
\newblock Deterministic Policy Gradient Algorithms.
\newblock In \emph{ICML}, 387--395.

\bibitem[{Srinivas, Laskin, and Abbeel(2020)}]{Srinivas20CURL}
Srinivas, A.; Laskin, M.; and Abbeel, P. 2020.
\newblock {CURL:} Contrastive Unsupervised Representations for Reinforcement
  Learning.
\newblock \emph{CoRR}, abs/2004.04136.

\bibitem[{Sutton and Barto(1998)}]{SuttonB98}
Sutton, R.~S.; and Barto, A.~G. 1998.
\newblock \emph{Reinforcement learning - an introduction}.
\newblock Adaptive computation and machine learning. {MIT} Press.

\bibitem[{Sutton et~al.(2011)Sutton, Modayil, Delp, Degris, Pilarski, White,
  and Precup}]{SuttonMDDPWP11Horde}
Sutton, R.~S.; Modayil, J.; Delp, M.; Degris, T.; Pilarski, P.~M.; White, A.;
  and Precup, D. 2011.
\newblock Horde: a scalable real-time architecture for learning knowledge from
  unsupervised sensorimotor interaction.
\newblock In \emph{AAMAS}, 761--768.

\bibitem[{Tacchetti et~al.(2019)Tacchetti, Song, Mediano, Zambaldi,
  Kram{\'{a}}r, Rabinowitz, Graepel, Botvinick, and
  Battaglia}]{TacchettiSMZKRG19RMMA}
Tacchetti, A.; Song, H.~F.; Mediano, P. A.~M.; Zambaldi, V.~F.; Kram{\'{a}}r,
  J.; Rabinowitz, N.~C.; Graepel, T.; Botvinick, M.; and Battaglia, P.~W. 2019.
\newblock Relational Forward Models for Multi-Agent Learning.
\newblock In \emph{ICLR}.

\bibitem[{Todorov, Erez, and Tassa(2012)}]{Todorov2012MuJoCo}
Todorov, E.; Erez, T.; and Tassa, Y. 2012.
\newblock MuJoCo: A physics engine for model-based control.
\newblock \emph{2012 IEEE/RSJ International Conference on Intelligent Robots
  and Systems}, 5026--5033.

\bibitem[{Tsai et~al.(2020)Tsai, Wu, Salakhutdinov, and
  Morency}]{Tsai20Demystify}
Tsai, Y.~H.; Wu, Y.; Salakhutdinov, R.; and Morency, L. 2020.
\newblock Demystifying Self-Supervised Learning: An Information-Theoretical
  Framework.
\newblock \emph{CoRR}, abs/2006.05576.

\bibitem[{Unterthiner et~al.(2020)Unterthiner, Keysers, Gelly, Bousquet, and
  Tolstikhin}]{Unterthiner20predict}
Unterthiner, T.; Keysers, D.; Gelly, S.; Bousquet, O.; and Tolstikhin, I.~O.
  2020.
\newblock Predicting Neural Network Accuracy from Weights.
\newblock \emph{CoRR}, abs/2002.11448.

\bibitem[{v.~Hasselt(2010)}]{Hasselt10DoubleQ}
v.~Hasselt, H. 2010.
\newblock Double Q-learning.
\newblock In Lafferty, J.~D.; Williams, C. K.~I.; Shawe{-}Taylor, J.; Zemel,
  R.~S.; and Culotta, A., eds., \emph{NeurIPS}, 2613--2621.

\bibitem[{Vinyals et~al.(2019)Vinyals, Babuschkin, Czarnecki, Mathieu, Dudzik,
  Chung, Choi, Powell, Ewalds, Georgiev, Oh, Horgan, Kroiss, Danihelka, Huang,
  Sifre, Cai, Agapiou, Jaderberg, Vezhnevets, Leblond, Pohlen, Dalibard,
  Budden, Sulsky, Molloy, Paine, Gulcehre, Wang, Pfaff, Wu, Ring, Yogatama,
  Wünsch, McKinney, Smith, Schaul, Lillicrap, Kavukcuoglu, Hassabis, Apps, and
  Silver}]{vinyals2019grandmaster}
Vinyals, O.; Babuschkin, I.; Czarnecki, W.~M.; Mathieu, M.; Dudzik, A.; Chung,
  J.; Choi, D.~H.; Powell, R.; Ewalds, T.; Georgiev, P.; Oh, J.; Horgan, D.;
  Kroiss, M.; Danihelka, I.; Huang, A.; Sifre, L.; Cai, T.; Agapiou, J.~P.;
  Jaderberg, M.; Vezhnevets, A.~S.; Leblond, R.; Pohlen, T.; Dalibard, V.;
  Budden, D.; Sulsky, Y.; Molloy, J.; Paine, T.~L.; Gulcehre, C.; Wang, Z.;
  Pfaff, T.; Wu, Y.; Ring, R.; Yogatama, D.; Wünsch, D.; McKinney, K.; Smith,
  O.; Schaul, T.; Lillicrap, T.; Kavukcuoglu, K.; Hassabis, D.; Apps, C.; and
  Silver, D. 2019.
\newblock Grandmaster level in StarCraft II using multi-agent reinforcement
  learning.
\newblock \emph{Nature}, 575(7782): 350--354.

\bibitem[{Wang et~al.(2018)Wang, Keskar, Xiong, and Socher}]{Wang18GenPro}
Wang, H.; Keskar, N.~S.; Xiong, C.; and Socher, R. 2018.
\newblock Identifying Generalization Properties in Neural Networks.
\newblock \emph{CoRR}, abs/1809.07402.

\bibitem[{Wang et~al.(2020)Wang, Yu, An, and Rabinovich}]{WangY0R20I2HRL}
Wang, R.; Yu, R.; An, B.; and Rabinovich, Z. 2020.
\newblock I{\({^2}\)}HRL: Interactive Influence-based Hierarchical
  Reinforcement Learning.
\newblock In \emph{IJCAI}, 3131--3138.

\bibitem[{You et~al.(2018)You, Liu, Ying, Pande, and Leskovec}]{YouLYPL18GCPN}
You, J.; Liu, B.; Ying, Z.; Pande, V.~S.; and Leskovec, J. 2018.
\newblock Graph Convolutional Policy Network for Goal-Directed Molecular Graph
  Generation.
\newblock In \emph{NeurIPS 2018}, 6412--6422.

\end{thebibliography}

\clearpage

\onecolumn
\appendix

\section*{Appendix}

\section{Supplementary Materials for Theoretical Analysis}
\label{app:proofs}


\subsection{More on Definition \ref{defi:approx_mapping} and \ref{defi:l_continuity}}
\label{disc:defini}

In Definition \ref{defi:approx_mapping}, we use $\mathscr{P}_{\pi}$ for a formal description of value approximation process, 
i.e., the learning process of a parametrized PeVFA $\mathbb{V}_{\theta}$ with parameters $\theta \in \Theta$ to approximate the values of policy $\pi$.
For a usual example, one can consider $\mathscr{P}_{\pi}$ as multiple times of parameter update via gradient descent with respect to $f_{\theta}(\pi) = \|\mathbb{V}_{\theta}(\pi) - V^{\pi}\|$.
Note that $f_{\theta}(\pi)$ can be equivalent to a common value approximation loss function $L(\theta) = \mathbb{E}_{s \sim \rho(s)}\left( \mathbb{V}_{\theta}(s, \pi) - \hat{V}^{\pi}(s) \right)^2$
with some unbiased estimates $\hat{V}^{\pi}$ from experiences stored,
when the same state distribution $\rho(s)$ is considered.
Thus, with sufficient capacity of function approximation and certain number of training,
we can expect a contraction of approximation loss for policy $\pi$ obtained by $\mathscr{P}_{\pi}$.

We use Definition \ref{defi:l_continuity} to characterize the local smoothness of approximation loss $f_{\theta}$ near policy $\pi$ with Lipschitz continuity.
Consider a typical PeVFA $V_{\theta}$ parameterized by an MLP with finite weights, biases and non-linear activation.
Such a $V_{\theta}$ is Lipschitz continuity with a bounded Lipschitz constant,
as it is made up of function transformations that individually have bounded Lipschitz constants, 
e.g., weight matrix $w$ of some layer has bounded Lipschitz constant to be the operator norm of matrix $w$ and ReLU activation has Lipschitz constant of 1.
Further, easily we have for any $\pi$ and $\pi^{\prime}$,
\begin{equation}
\begin{aligned}
    | f_{\theta}(\pi) - f_{\theta}(\pi^{\prime}) | \le \|\mathbb{V}_{\theta}(\pi) - \mathbb{V}_{\theta}(\pi^{\prime}) \| + \|V^{\pi} - V^{\pi^{\prime}} \|.
\end{aligned}
\end{equation}
As mentioned above, $V_{\theta}$ is Lipschitz continuity with a bounded Lipschitz constant;
and the norm of true value vector of two policies is also finite.
Thus, $f_{\theta}$ in this case can also have a bounded Lipschitz constant $L$.

\subsection{Proof of Lemma \ref{lem:gen_bound}}
\label{prf:lem1}
\begin{applemma}
For $\theta \xrightarrow{\mathscr{P}_{\pi_1}} \hat{\theta}$,
if $f_{\hat{\theta}}$ is $\hat{L}$-continuous at $\pi_1$ and $f_{\theta}(\pi_1) \le f_{\theta}(\pi_2)$, we have: 
$f_{\hat{{\theta}}}(\pi_2) \le \gamma f_{\theta}(\pi_2) + \mathcal{M}(\pi_1, \pi_2, \hat{L})$,
where $\mathcal{M}(\pi_1, \pi_2, \hat{L}) = \hat{L} \cdot d(\pi_1,\pi_2)$.
\end{applemma}
\begin{proof}
For the clarity, we also use $f$ and $\hat{f}$ as abbreviations of $f_{\theta}$ and $f_{\hat{\theta}}$ in the following.
Start from the $\hat{L}$-continuity of $\hat{f}(\theta)$ (recall Definition \ref{defi:l_continuity}), we have the upper bound of $\hat{f}(\pi_2)$ below:
\begin{equation}
\label{equation:lips_upper_bound}
    \hat{f}(\pi_2) \le 
    \hat{f}(\pi_1) + \hat{L} \cdot d(\pi,\pi^{\prime}).
\end{equation}
The second term in Eq.\ref{equation:lips_upper_bound} is decided by the two policies we considered and a Lipschitz constant $\hat{L}$.
Moreover, the constant $\hat{L}$ (i.e., locality property) is related to the parameters $\hat{\theta}$ of PeVFA.
In general, we denote the above term as $\mathcal{M}(\pi_1, \pi_2, \hat{L})$ called \textit{locality margin}.
The locality margin $\mathcal{M}(\pi_1, \pi_2, \hat{L})$ can have different forms that depends on the specific locality property,
for examples:
\begin{equation*}
    \mathcal{M}(\pi_1, \pi_2, \hat{L}) =
    \left\{
        \begin{aligned}
        & \hat{L} \cdot d(\pi_1, \pi_2) 
        & \text{\ding{172}}
        \\
        & \langle \hat{f}^{\prime}(\pi_1), \pi_2 - \pi_1 \rangle + \frac{1}{2} \hat{L} \cdot d(\pi_1, \pi_2)^2 
        & \text{\ding{173}}
        \\
        & 
        \langle \hat{f}^{\prime}(\pi_1), \pi_2 - \pi_1 \rangle
        + \frac{1}{2} \langle \hat{f}^{\prime \prime}(\pi_1) (\pi_2 - \pi_1), \pi_2 - \pi_1 \rangle + \frac{1}{6} \hat{L} \cdot d(\pi_1, \pi_2)^3
        & \text{\ding{174}}
        \end{aligned}
    \right.
\label{equation:lip_conds}
\end{equation*}
\ding{172}, \ding{173}, \ding{174} correspond to Lipschitz Continuous, Lipschitz Gradients and Lipschitz Hessian \citep{NesterovP06Cubic}, which are conisdered in previous works on generalization studies \citep{KeskarMNST17LargeBatch,Wang18GenPro}.

Further, apply the Definition \ref{defi:approx_mapping} and consider the case $f(\pi_1) \le f(\pi_2)$,
Eq.\ref{equation:lips_upper_bound} can be further transformed as follows:
\begin{equation}
\begin{aligned}
    \hat{f}(\pi_2) & \le 
    \hat{f}(\pi_1) + \mathcal{M}(\pi_1, \pi_2, \hat{L}) \\
    & \le \gamma f(\pi_1) + \mathcal{M}(\pi_1, \pi_2, \hat{L}) \\
    & \le \underbrace{\gamma f(\pi_2)}_{\text{generalized contraction}} + \ \ \underbrace{\mathcal{M}(\pi_1, \pi_2, \hat{L})}_{\text{locality margin}},
\end{aligned}
\label{equation:derivation_gen_upper_bound}
\end{equation}
which yields the generalization upper bound in Lemma \ref{lem:gen_bound}.
We note the first term of RHS of Eq.\ref{equation:derivation_gen_upper_bound} as generalized contraction term since it is from the contraction on $f(\pi_1)$ caused by the value approximation operator $\mathscr{P}_{\pi_1}$,
and the second term as locality margin since it is determined by specific local property.
\end{proof}
\begin{remark}
Since value approximation is only performed for $\pi_1$,
the condition $f_{\theta}(\pi_1) \le f_{\theta}(\pi_2)$ can usually exist after a certain number of training;
in turn,
the complementary case
$f_{\theta}(\pi_1) > f_{\theta}(\pi_2)$ 
is acceptable since the unoptimized approximation loss is already lower than the optimized one.
\end{remark}

\subsection{Proof of Corollary \ref{cor:gen_cond}}
\label{prf:cor1}
\begin{appCorollary}
$\mathscr{P}_{\pi_1}$
is $\gamma_{g}$-contraction ($\gamma_{g} \in [0,1)$) for $\pi_2$ when $f_{\theta}(\pi_2) > \frac{\hat{L} \cdot d(\pi_1, \pi_2)}{1 - \gamma}$.
\end{appCorollary}
\begin{proof}
Following Lemma \ref{lem:gen_bound}, consider Lipschitz continuity for a concrete locality property of $f_{\hat{\theta}}$, we have,
\begin{equation}
    \hat{f}(\pi_2) \le 
    \gamma f(\pi_2) + \hat{L} \cdot d(\pi_1, \pi_2).
\label{equation:lc_upper_bound1}
\end{equation}
Then we get the contraction condition of value generalization on $\pi_2$ in Corollary \ref{cor:gen_cond},
by letting the RHS of Eq.\ref{equation:lc_upper_bound1} be smaller than $f(\pi_2)$:
\begin{equation}
\begin{aligned}
    \gamma f(\pi_2) + \hat{L} \cdot d(\pi_1, \pi_2) & < f(\pi_2) \\
    (1 - \gamma) f(\pi_2) & > \hat{L} \cdot d(\pi_1, \pi_2) \\
    f(\pi_2) & > \frac{\hat{L} \cdot d(\pi_1, \pi_2)}{1 - \gamma} \ge 0.
\end{aligned}
\label{equation:derivation_gen_con}
\end{equation}
\end{proof}
\begin{remark}
\label{remark:gen_con_intuition}
From the generalization contraction condition provided in Corollary \ref{cor:gen_cond}, we can find that:
as \romannumeral1. $\gamma \rightarrow 0$, or \romannumeral2. $d(\pi_1, \pi_2) \rightarrow 0$, or \romannumeral3. $\hat{L} \rightarrow 0$, the contraction condition is easier to achieve (or the contraction gets tighter),
i.e., the generalization on unlearned policy $\pi_2$ is better.
\end{remark}
In another word, the tighter the contraction on learned policy $\pi_1$ is, the closer the two policies are, the smoother the approximation loss function $\hat{f}$ is, the generalization on unlearned policy $\pi_2$ is better.

Corollary \ref{cor:gen_cond} provides the generalization contraction condition on $f(\pi_2)$, under the assumptions that $\mathscr{P}_{\pi_1}$ is $\gamma$-contraction and $f(\pi_1) < f(\pi_2)$ (as in Lemma \ref{lem:gen_bound}). 
In below, we discuss a more general condition for generalization contraction on $f(\pi_2)$ which indicates more possible cases:
\begin{Corollary}
\label{cor:gen_cond_2}
For $\theta \xrightarrow{\mathscr{P}_{\pi_1}} \hat{\theta}$ and $f_{\hat{\theta}}$ is $\hat{L}$-continuous at $\pi_1$,
when $f(\pi_2) - \gamma f(\pi_1) > \hat{L} \cdot d(\pi_1, \pi_2)$, 
we have that $\mathscr{P}_{\pi_1}$
is also a $\gamma_{g}$-contraction for $\pi_2$,
i.e., $f_{\hat{\theta}}(\pi_2) \le \gamma_g f_{\theta}(\pi_2)$ with $\gamma_{g} \in [0,1)$.
\vspace{-0.05cm}
\end{Corollary}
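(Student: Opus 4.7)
The plan is to follow the same route as the proof of Corollary \ref{cor:gen_cond}, but drop the auxiliary assumption $f_{\theta}(\pi_1) \le f_{\theta}(\pi_2)$ used in Lemma \ref{lem:gen_bound} and instead work directly from the two building blocks in a more symmetric way. Concretely, I would start from the $\hat{L}$-continuity of $f_{\hat{\theta}}$ at $\pi_1$, which yields
\begin{equation*}
f_{\hat{\theta}}(\pi_2) \le f_{\hat{\theta}}(\pi_1) + \hat{L}\cdot d(\pi_1,\pi_2),
\end{equation*}
and then apply the defining property of $\mathscr{P}_{\pi_1}$ (Definition \ref{defi:approx_mapping}) to the first term, $f_{\hat{\theta}}(\pi_1) \le \gamma f_{\theta}(\pi_1)$. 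Combining these two inequalities gives the intermediate bound $f_{\hat{\theta}}(\pi_2) \le \gamma f_{\theta}(\pi_1) + \hat{L}\cdot d(\pi_1,\pi_2)$, which is the same kind of ``generalized contraction plus locality margin'' structure as in Lemma \ref{lem:gen_bound}, just without replacing $f_{\theta}(\pi_1)$ by $f_{\theta}(\pi_2)$.

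Next I would exploit the hypothesis $f_{\theta}(\pi_2) - \gamma f_{\theta}(\pi_1) > \hat{L}\cdot d(\pi_1,\pi_2)$ by rearranging it to $\gamma f_{\theta}(\pi_1) + \hat{L}\cdot d(\pi_1,\pi_2) < f_{\theta}(\pi_2)$. Plugging this into the intermediate bound immediately produces $f_{\hat{\theta}}(\pi_2) < f_{\theta}(\pi_2)$, which is the desired strict contraction on $\pi_2$. To realise it as a $\gamma_g$-contraction with $\gamma_g \in [0,1)$, I would define
\begin{equation*}
\gamma_g \;=\; \frac{\gamma f_{\theta}(\pi_1) + \hat{L}\cdot d(\pi_1,\pi_2)}{f_{\theta}(\pi_2)},
\end{equation*}
which is non-negative (all quantities on the right are non-negative) and strictly less than $1$ exactly by the hypothesis, so that $f_{\hat{\theta}}(\pi_2) \le \gamma_g f_{\theta}(\pi_2)$ holds by construction.

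The only real subtlety, and the step I would be most careful about, is the degenerate case $f_{\theta}(\pi_2) = 0$: then the ratio defining $\gamma_g$ is undefined, but the hypothesis forces $-\gamma f_{\theta}(\pi_1) > \hat{L}\cdot d(\pi_1,\pi_2) \ge 0$, which together with $f_{\theta}(\pi_1) \ge 0$ collapses everything to $f_{\theta}(\pi_1) = 0$ and $d(\pi_1,\pi_2) = 0$, so $f_{\hat{\theta}}(\pi_2) = 0$ follows trivially from the $\hat{L}$-continuity bound and any $\gamma_g \in [0,1)$ works. Apart from this boundary check, the proof is a direct two-line chain of inequalities, so I would present it compactly and close with a short remark contrasting the hypothesis with that of Corollary \ref{cor:gen_cond}: here the condition is expressed as a joint constraint on $f_{\theta}(\pi_1)$, $f_{\theta}(\pi_2)$ and $d(\pi_1,\pi_2)$ rather than on $f_{\theta}(\pi_2)$ alone, which makes the result strictly more general and also covers cases where $f_{\theta}(\pi_1) > f_{\theta}(\pi_2)$.
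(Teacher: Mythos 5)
Your proposal is correct and follows essentially the same route as the paper: chain the $\hat{L}$-continuity bound with the $\gamma$-contraction of $\mathscr{P}_{\pi_1}$ to get $f_{\hat{\theta}}(\pi_2) \le \gamma f_{\theta}(\pi_1) + \hat{L}\cdot d(\pi_1,\pi_2)$, then observe that the hypothesis makes this right-hand side strictly smaller than $f_{\theta}(\pi_2)$. Your explicit construction of $\gamma_g$ and the boundary check are welcome extra rigor the paper omits (note only that when $f_{\theta}(\pi_2)=0$ the hypothesis is in fact unsatisfiable, since it would require $0 > -\gamma f_{\theta}(\pi_1) > \hat{L}\cdot d(\pi_1,\pi_2) \ge 0$, so that case is vacuous rather than merely trivial).
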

\begin{proof}
From Eq.\ref{equation:derivation_gen_upper_bound}, we have
$\hat{f}(\pi_2) \le \gamma f(\pi_1) + \hat{L} \cdot d(\pi_1, \pi_2)$.
To yield the generalization contraction on $f(\pi_2)$, is to let
\begin{equation}
    \hat{f}(\pi_2) \le \gamma f(\pi_1) + \hat{L} \cdot d(\pi_1, \pi_2) < f(\pi_2),
\end{equation}
that is to let, 
\begin{equation}
\label{equation:gen_cond_2_condition}
    f(\pi_2) - \gamma f(\pi_1) > \hat{L} \cdot d(\pi_1, \pi_2).
\end{equation}
\end{proof}

Since $d(\pi_1, \pi_2)$ is constant in the two-policy case considered, 
the condition in Corollary \ref{cor:gen_cond_2} is associated to the value approximation losses on $\pi_1$ and $\pi_2$ before applying the value approximation operator $\mathscr{P}_{\pi}$, 
as well as the $\hat{L}$-continuity of $\hat{\theta}$ after applying $\mathscr{P}_{\pi}$.
We can find similar conclusions as mentioned in Remark \ref{remark:gen_con_intuition}.
However, Corollary \ref{cor:gen_cond_2} indicates some more cases that the condition of generalization contraction can be satisfied.
For example, it can happen in the complementary cases as we assumed in Lemma \ref{lem:gen_bound}, i.e., 1) when $f(\pi_1) > f(\pi_2)$, 
or 2) $\mathscr{P}_{\pi}$ is not a $\gamma$-contraction on $f(\pi_1)$.

\subsection{Proof of Lemma \ref{lem:recursive_relation}}
\label{prf:lem2}
\begin{applemma}
For $\theta_{-1} \xrightarrow{\mathscr{P}_{\pi_0}} \theta_{0} \xrightarrow{\mathscr{P}_{\pi_1}} \theta_{1} \xrightarrow{\mathscr{P}_{\pi_2}} \dots$ with $\gamma_t$ for each $\mathscr{P}_{\pi_t}$,
if $f_{\theta_t}$ is $L_{t}$-continuous at $\pi_t$ for any $t \ge 0$,
we have
$f_{\theta_t}(\pi_{t+1}) \le \gamma_t f_{\theta_{t-1}}(\pi_{t}) + \mathcal{M}_{t}$,
where $\mathcal{M}_{t} = L_{t} \cdot d(\pi_t, \pi_{t+1})$.
\end{applemma}
\begin{proof}
Consider any $t \ge 0$ and $\theta_{t-1} \xrightarrow{\mathscr{P}_{\pi_t}} \theta_{t}$,
due to $f_{\theta_t}$ is $L_{t}$-continuous at $\pi_{t}$,
we have,
\begin{equation}
    f_{\theta_{t}}(\pi_{t+1}) \le f_{\theta_{t}}(\pi_{t}) + L_{t} \cdot d(\pi_{t}, \pi_{t+1}),
\end{equation}
then due to the definition of the value approximation process $\mathscr{P}_{\pi_t}$,
\begin{equation}
\begin{aligned}
    f_{\theta_{t}}(\pi_{t+1}) & \le \gamma_t f_{\theta_{t-1}}(\pi_{t}) + L_{t} \cdot d(\pi_{t}, \pi_{t+1}), \\
    & = \gamma_t f_{\theta_{t-1}}(\pi_{t}) + \mathcal{M}_{t},
\end{aligned}
\end{equation}
where $\mathcal{M}_{t} = L_{t} \cdot d(\pi_{t}, \pi_{t+1})$.
\end{proof}
Intuitively, 
such a recursive relation between the generalized approximation loss of two consecutive steps, 
i.e., $f_{\theta_{t-1}}(\pi_{t})$ and $f_{\theta_{t}}(\pi_{t+1})$, 
are chained by the assumed continuity of the loss function $f_{\theta_t}$
and the definition of value approximation process.

\subsection{Proof of Corollary \ref{cor:local_gen_induction}}
\label{prf:cor2}
\begin{appCorollary}
By induction, we have
$f_{\theta_t}(\pi_{t+1}) \le \prod_{i=0}^t \gamma_t f_{\theta_{-1}}(\pi_{0}) + \sum_{i=0}^{t-1} \prod_{j=i+1}^{t} \gamma_j \mathcal{M}_{i} + \mathcal{M}_{t}$ .
\end{appCorollary}
\begin{proof}
Consider the consecutive value approximation process $\theta_{-1} \xrightarrow{\mathscr{P}_{\pi_0}} \theta_{0} \xrightarrow{\mathscr{P}_{\pi_1}} \dots 
\xrightarrow{\mathscr{P}_{\pi_{t-1}}} \theta_{t-1}
\xrightarrow{\mathscr{P}_{\pi_t}} \theta_{t}
\xrightarrow{\mathscr{P}_{\pi_{t+1}}} \dots
$,
following the recursive relation in Lemma \ref{lem:recursive_relation},
we have the inequality below by induction,
\begin{equation}
\begin{aligned}
    f_{\theta_{t}}(\pi_{t+1}) & \le \gamma_t f_{\theta_{t-1}}(\pi_{t}) + \mathcal{M}_{t}, \\
    & \le \dots \\
    & \le \gamma_t \left(\gamma_{t-1} 
    \left( \dots 
    \left( 
    \gamma_0 f_{\theta_{-1}}(\pi_{0}) + \mathcal{M}_{0} 
    \right) 
    \dots \right) 
    \mathcal{M}_{t-1} \right) + \mathcal{M}_{t}, \\
    & = \underbrace{\left( \prod_{i=0}^t \gamma_t \right) f_{\theta_{-1}}(\pi_{0})}_{\text{\ding{182}}}
    + \underbrace{\sum_{i=0}^{t-1} 
    \left(  \prod_{j=i+1}^{t} \gamma_j \right) 
    \mathcal{M}_{i} + \mathcal{M}_{t}}_{\text{\ding{183}}}.
\end{aligned}
\end{equation}
where $\mathcal{M}_{t} = L_{t} \cdot d(\pi_{t}, \pi_{t+1})$.
We use \ding{182} to denote the term for accumulated generalized contraction of initial approximation loss and use \ding{183} to denote the term for accumulated locality margin.
\end{proof}

Towards the infinity case i.e., $t \rightarrow \infty$, if we assume that \textit{(i)} $\max_{t} d(\pi_{t}, \pi_{t+1}) < \infty$ 
and \textit{(ii)} $\prod_{k=h_1}^{h_2}\gamma_{k} = O(\frac{1}{{(h_2-h_1+1)}^{1+\varepsilon}})$, $\forall  0 < h_1 \leq  h_2$ with  some $\varepsilon > 0$, then $\lim_{t\rightarrow \infty} f_{\theta_{t}}(\pi_{t+1}) < \infty$. 
That is because
the sequence
$\{\mathcal{M}_{i}\}_{i=0}^{t}$ has a public upper bound $\mathcal{M}_{\text{max}} = L_{\text{max}} \cdot \max_{t} d(\pi_{t}, \pi_{t+1})$ where $L_{\text{max}}$ denotes the upper bound of Lipschitz constant (recall the discussion in Appendix \ref{disc:defini}), 
and by (ii) $\sum_{i=0}^{t-1} \left(  \prod_{j=i+1}^{t} \gamma_j \right) = O(\sum_{i=0}^{t-1} \frac{1}{{(t-i+1)}^{1+\varepsilon}}) < \infty $.

Note that we consider a really loose bound in the infinity case above with $\mathcal{M}_{\text{max}}$, therefore the condition \textit{(ii)} may be unnecessarily strict when the dynamics of $L_{t}$ and $d(\pi_t, \pi_{t+1})$ are considered.
Intuitively, the evolvement of $L_{t}$ during learning process is related to function family and optimization method of $\theta_t$;
and for $d(\pi_t, \pi_{t+1})$, this is related to value approximation error ($f_{\theta_{t}(\pi_t}$) and policy improvement method (i.e., how $\pi_t$ is improved to be $\pi_{t+1}$).
We leave these further analysis for future work.

\subsection{Proof of Theorem \ref{thm:closer_target}}
\label{prf:thm1}
\begin{theorem}
During $\theta_{-1} \xrightarrow{\mathscr{P}_{\pi_0}} \theta_{0} \xrightarrow{\mathscr{P}_{\pi_1}} \theta_{1} \xrightarrow{\mathscr{P}_{\pi_2}} \dots$,
for any $t \ge 0$, if $f_{\theta_t}(\pi_t) + f_{\theta_t}(\pi_{t+1}) \le 
\|V^{\pi_t} - V^{\pi_{t+1}}\|$,
then $f_{\theta_t}(\pi_{t+1}) \le \|\mathbb{V}_{\theta_{t}}(\pi_t) - V^{\pi_{t+1}}\| $.
\end{theorem}
\begin{proof}
By the condition in Theorem \ref{thm:closer_target}, we have
\begin{equation}
\label{equation:closer_target_precond}
\begin{aligned}
   &~~~ f_{\theta_t}(\pi_t) + f_{\theta_t}(\pi_{t+1}) 
     \le \|V^{\pi_{t}} - V^{\pi_{t+1}}\| \\
    & \leq     \|\mathbb{V}_{\theta_{t}}(\pi_t) - V^{\pi_{t}}\| + \|\mathbb{V}_{\theta_{t}}(\pi_t)  - V^{\pi_{t+1}}\| 
     =     f_{\theta_t}(\pi_t) + \|\mathbb{V}_{\theta_{t}}(\pi_t) - V^{\pi_{t+1}}\| ,
\end{aligned}
\end{equation}
where the second inequality comes from \emph{Triangle Inequality}. Then it is straightforward that
\begin{equation}
\label{equation:derivation_closer_target}
\begin{aligned}
    \underbrace{f_{\theta_t}(\pi_{t+1})}_{\text{generalizated VAD with PeVFA}} 
    \le \underbrace{\|\mathbb{V}_{\theta_{t}}(\pi_t) - V^{\pi_{t+1}}\|}_{\text{conventional VAD}},
\end{aligned}
\end{equation}
which means that with local generalization of values for successive policy $\pi_{t+1}$,
the value approximation distance (VAD) can be closer in contrast to the conventional one (RHS of Eq.\ref{equation:derivation_closer_target}).
\end{proof}

In practice,
we consider that it is also possible for farther distance to exist, e.g., the condition in above Theorem \ref{thm:closer_target} is not satisfied.
Moreover, under nonlinear function approximation,
it is not necessary that a closer approximation distance (induced by Theorem \ref{thm:closer_target}) ensures easier approximation or optimization process.
This can be associated to many factors, e.g., the underlying function space, the optimization landscape, the learning algorithm used and etc.
In this paper, we provide a condition for potentially beneficial local generalization 
and we resort to empirical examination as shown in Sec. \ref{subsec:demonstrative_exp}.
Further investigation on the interplay between value generalization and policy learning especially under nonlinear function approximation is planned for future work.

\section{Details of Empirical Evidence of Two Kinds of Generalization}
\label{app:demon_details}

\subsection{Global Generalization in 2D Point Walker}
\label{app:2d_walker}

Global generalization denotes the generalization scenario that values can generalize to unlearned policies ($\pi^{\prime} \in \Pi_{1}$) from already learned policies ($\pi \in \Pi_{0}$).
We conduct the following experiments to demonstrate global generalization in a 2D continuous Point Walker environment with synthetic simple policies.

\paragraph{Environment.}
We consider a point walker on a 2D continuous plane with:
\begin{itemize}
    \item state: $(x, y, \sin(\theta), \cos(\theta), \cos(x), \cos(y))$, where $\theta$ is the angle of the polar coordinates,
    \item action: 2D displacement, $a \in \mathbb{R}^{2}_{[-1,1]}$,
    \item a deterministic transition function that describes the locomotion of the point walker, depending on the current position and displacement issued by agent,
    i.e., $\langle x^{\prime}, y^{\prime} \rangle = \langle x, y \rangle + a$,
    \item a reward function: $r_t = \frac{u_{t+1} - u_{t}}{10}$ with utility $u_t = x_t^2 - y_t^2$, as illustrated in Fig.\ref{figure:utility_heatmap}. 
\end{itemize}

\begin{figure}[ht]
\centering
\hspace{-0.8cm}
\subfigure[Utility function heat map]{
\includegraphics[width=0.43\textwidth]{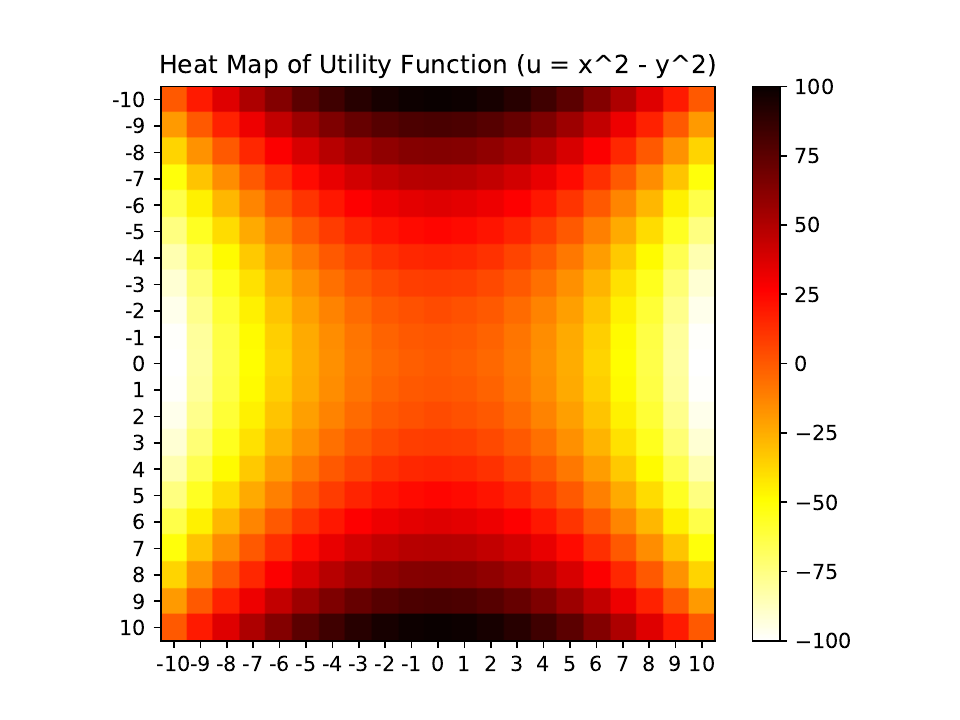}
\label{figure:utility_heatmap}
}
\hspace{-0.7cm}
\subfigure[Examples of synthetic policy population]{
\includegraphics[width=0.62\textwidth]{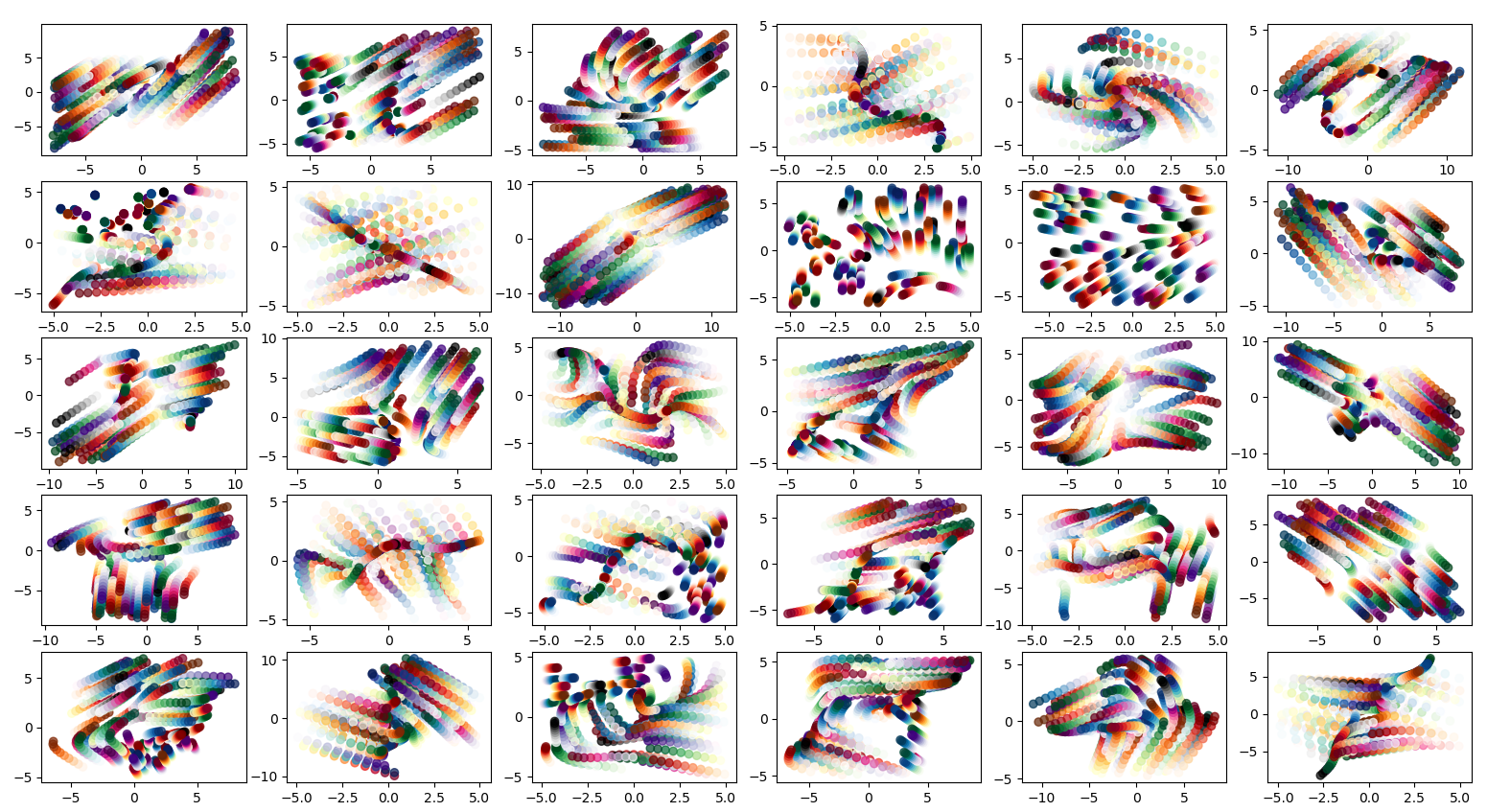}
\label{figure:syn_popu}
}

\caption{
2D Point Walker. 
(a) The heat map of the utility function of the 2D plane. 
The darker regions have higher utilities.
(b) Demonstrative illustrations of trajectories generated by 30 synthetic policies, showing diverse behaviors and patterns.
Each subplot illustrates the trajectories generated in 50 episodes by a randomly synthetic policy, with different colors as separation.
For each trajectory (the same color in one subplot), 
transparency represents the dynamics along timesteps, i.e., fully transparent and non-transparent denotes the positions at first and last timesteps.
}
\label{figure:2d_walker_env}
\end{figure}

\paragraph{Synthetic Policy.}
We build the policy sets $\Pi = \Pi_0 \cup \Pi_1$ and $\Pi_0 \cap \Pi_1 = \emptyset$ with synthetic policies.
Each synthetic policy is a 2-layer \emph{tanh}-activated neural policy network with 2 nodes for each layer.
The weights are initialized by sampling from a uniform distribution $U(-1,1)$ and the biases are initialized by $U(-0.2,0.2)$.
Each policy is deterministic, taking an environmental state as input and outputting a displacement in the plane.
We find that the synthetic population generated by such a simple way can show diverse behaviors.
Fig.\ref{figure:syn_popu} shows the motion patterns of an example of such a synthetic population.
Note that the synthetic policies are not trained in this experiment.


\paragraph{Policy Dataset.}
We rollout each policy in environment to collect trajectories $\mathcal{T}=\{\tau_i\}_{i=0}^k$.
For such small synthetic policies, it is convenient to obtain policy representation.
Here we use the concatenation of all weights and biases of the policy network (26 in total) as representation $\chi_{\pi}$ for each policy $\pi$,
called \emph{raw} policy representation (RPR).
Therefore, combined with the trajectories collected, we obtain the policy dataset, i.e., $\{(\chi_{\pi_j}, \mathcal{T}_{\pi_j})\}_{j=0}^n$.
In total, 20k policies are synthesized in our experiments and we collected 50 trajectories with horizon 10 for each policy.

We separate the synthetic policies into training set (i.e., unknown policies $\Pi_0$) and testing set (i.e., unseen policies $\Pi_1$)
in a proportion of $8:2$.
We set a PeVFA network $\mathbb{V}_{\theta}(s, \chi_{\pi})$ to approximate the values of training policies (i.e., $\pi \in \Pi_0$),
and then conduct evaluation on testing policies (i.e., $\pi \in \Pi_1$).
We use Monte Carlo return \citep{SuttonB98} of collected trajectories as approximation target (true value of policies) in this experiment.
The network architecture of $\mathbb{V}_{\theta}(s, \chi_{\pi})$ is illustrated in Fig.\ref{figure:pevfa_architecture}.
The learning rate is 0.005, batch size is 256.
K-fold validation is performed through shuffling training and testing sets.

Fig.\ref{figure:global_gen_loss_curves} shows the curves of training loss and testing loss.
The average losses on training and testing set are 1.782 and 2.071 over 6 trials.
Fig.\ref{figure:global_gen} plots the value predictions for policies from training and testing set (100 for each). 
This demonstrates that a PeVFA trained with data collected by training set $\Pi_0$ achieves reasonable value prediction of unseen testing policies in $\Pi_1$.
Our results indicate that
value generalization can exist among policy space with a properly trained PeVFA.
RPR can also be one alternative of policy representation when policy network is of small scale.

\begin{figure}[ht]
\centering
\subfigure[Structure of PeVFA network]{
\includegraphics[width=0.3\textwidth]{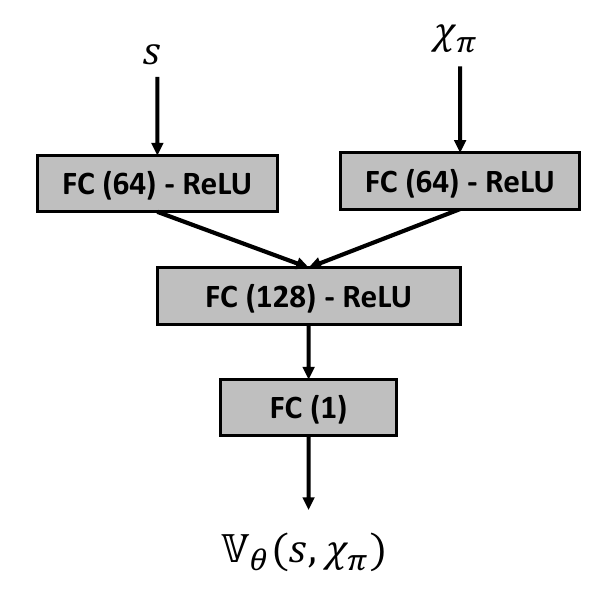}
\label{figure:pevfa_architecture}
}
\hspace{0.5cm}
\subfigure[Global Generalization on 2D Point Walker]{
\includegraphics[width=0.4\textwidth]{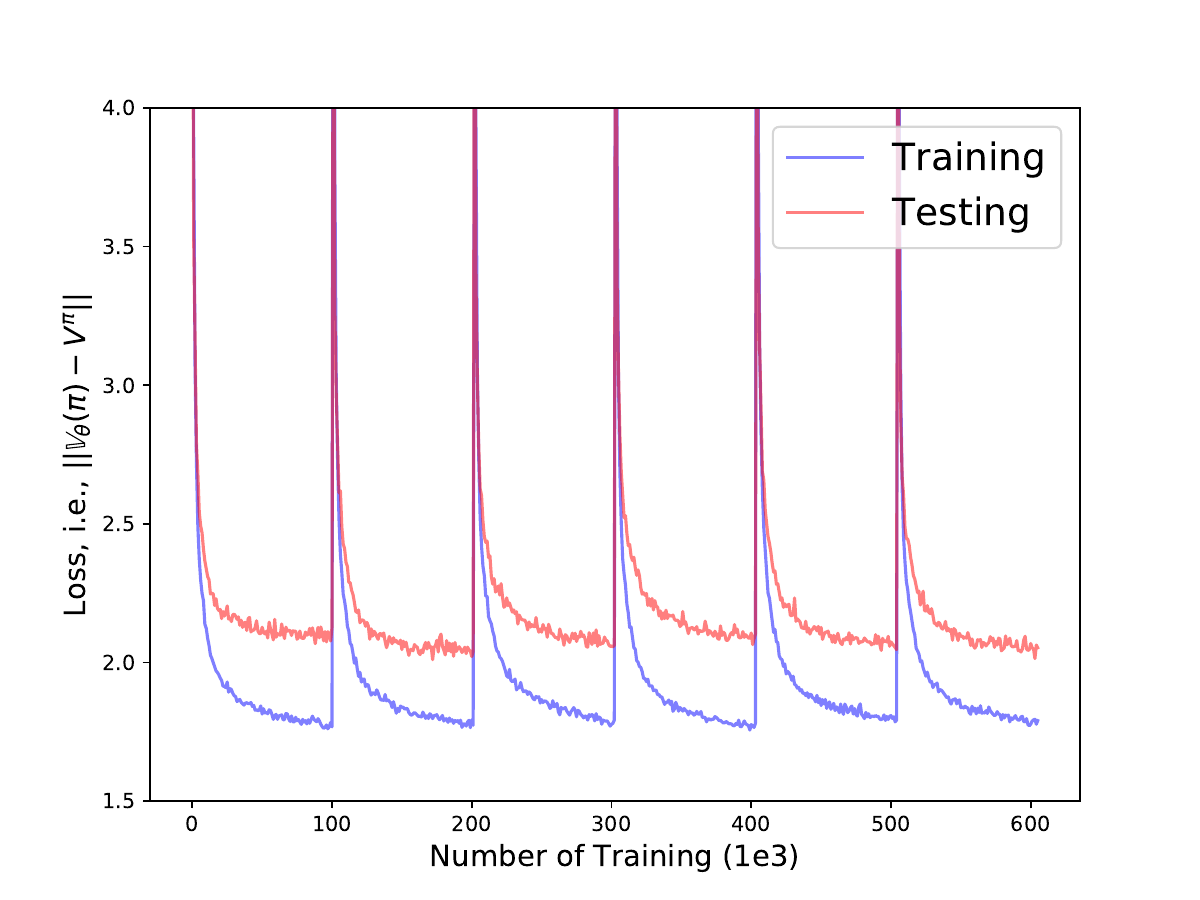}
\label{figure:global_gen_loss_curves}
}
\caption{
Global generalization of PeVFA on 2D Point Walker. 
(a) An illustration of architecture of PeVFA network.
FC is abbreviation for Fully-connected layer. 
(b) Training and testing losses.
Data shuffle sand network re-initialization are performed per 100 steps, i.e., 1e5 training times.
}
\end{figure}

\begin{figure}
\centering
\hspace{-0.2cm}
\subfigure[InvertedPendulum-v1]{
\includegraphics[width=0.32\textwidth]{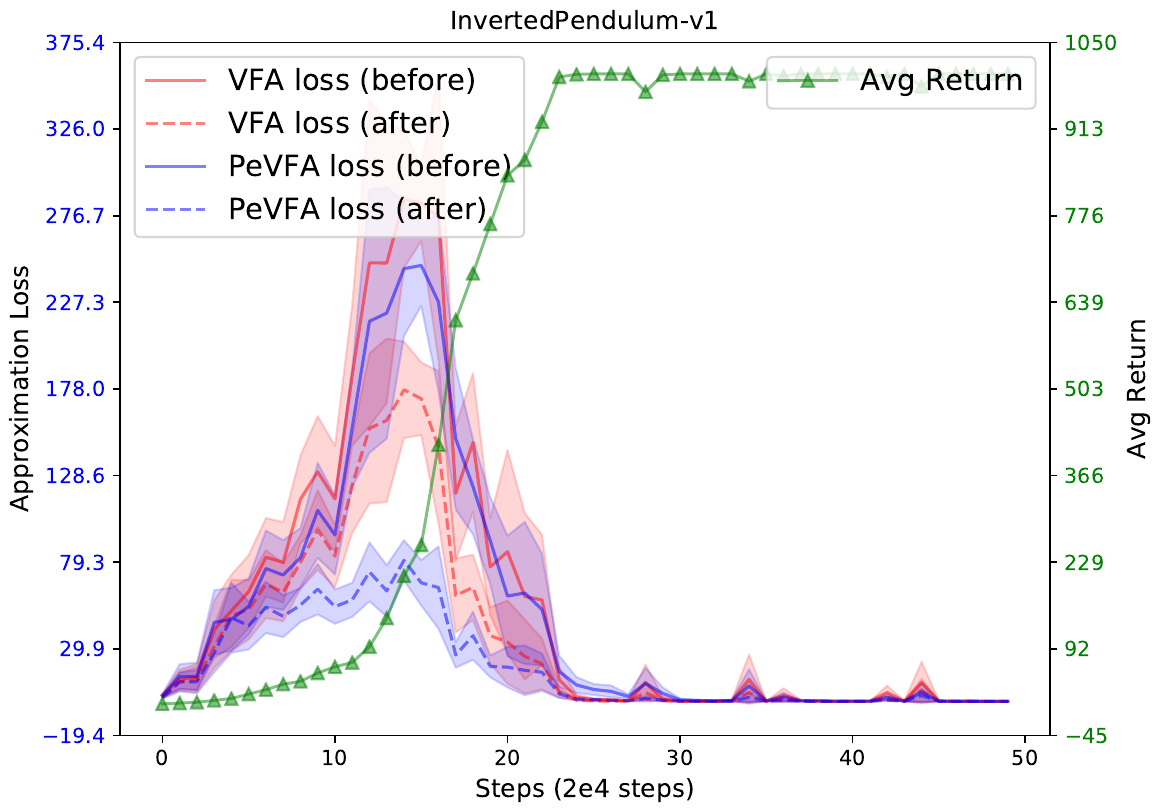}
}
\hspace{-0.35cm}
\subfigure[InvertedDoublePendulum-v1]{
\includegraphics[width=0.32\textwidth]{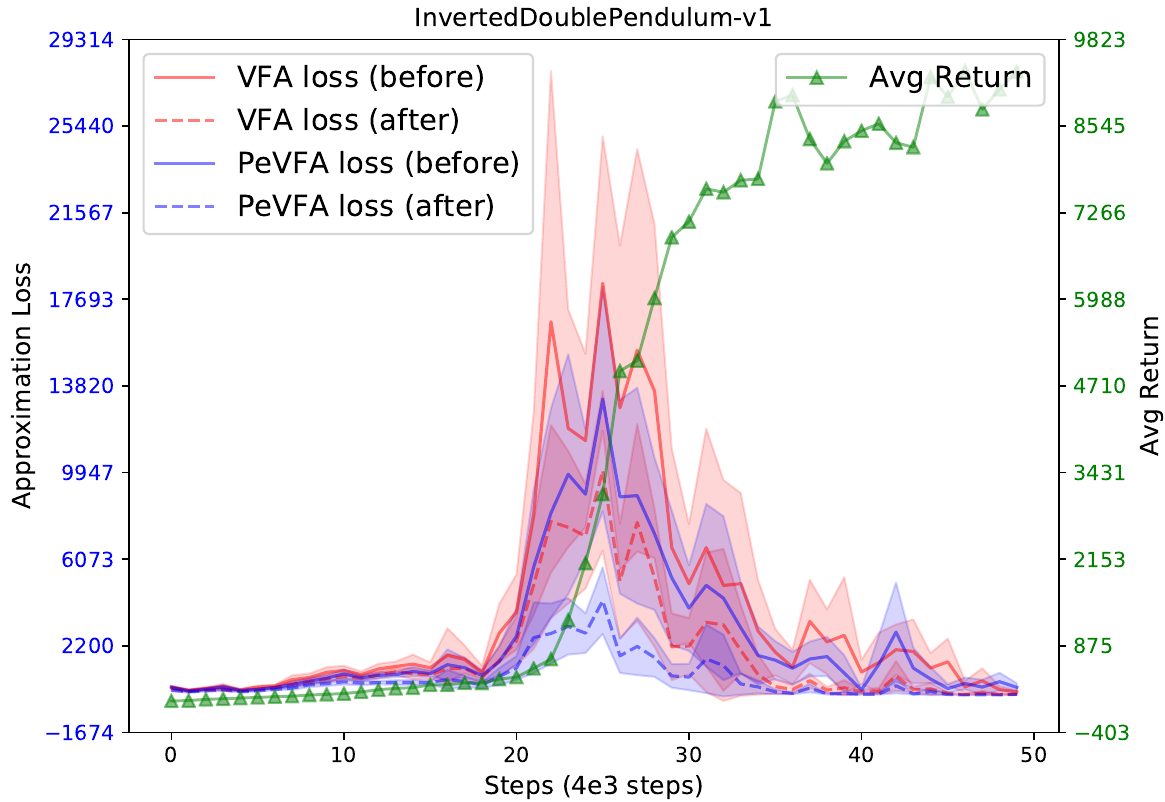}
}
\hspace{-0.35cm}
\subfigure[LunarLanderContinuous-v2]{
\includegraphics[width=0.32\textwidth]{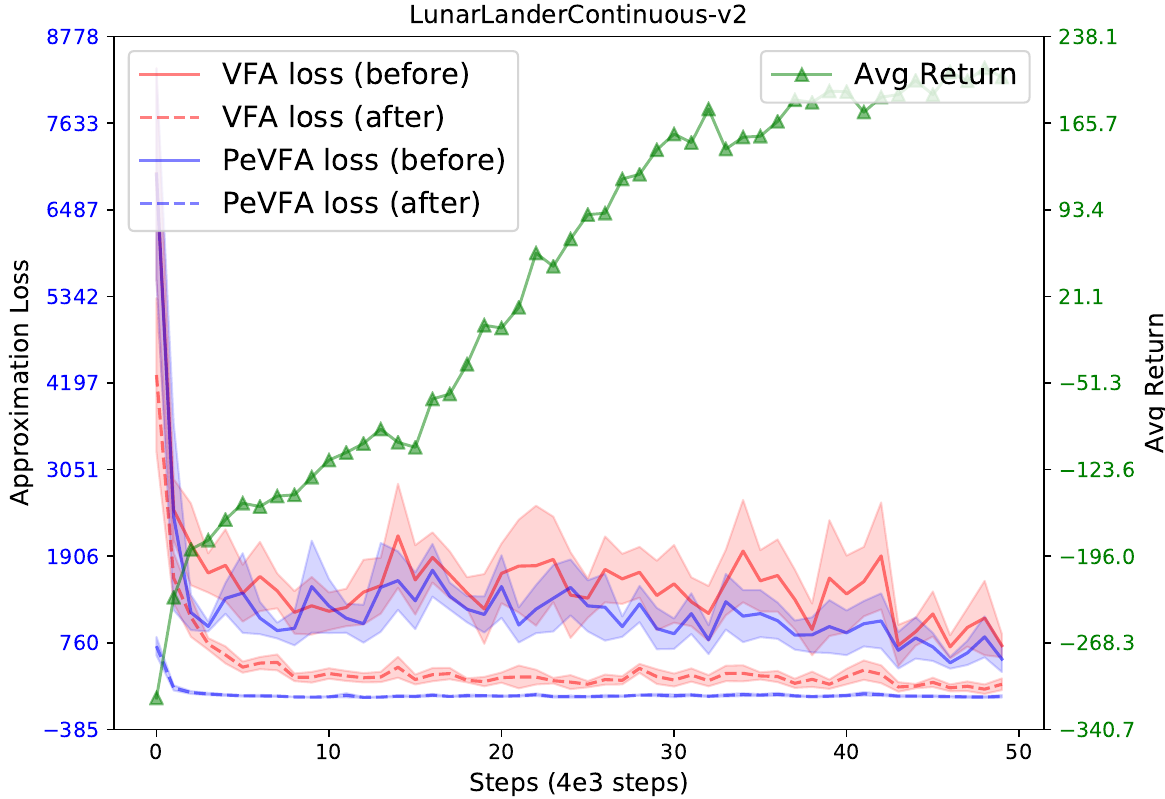}
}
\subfigure[HalfCheetah-v1]{
\includegraphics[width=0.32\textwidth]{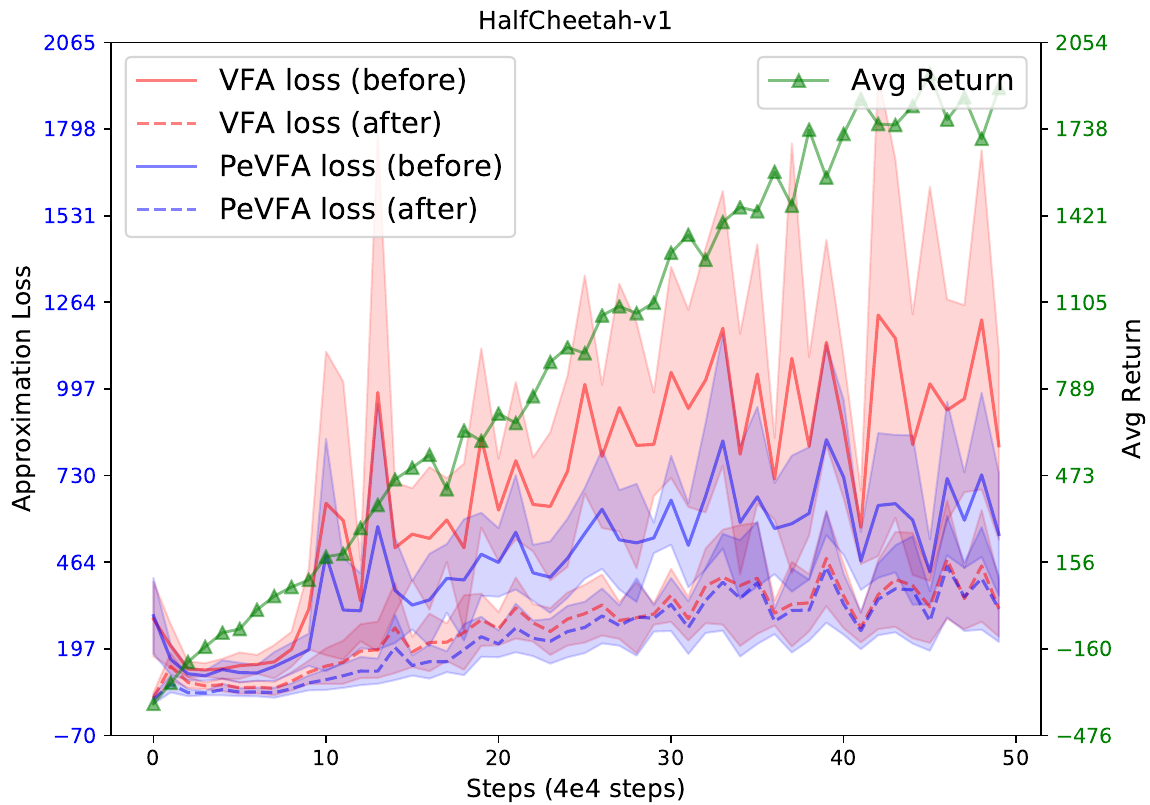}
}
\hspace{-0.35cm}
\subfigure[Hopper-v1]{
\includegraphics[width=0.32\textwidth]{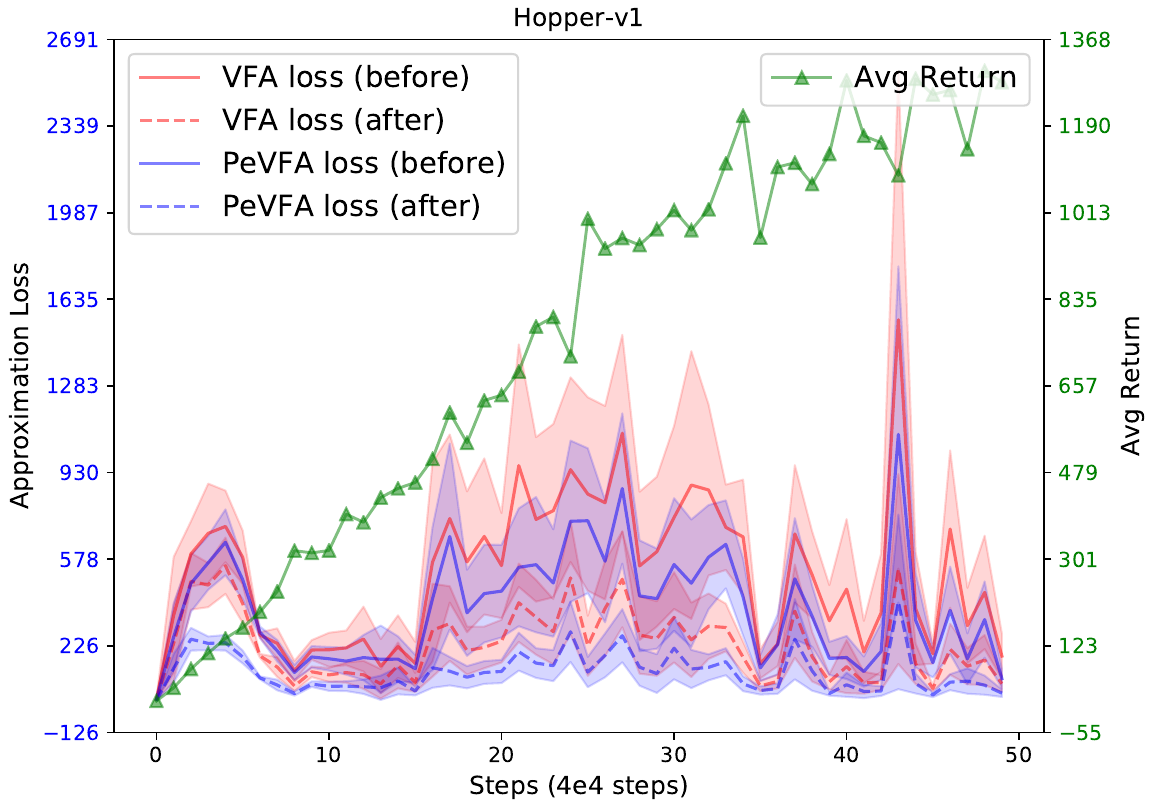}
}
\hspace{-0.35cm}
\subfigure[Walker2d-v1]{
\includegraphics[width=0.32\textwidth]{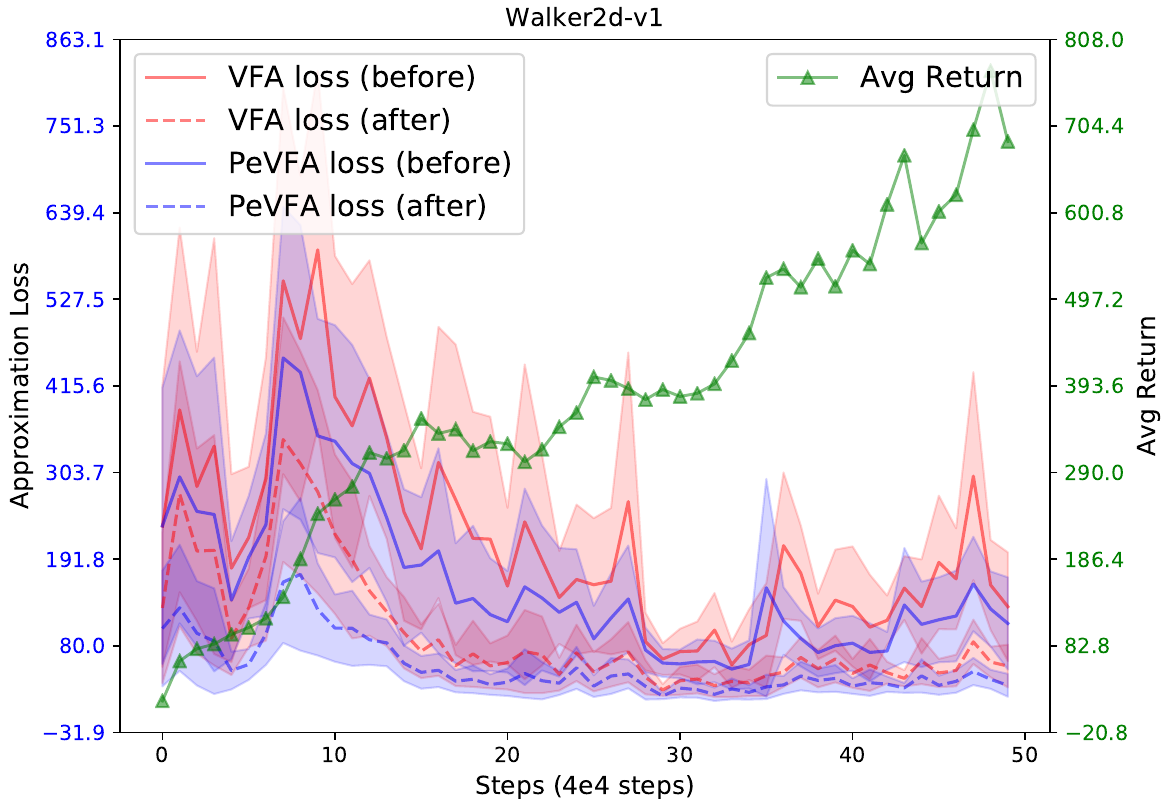}
}
\hspace{-0.35cm}
\subfigure[Ant-v1]{
\includegraphics[width=0.32\textwidth]{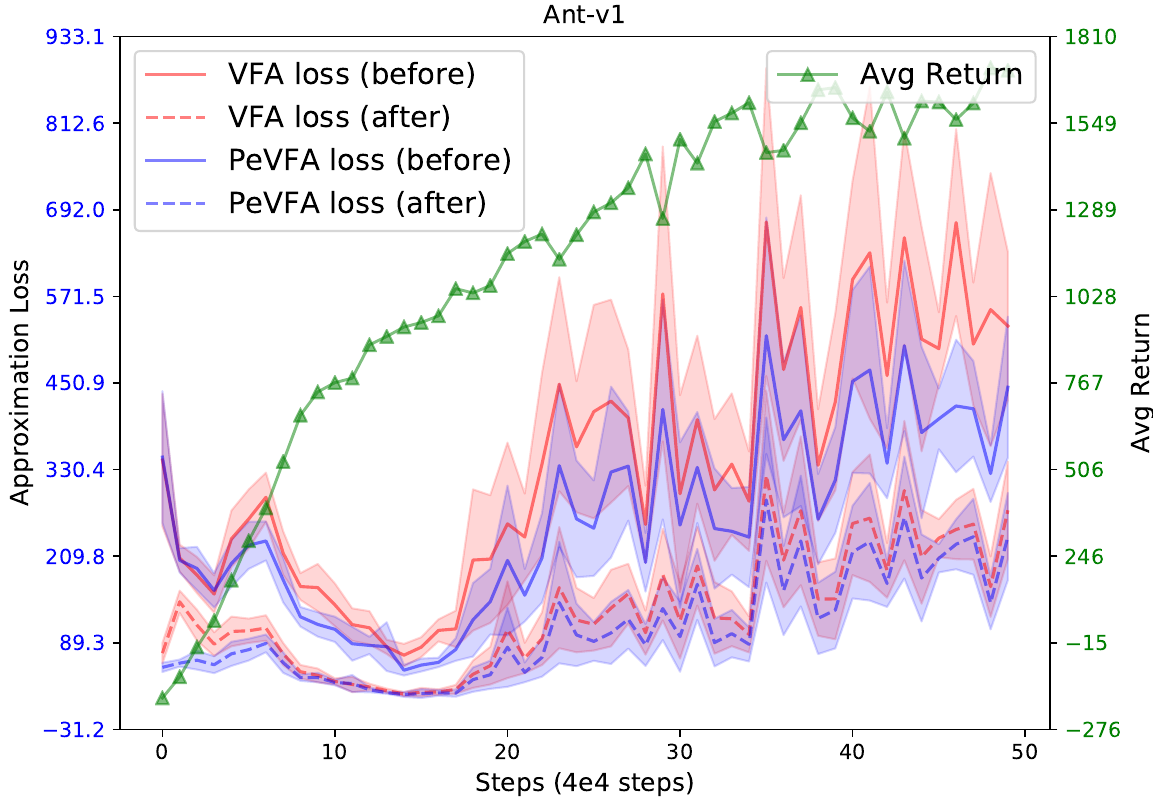}
}
\caption{
Complete empirical evidence of local generalization of PeVFA across 7 MuJoCo tasks.
The learning rate of policy and value function approximators are 0.0001 and 0.001 respectively.
Each plot has two vertical axes, the left one for approximation error (red and blue curves) and the right one for average return (green curves).
Red and blue denotes the approximation error of conventional VFA ($V_{\phi}(s)$) and of PeVFA ($\mathbb{V}_{\theta}(s, \chi_{\pi})$) respectively;
solid and dashed curves denote the approximation error before and after the training for values of successive policy (i.e., policy evaluation) with conventional VFA and PeVFA, averaged over 6 trials.
The shaded region denotes half a standard deviation of average evaluation.
PeVFA consistently shows lower losses (i.e., closer to approximation target) across all tasks than conventional VFA before and after policy evaluation along policy improvement path, which demonstrates Theorem \ref{thm:closer_target}.
}
\label{figure:local_gen_mujoco_complete}
\end{figure}

\begin{figure}[ht]
\centering
\hspace{-0.2cm}
\subfigure[InvPend-v1 (lr = $1e^{-4}$)]{
\includegraphics[width=0.32\textwidth]{appendix_figs/ppo_InvertedPendulum-v1_pt2_0928_fixed.pdf}
}
\hspace{-0.35cm}
\subfigure[InvPend-v1 (lr = $1e^{-3}$)]{
\includegraphics[width=0.32\textwidth]{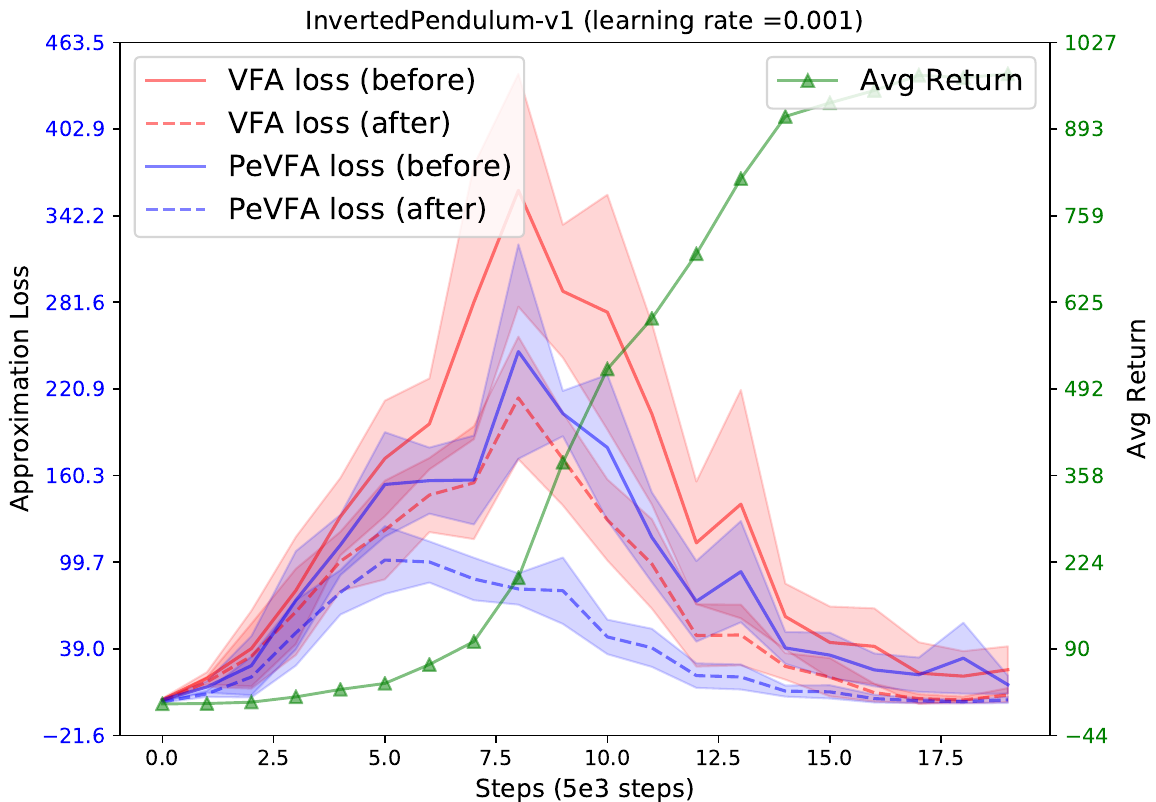}
}
\hspace{-0.35cm}
\subfigure[InvPend-v1 (lr = $5e^{-3}$)]{
\includegraphics[width=0.32\textwidth]{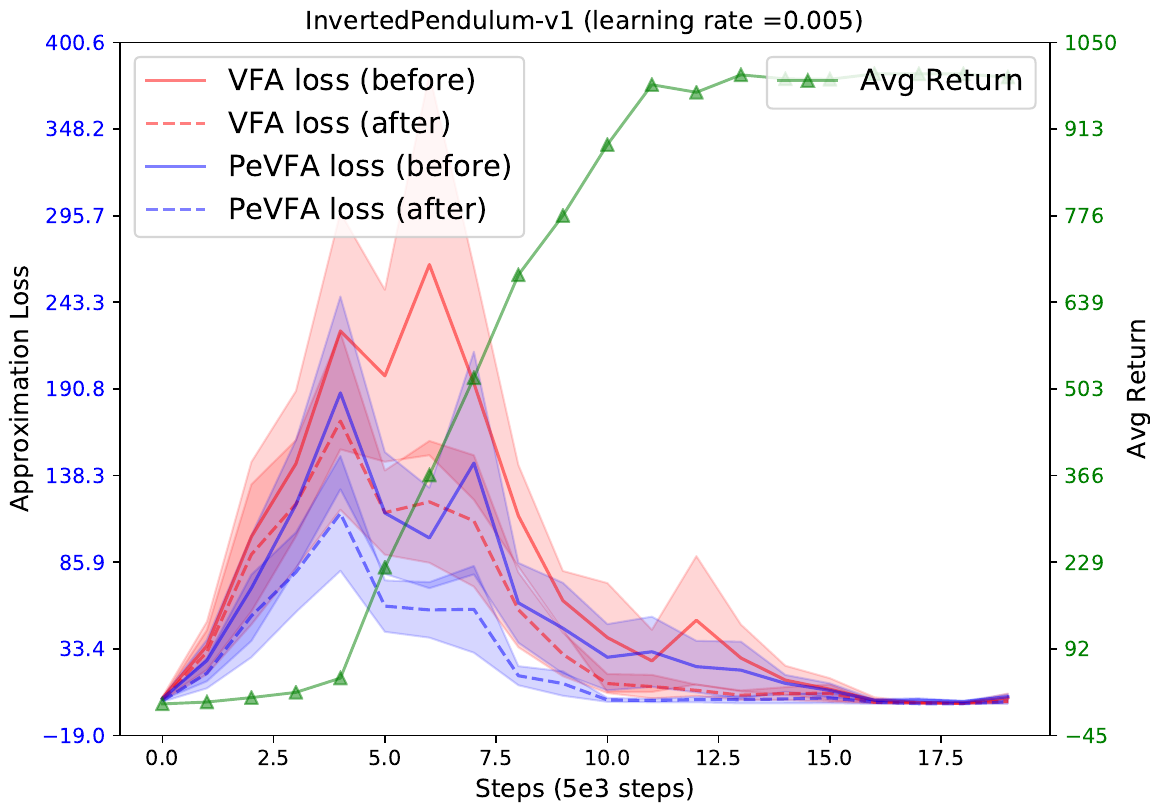}
}
\subfigure[Ant-v1 (lr = $1e^{-4}$)]{
\includegraphics[width=0.32\textwidth]{appendix_figs/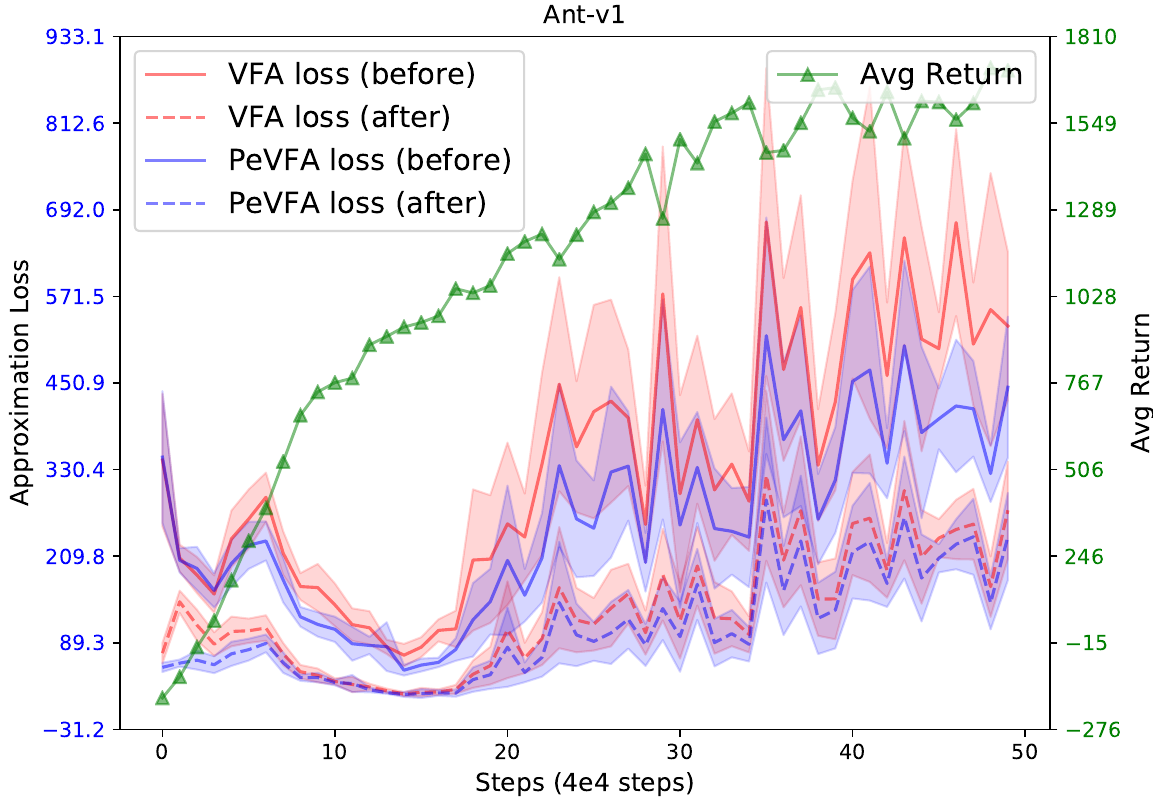}
}
\hspace{-0.35cm}
\subfigure[Ant-v1 (lr = $1e^{-3}$)]{
\includegraphics[width=0.32\textwidth]{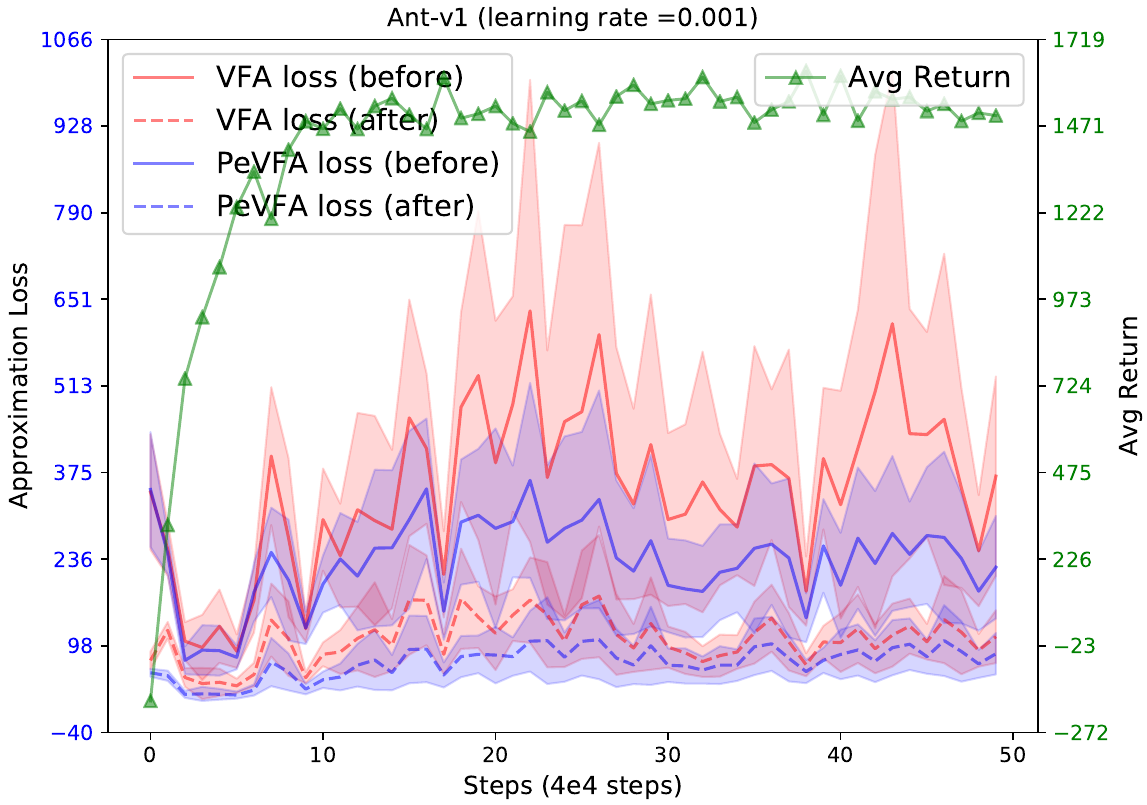}
}
\hspace{-0.35cm}
\subfigure[Ant-v1 (lr = $5e^{-3}$)]{
\includegraphics[width=0.32\textwidth]{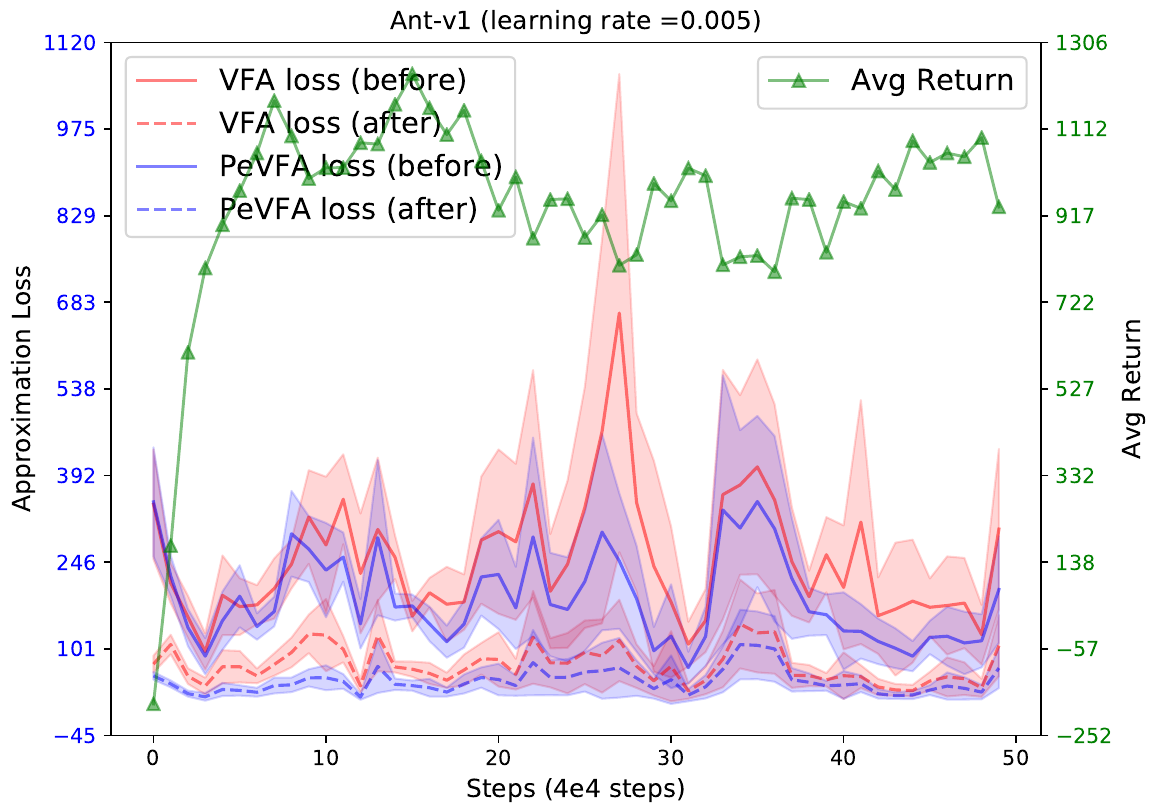}
}
\caption{
Empirical evidence of local generalization of PeVFA on InvertedPendulum-v1 and Ant-v1 with different learning rates of policy, i.e., $\{0.0001, 0.001, 0.005\}$.
Results are averaged over 6 trials.
}
\label{figure:local_gen_mujoco_high_lr}
\end{figure}

\subsection{Local Generalization in MuJoCo Continuous Control Tasks}
\label{app:local_gen_mujoco}
We demonstrate local generalization of PeVFA, especially to examine the existence of Theorem \ref{thm:closer_target},
i.e., PeVFA can induce closer approximation distance (i.e., lower approximation error) than conventional VFA along the policy improvement path.

We use a 2-layer 8-unit policy network trained by PPO \citep{Schulman2017PPO} algorithm in OpenAI MuJoCo continuous control tasks.
As in previous section, using a very small policy network is for the convenience of training and acquisition of policy representation in this demonstrative experiment.
We use all weights and biases of the small policy network (also called \emph{raw} policy representation, RPR), 
whose number is about 10 to 100 in our experiments, 
depending on the specific environment (i.e., the state and action dimensions).
We train the small policy network as commonly done with PPO \citep{Schulman2017PPO} and GAE \citep{Schulman2016GAE}.
The conventional value network $V_{\phi}(s)$ (VFA), 
is a 2-layer 128-unit ReLU-activated MLP with state as input and value as output. 
Parallel to the conventional VFA in PPO, we set a PeVFA network $\mathbb{V}_{\theta}(s,\chi_{\pi})$ with RPR as additional input.
The structure of PeVFA differs at the first hidden layer which has two input streams and each of them has 64 units, as illustrated in Fig.\ref{figure:pevfa_architecture},
so that making VFA and PeVFA have similar scales of parameter number.
In contrast to conventional VFA $V_{\phi}$ which approximates the value of current policy (e.g., Algorithm \ref{algorithm:PG_GPI}),
PeVFA $\mathbb{V}_{\theta}(s,\chi_{\pi})$ has the capacity to preserve values of multiple policies 
and thus is additionally trained to approximate the values of 
all historical policies ($\{\pi_i\}_{i=0}^{t}$) along the policy improvement path (e.g., Algorithm \ref{algorithm:PG_PeVFA}).
The learning rate of policy is 0.0001 and the learning rate of value function approximators ($V_{\phi}(s)$ and $\mathbb{V}_{\theta}(s,\chi_{\pi})$) is 0.001.
The training scheme of PPO policy here is the same as that described in Appendix \ref{app:experimental_details} and Tab.\ref{table:common_hyperparameter}.


Note that $\mathbb{V}_{\theta}(\chi_{\pi})$ does not interfere with PPO training here,
and is only referred as a comparison with $V_{\phi}$ on the approximation error to the true values of successive policy $\pi_{t+1}$.
We use the MC returns of on-policy data (i.e., trajectories) collected by current successive policy as unbiased estimates of true values, similarly done in \citep{Hasselt10DoubleQ,Fujimoto2018TD3}.
Then we calculate the approximation error for VFA $V_{\phi}$ and PeVFA $\mathbb{V}_{\theta}(\chi_{\pi})$ 
to the approximation target before and after value network training of current iteration.
Finally, we compare the approximation error between VFA and PeVFA to approximately examine local generalization and closer approximation target in Theorem \ref{thm:closer_target}.
Complete results of local generalization across all 7 MuJoCo tasks are show in Fig.\ref{figure:local_gen_mujoco_complete}.
The results show that PeVFA consistently shows lower losses (i.e., closer to approximation target) across all tasks than conventional VFA before and after policy evaluation along policy improvement path, which demonstrates Theorem \ref{thm:closer_target}.
Moreover, we also provide similar empirical evidence when policy is updated with larger learning rates in $\{ 0.0001, 0.001, 0.005 \}$, as in Fig.\ref{figure:local_gen_mujoco_high_lr}.

A common observation across almost all results in Fig.\ref{figure:local_gen_mujoco_complete} and in Fig.\ref{figure:local_gen_mujoco_high_lr} is that the larger the extent of policy change (see the regions with a sheer slope on green curves), 
the higher the losses of conventional VFA tend to be (see the peaks of red curves), where the generalization tends to be better and more significant (see the blue curves). 
Since InvertedPendulum-v1 is a simple task while the complexity of the solution for Ant-v1 is higher, the difference between value approximation losses of PeVFA and VFA is more significant at the regions with fast policy improvement.
Besides, the Raw Policy Representation (RPR) we used here does not necessarily induce a smooth and efficient policy representation space, 
among which policy values are easy to generalize and optimize. 
Thus, RPR may be sufficient for a good generalization in InvertedPendulum-v1 but may be not in Ant-v1.
Overall, we think that the quantity of value approximation loss is related to several factors of the environment such as the reward scale, the extent of policy change, the complexity of underlying solution (e.g., value function space) and some others.
A further investigation on this can be interesting.

\section{Generalized Policy Iteration with PeVFA}
\label{app:GPI_with_PeVFA}

\subsection{Comparison between Conventional GPI and GPI with PeVFA}

A graphical comparison of conventional GPI and GPI with PeVFA is shown in Fig.\ref{figure:GPI}. 
Here we provide another comparison with pseudo-codes.

From the lens of Generalized Policy Iteration \citep{SuttonB98},
for most model-free policy-based RL algorithms, the approximation of value function and the update of policy through policy gradient theorem are usually conducted iteratively.
Representative examples are REINFORCE \citep{SuttonB98}, Advantage Actor-Critic \citep{Mnih2016AC}, Deterministic Policy Gradient (DPG) \citep{Silver2014DPG} and Proximal Policy Optimization (PPO) \citep{Schulman2017PPO}.
With conventional value function (approximator),
policy evaluation is usually performed in an on-policy or off-policy fashion.
We provide a general GPI description of model-free policy-based RL algorithm with conventional value functions in Algorithm \ref{algorithm:PG_GPI}.

\begin{algorithm}
  \caption{Generalized policy iteration for model-free policy-based RL algorithm with conventional value functions ($V^{\pi}(s)$ or $Q^{\pi}(s,a)$) }
  \begin{algorithmic}[1]
    \State Initialize policy $\pi_{0}$ and $V_{-1}^{\pi}(s)$ or $Q_{-1}^{\pi}(s,a)$ 
    \State Initialize experience buffer $\mathcal{D}$ 
	\For{iteration t $= 0, 1, 2, \dots$}
        \State Rollout policy $\pi_{t}$ in the environment and obtain trajectories $\mathcal{T}_{t} = \{\tau_{i}\}_{i=0}^{k}$
        \State Add experiences $\mathcal{T}_{t}$ in buffer $\mathcal{D}$
        \If {\emph{on-policy update}} 
            \State Prepare training samples from rollout trajectories $\mathcal{T}_{t}$
        \ElsIf {\emph{off-policy update}}
            \State Prepare training samples by sampling from buffer $\mathcal{D}$
        \EndIf
        \State Calculate approximation target $\{y_i\}_{i}$ from training samples (e.g., with MC or TD)
        \State \textcolor{gray}{\# Generalized Policy Evaluation}
        \State Update $V_{t-1}^{\pi}(s)$ or $Q_{t-1}^{\pi}(s,a)$ with $\{(s_i, y_i)\}_{i}$ or $\{(s_i, a_i, y_i)\}_{i}$,
        i.e.,
        $V^{\pi}_{t} \longleftarrow V^{\pi}_{t-1}$ or $Q^{\pi}_{t} \longleftarrow Q^{\pi}_{t-1}$
        \State \textcolor{gray}{\# Generalized Policy Improvement}
        \State Update policy $\pi_t$ with regard to $V_{t}^{\pi}(s)$ or $Q_{t}^{\pi}(s,a)$ through some policy gradient theorem, 
        i.e., 
        $\pi_{t+1} \longleftarrow \pi_{t}$
	\EndFor
  \end{algorithmic}
\label{algorithm:PG_GPI}
\end{algorithm}
Note that we use subscript $t-1 \rightarrow t$ (Line 13 in Algorithm \ref{algorithm:PG_GPI}) to let the updated value functions to correspond to the evaluated policy $\pi_t$ during policy evaluation process in current iteration.

As a comparison, a new form of GPI with PeVFA is shown in Algorithm \ref{algorithm:PG_PeVFA}.
Except for the different parameterization of value function,
PeVFA can perform additionally training on historical policy experiences at each iteration (Line 7-8).
This is naturally compatible with PeVFA since it develops the capacity of conventional value function to preserve the values of multiple policies.
Such a training is to improve the value generalization of PeVFA among a policy set or policy space. 
Note that for value approximation of current policy $\pi_t$ (Line 10-14),
the start points are generalized values of $\pi_t$ from historical approximation, i.e., $\mathbb{V}_{t-1}(s,\chi_{\pi_t})$ and $\mathbb{Q}_{t-1}(s,a,\chi_{\pi_t})$.
In another word, this is the place where local generalization steps (illustrated in Fig.\ref{figure:local_gen}) are.
One may compare with conventional start points ($V_{t-1}^{\pi}(s)$ and $Q_{t-1}^{\pi}(s,a)$, Line 13 in Algorithm \ref{algorithm:PG_GPI})
and see the difference,
e.g., $V_{t-1}^{\pi}(s) \Leftrightarrow V^{\pi_{{t-1}}}(s) \Leftrightarrow \mathbb{V}_{t-1}(s, \chi_{\pi_{t-1}})$ is different with $\mathbb{V}_{t-1}(s, \chi_{\pi_{t}})$,
where $\Leftrightarrow$ is used to denote an equivalence in definition.
As discussed in Sec. \ref{subsec:demonstrative_exp} and \ref{subsec:PeVFA_GPI},
we suggest that such local generalization steps help to reduce approximation error and thus improve efficiency during the learning process.

\begin{algorithm}
  \caption{Generalized policy iteration of model-free policy-based RL algorithm with PeVFAs ($\mathbb{V}(s,\chi_{\pi})$ or $\mathbb{Q}(s,a,\chi_{\pi})$) }
  \begin{algorithmic}[1]
    \State Initialize policy $\pi_{0}$ and PeVFA $\mathbb{V}_{-1}(s,\chi_{\pi})$ or $\mathbb{Q}_{-1}(s,a,\chi_{\pi})$ 
    \State Initialize experience buffer $\mathcal{D}$ 
	\For{iteration t $= 0, 1, 2, \dots$}
        \State Rollout policy $\pi_{t}$ in the environment and obtain trajectories $\mathcal{T}_{t} = \{\tau_{i}\}_{i=0}^{k}$
        \State Get the policy representation $\chi_{\pi_t}$ for policy $\pi_t$ (from policy network parameters or policy rollout experiences)
        \State Add experiences $(\chi_{\pi_t}, \mathcal{T}_{t})$ in buffer $\mathcal{D}$
        \State \textcolor{gray}{{\# Value approximation training for historical policies $\{\pi_i\}_{i=0}^{t-1}$}}
        \State Update PeVFA $\mathbb{V}_{t-1}(s,\chi_{\pi_i})$ or $\mathbb{Q}_{t-1}(s,a,\chi_{\pi_i})$ with all historical policy experiences $\{(\chi_{\pi_i}, \mathcal{T}_{i})\}_{i=0}^{t-1}$
        \State \textcolor{gray}{{\# Conventional value approximation training for current policy $\pi_t$}}
        \If {\emph{on-policy update}} 
            \State Update PeVFA $\mathbb{V}_{t-1}(s,\chi_{\pi_t})$ or $\mathbb{Q}_{t-1}(s,a,\chi_{\pi_t})$ for $\pi_t$ with on-policy experiences $(\chi_{\pi_t}, \mathcal{T}_{t})$
        \ElsIf {\emph{off-policy update}}
            \State Update PeVFA $\mathbb{V}_{t-1}(s,\chi_{\pi_t})$ or $\mathbb{Q}_{t-1}(s,a,\chi_{\pi_t})$ for $\pi_t$ with off-policy experiences $\chi_{\pi_t}$ and $\{\mathcal{T}_{i}\}_{i=0}^{t}$
            from experience buffer $\mathcal{D}$
        \EndIf
        \State $\mathbb{V}_{t} \longleftarrow \mathbb{V}_{t-1}$ or $\mathbb{Q}_{t} \longleftarrow \mathbb{Q}_{t-1}$
        \State Update policy $\pi_t$ with regard to $\mathbb{V}_{t}(s,\chi_{\pi_t})$ or $\mathbb{Q}_{t}(s,a,\chi_{\pi_t})$ through some policy gradient theorem,
        i.e., $\pi_{t+1} \longleftarrow \pi_{t}$
	\EndFor
  \end{algorithmic}
\label{algorithm:PG_PeVFA}
\end{algorithm}

\subsection{More Discussions on GPI with PeVFA}
\label{app:discussion_on_GPI_PeVFA}

\paragraph{Off-Policy Learning.}
Off-policy Value Estimation \citep{SuttonB98} denotes to evaluate the values of some target policy from data collected by some behave policy.
As commonly seen in RL (also shown in Line 6-10 in Algorithm \ref{algorithm:PG_GPI}),
different algorithms adopt on-policy or off-policy methods.
For GPI with PeVFA, especially for the value estimation of historical policies (Line 8 in Algorithm \ref{algorithm:PG_PeVFA}),
on-policy and off-policy methods can also be considered here.
One interesting thing is, in off-policy case, one can use experiences from any policy for the learning of another one,
which can be appealing since the high data efficiency of value estimation of each policy can strengthen value generalization among themselves with PeVFA, which further improve the value estimation process.

\paragraph{Convergence of GPI with PeVFA.}
Convergence of GPI is usually discussed in some ideal cases, e.g., with small and finite state action spaces and with sufficient function approximation ability.
In this paper, we focus on the comparison between conventional VFA and PeVFA in value estimation, i.e., Policy Evaluation,
and we make no assumption on the Policy Improvement part.
We conjecture that with the same policy improvement algorithm and sufficient function approximation ability, GPI with conventional VFA and GPI with PeVFA finally converge to the same policy.
Moreover, based on Theorem \ref{thm:closer_target} and our empirical evidence in Sec. \ref{subsec:demonstrative_exp}, GPI with PeVFA can be more efficient in some cases:
with local generalization, it could take less experiences (training) for PeVFA to reach the same level of approximation error than conventional VFA,
or with the same amount of experience (training), PeVFA could achieve lower approximation error than conventional VFA.
We believe that a deeper dive in convergence analysis is worth further investigation.

\paragraph{PeVFA with TD Value Estimation.}
In this paper, we propose PPO-PeVFA as a representative instance of re-implementing DRL algorithms with PeVFA. 
Our theoretical results and algorithm \ref{algorithm:PG_PeVFA} proposed under the general policy iteration (GPI) paradigm 
are suitable for TD value estimation as well in principle. 
One potential thing that deserves further investigation is that, 
it can be a more complex generalization problem since the approximation target of TD learning is moving (in contrast to the stationary target when unbiased Monte Carlo estimates are used). 
The non-stationarity induced by TD is recognized to hamper the generalization performance in RL as pointed out in recent work \citep{Igl20Impact}. 
Further study on PeVFA with TD learning (e.g., TD3 and SAC) is planned in the future as mentioned in Sec. \ref{sec:discussion}.

\paragraph{PeVFA and value-based RL.}
We found it is interesting to consider the integration of PeVFA and value-based RL, e.g., Q-Learning, since no explicit policy exists in these cases
The main two different things to consider when we try to apply PeVFA with value-based RL, are 1) the policy representation of an implicit policy and 2) the backup of bellman optimality equation.
We can handle the former with SPR in principle. For the later, due to bellman optimality equation, the LHS and RHS of the typical Q-Learning backup may need to have different policy representations in PeVFA version.
This may be intriguing and more complex especially when bootstrapping is involved. We leave it for future work.

\subsection{More Discussions on PeVFA, Training VFA of PPO More and PPG}
\label{app:discussion_PPG}
One key characteristic of PeVFA is that it performs on-policy value training for the policies along policy improvement path that are previous to the current one, with corresponding historical data collected.

In principle, we can do \textit{more training on on-policy data} in conventional PPO by increase the \textit{epoch} of VFA training (10 commonly used for MuJoCo tasks).
Actually, we had made a study on this and we provide the results in Figure \ref{figure:epoch_MuJoCo} below.
Fewer or more training of VFA in conventional PPO is not beneficial. 

Besides, we can also do \textit{more training on off-policy data (i.e., historical experience)} in conventional PPO.
This can be approximately evaluated according to PPO-PeVFA with Random Policy Representation (Ran PR) in our experiments, since the meaningless policy representation can be viewed as noise input.

In Phasic Policy Gradient (PPG) \cite{CobbeHKS21PPG}, it makes use of \textit{near on-policy data for more training of VFA}. In their paper, this is also demonstrated to be not necessary and can be achieved by increasing the epoch number for VFA (see their Figure 9 and the sentence `Although it is convenient to optimize $L^{value}$ during the auxiliary phase as well, it is not strictly necessary').

\begin{figure}
\centering
\subfigure{
\includegraphics[width=0.35\textwidth]{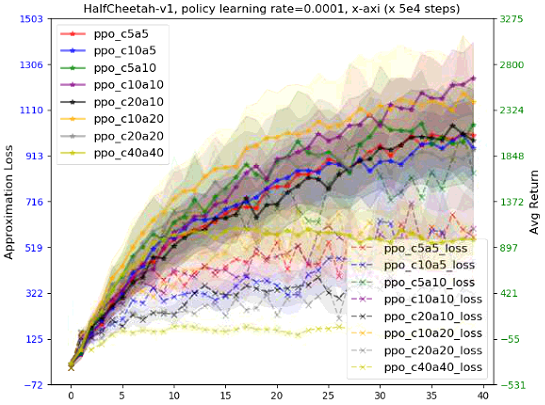}
}
\hspace{0.4cm}
\subfigure{
\includegraphics[width=0.35\textwidth]{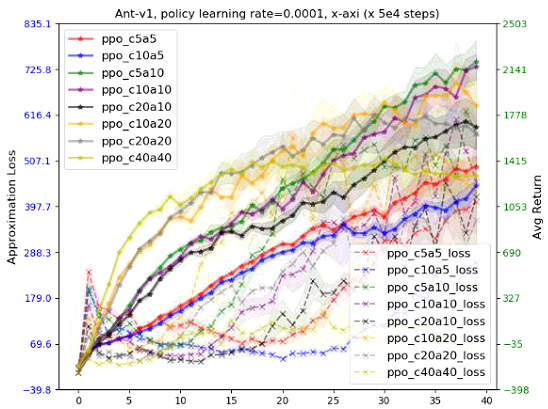}
}
\caption{PPO with different epoch settings. For example, c20a10 denotes 20 epochs for VFA and 10 epochs for policy training at each iteration.
The number of epochs for VFA is set to 10 commonly for MuJoCo tasks. Fewer or more training of VFA in conventional PPO is not beneficial. 
}
\label{figure:epoch_MuJoCo}
\end{figure}

\section{Policy Representation Learning Details}
\label{app:PR}

\subsection{Policy Geometry}
\label{app:policy_geom}
A policy $\pi \in \Pi = \mathcal{P}(\mathcal{A})^{\mathcal{S}}$, defines the behavior (action distribution) of the agent under each state.
For a more intuitive view, we consider the geometrical shape of a policy:
all state $s \in \mathcal{S}$ and all action $a \in \mathcal{A}$ are arranged along the $x$-axis and $y$-axis of a 2-dimensional plane,
and the probability (density) $\pi(a|s)$ is the value of $z$-axis over the 2-dimensional plane.
Note that for finite state space and finite action space (discrete action space), the policy can be viewed as a $|\mathcal{S}|\times|\mathcal{A}|$ table with each entry in it is the probability of the corresponding state-action case.
Without loss of generality, we consider the continuous state and action space and the policy geometry here.
Illustrations of policy geometry examples are shown in Fig.\ref{figure:policy_geom}.

Fig.\ref{figure:arbitary_p} shows the policy geometry in a general case, where the policy can be defined arbitrarily.
Generally, the policy geometry can be any possible geometrical shape (s.t. $\forall s \in \mathcal{S}, \sum_{a \in \mathcal{A}}\pi(a|s) = 1$).
This means that the policy geometry is not necessarily continuous or differentiable in a general case.
Specially, one can imagine that the geometry of a deterministic policy consists of peak points ($z = 1$) for each state and other flat regions ($z = 0$).
Fig.\ref{figure:continuous_p} shows an example of synthetic continuous policy which can be viewed as a 3D curved surface.
In Deep RL, a policy may usually be modeled as a deep neural network.
Assume that the neural policy is a function that is almost continuous and differentiable everywhere,
the geometry of such a neural policy can also be continuous and differentiable almost everywhere.
As shown in Fig.\ref{figure:smoothed_p}, 
we provide a demo of neural policy by smoothing an arbitrary policy along both state and action axes.

\begin{figure}[ht]
\centering
\hspace{-0.5cm}
\subfigure[Arbitrary policy]{
\includegraphics[width=0.34\textwidth]{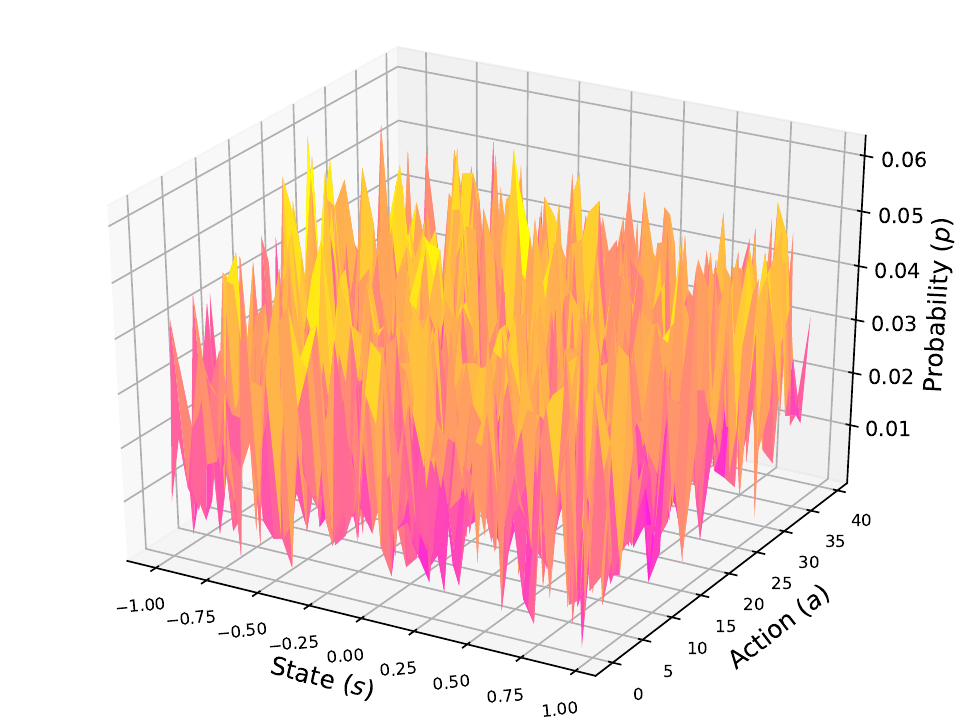}
\label{figure:arbitary_p}
}
\hspace{-0.4cm}
\subfigure[Continuous policy]{
\includegraphics[width=0.34\textwidth]{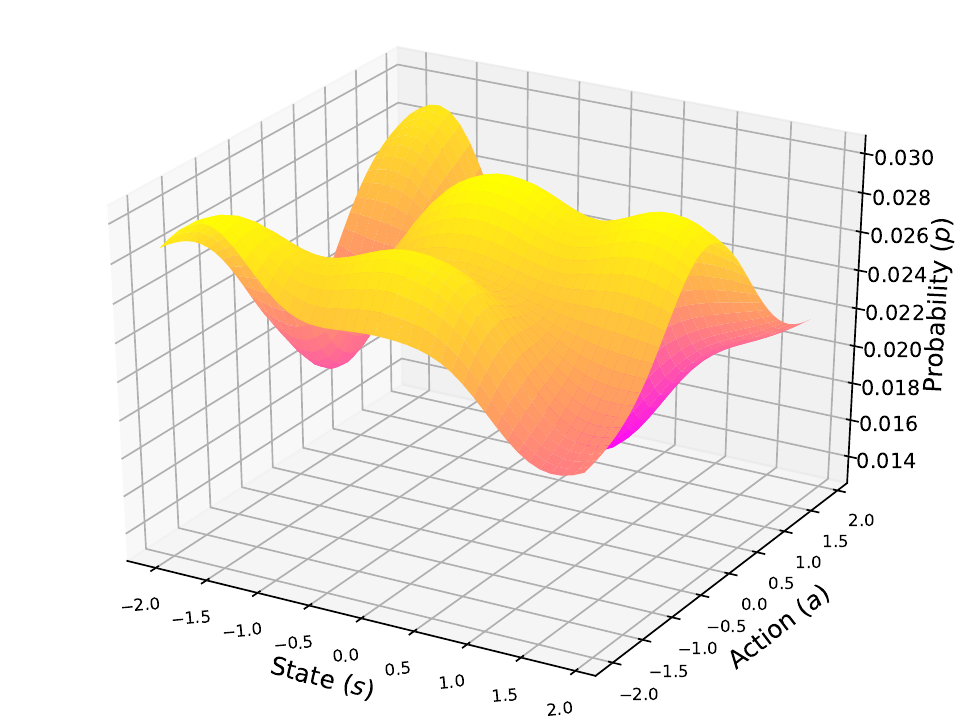}
\label{figure:continuous_p}
}
\hspace{-0.4cm}
\subfigure[Demo of neural policy]{
\includegraphics[width=0.34\textwidth]{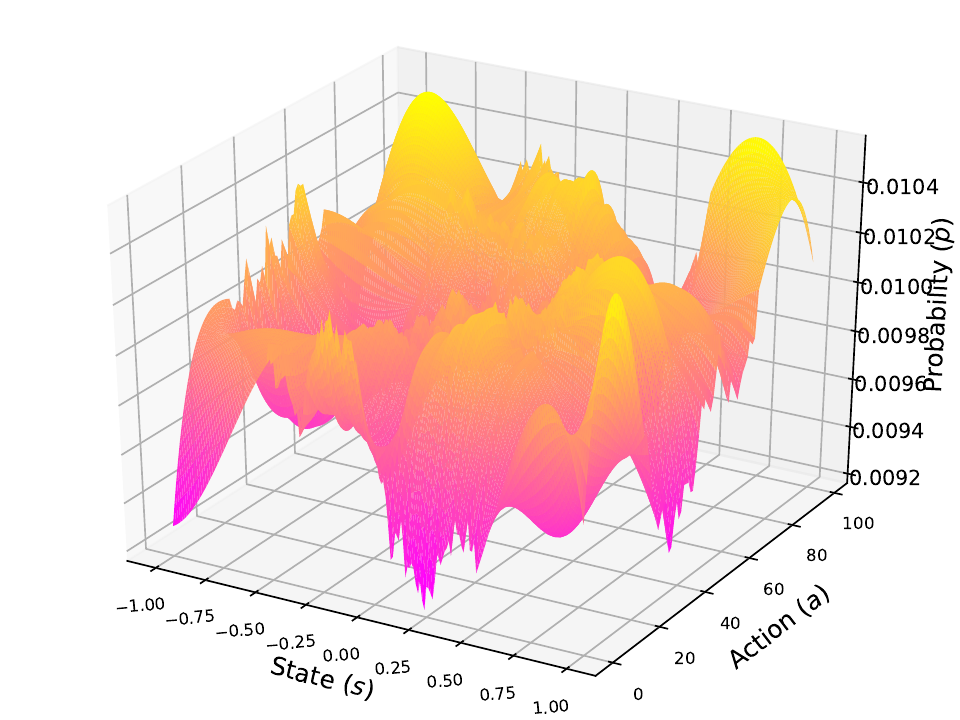}
\label{figure:smoothed_p}
}
\caption{
Examples of policy geometry. 
(a) An arbitrary policy, where $p(s,a)$ is sampled from $\mathcal{N}(0,1)$ for a joint space of 40 states and 40 actions and then normalized along action axis.
States are squeezed into the range of $[-1, 1]$ for clarity.
(b) A synthetic continuous policy with $p(s,a) = (1 - a^5 + s^5) \exp(-s^2 - a^2)$ for a joint space of $s \in [-2,2]$ and $a \in [-2, 2]$ (each of which are discretized into 40 ones) and then normalized along action axis.
(c) A general demo of neural network policy, generated from an arbitrary policy (as in (a)) over a joint space of 200 states and 100 actions with some smoothing skill.
States are squeezed into the range of $[-1, 1]$ for clarity and the probability masses of actions under each state are normalized to sum into 1.
}
\label{figure:policy_geom}
\end{figure}

\subsection{Implementation Details of Surface Policy Representation (SPR) and Origin Policy Representation (OPR)}
\label{app:opr_details}
Here we provide a detailed description of how to encode different policy data for Surface Policy Representation (SPR) and Origin Policy Representation (OPR) we introduced in Sec. \ref{sec:policy_repres}.

\paragraph{Encoding of State-action Pairs for SPR.}
Given a set of state-action pairs $\{s_i, a_i\}_{i=1}^{n}$ (with size $[n, s\_dim + a\_dim]$) generated by policy $\pi$ (i.e., $a_i \sim \pi(\cdot|s_i)$),
we concatenate each state-action pair and obtain an embedding of it by feeding it into an MLP, resulting in a stack of state-action embedding with size $[n, e\_dim]$.
After this, we perform a mean-reduce operator on the stack and obtain an SPR with size $[1, e\_dim]$.
A similar permutation-invariant transformation is previously adopted to encode trajectories in \citep{GroverAGBE18MAPR}.

\paragraph{Encoding of Network Parameters for OPR.}
We propose a novel way to learn low-dimensional embedding from policy network parameters directly.
To our knowledge, we are the first to learn policy embedding from neural network parameters in RL.
Note that latent space of neural networks are also studied in differentiable Network Architecture Search (NAS) \citep{LiuSY19DARTS,LuoTQCL18NAO},
where architecture-level embedding are usually considered.
In contrast, OPR cares about parameter-level embedding with a given architecture.

Consider a policy network to be an MLP with well-represented state (e.g., CNN for pixels, LSTM for sequences) as input and deterministic or stochastic policy output.
We compress all the weights and biases of the MLP to obtain an OPR that represents the decision function.
The encoding process of an MLP with two hidden layers is illustrated in Fig.\ref{figure:opr_encoder}.
The main idea is to perform permutation-invariant transformation for inner-layer weights and biases for each layer first.
For each unit of some layer, we view the unit as a non-linear function of all outputs, determined by weights, a bias term and activation function.
Thus, the whole layer can be viewed as a batch of operations of previous outputs, e.g., with the shape $[h_{t}, h_{t-1} + 1]$ for $t \ge 1$ and $t=0$ is also for the input layer.
Note that we neglect activation function in the encoding since we consider the policy network structure is given.
That is also why we call OPR as parameter-level embedding in contrast to architecture-level enbedding in NAS (mentioned in the last paragraph).
We then feed the operation batch in an MLP and perform mean-reduce to outputs.
Finally we concatenate encoding of layers and obtain the OPR.

We use permutation-invariant transformation for OPR because that we suggest the operation batch introduced in the last paragraph can be permutation-invariant.
Actually, our encoding shown in Fig.\ref{figure:opr_encoder} is not strict to obtain permutation-invariant representation since inter-layer dependencies are not included during the encoding process.
We also tried to incorporate the order information during OPR encoding and we found similar results with the way we present in Fig.\ref{figure:opr_encoder}, which we adopt in our experiments. 

\begin{figure}[ht]
\centering
\includegraphics[width=0.8\textwidth]{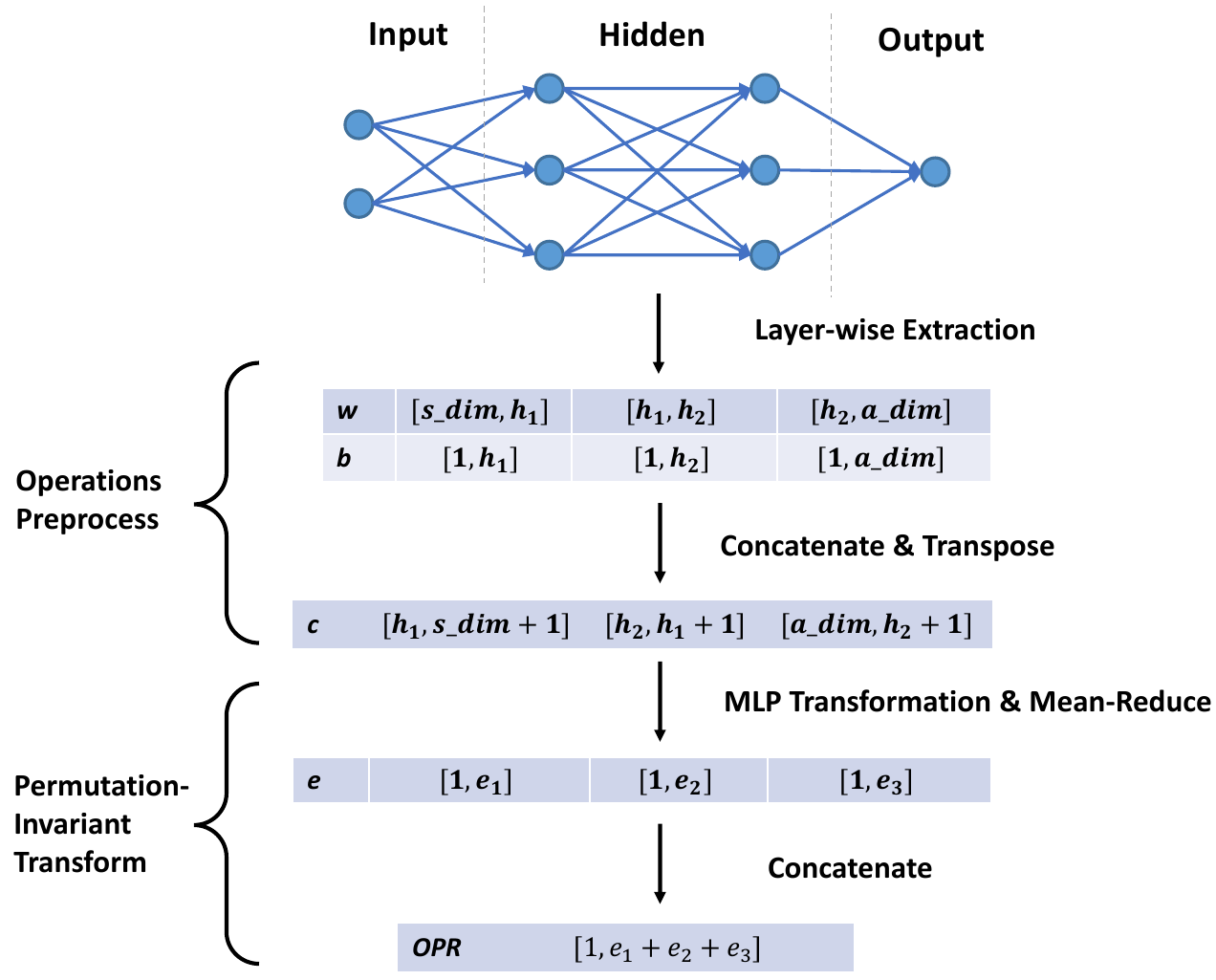}
\caption{
An illustration for policy encoder of Origin Policy Representation (OPR) for a two-layer MLP.
$h_1, h_2$ denotes the numbers of hidden units for the first and second hidden layers respectively.
The main idea is to perform permutation-invariant transformation for inner-layer weights and biases for each layer first and then concatenate encoding of layers.
}
\label{figure:opr_encoder}
\end{figure}

\paragraph{Towards more sophisticated RL policy that operates images.}
Our proposed two policy representations (i.e., OPR and SPR) can basically be applied to encode policies that operate images, 
with the support of advanced image-based state representation. 
For OPR, a policy network with image input usually has a pixel feature extractor like Convolutional Neural Networks (CNNs) followed by a decision model (e.g., an MLP). 
With effective features extracted, the decision model can be of moderate (or relatively small) scale. 
Recent works on unsupervised representation learning like MoCo \citep{He0WXG20MoCo}, SimCLR \citep{Chen20SimCLR}, CURL \citep{Srinivas20CURL} 
also show that a linear classifier or a simple MLP which takes compact representation of images learned in an unsupervised fashion 
is capable of solving image classification and image-based continuous control tasks. 
In another direction, it is promising to develop more efficient even gradient-free OPR, 
for example using the statistics of network parameters in some way instead of all parameters as similarly considered in \citep{Unterthiner20predict}.

For SPR, to encode state-action pairs (or sequences) with image states can be converted to the encoding in the latent space. 
The construction of latent space usually involves self-supervised representation learning, e.g., image reconstruction, dynamics prediction. 
A similar scenario can be found in recent model-based RL like Dreamer \citep{HafnerLB020Dream}, where the imagination is efficiently carried out in the latent state space rather than among original image observations. 

Overall, we believe that there remain more effective approaches to represent RL policy to be developed in the future 
in a general direction of OPR and SPR, which are expected to induce better value generalization in a different RL problems.

\subsection{Data Augmentation for SPR and OPR in Contrastive Learning}
Data augmentation is studied to be an important component in contrastive learning in deep RL recently \citep{Kostrikov20DrQ,Laskin20RAD}.
Contrastive learning usually resorts to data augmentation to build positive samples.
Data augmentation is typically performed on pixel inputs (e.g., images) problems \citep{He0WXG20MoCo,Chen20SimCLR}.
In our work, we train policy representation with contrastive learning where data augmentation is performed on policy data.
For SPR, i.e., state-action pairs as policy data, there is no need to perform data augmentation since different batches of randomly sampled state-action pairs naturally forms positive samples,
since they all reflect the behavior of the same policy.
A similar idea can also be found in \citep{Fu20CMRL} when dealing with task context in Meta-RL.

For OPR, i.e., policy network parameters as policy data, it is unclear how to perform data augmentation on them.
In this work, we consider two kinds of data augmentation for policy network parameters as shown in Fig.\ref{figure:opr_data_aug}.
We found similar results for both random mask and noise corruption, 
and we use random mask as default data augmentation in our experiments. 

\begin{figure}[ht]
\centering
\includegraphics[width=0.95\textwidth]{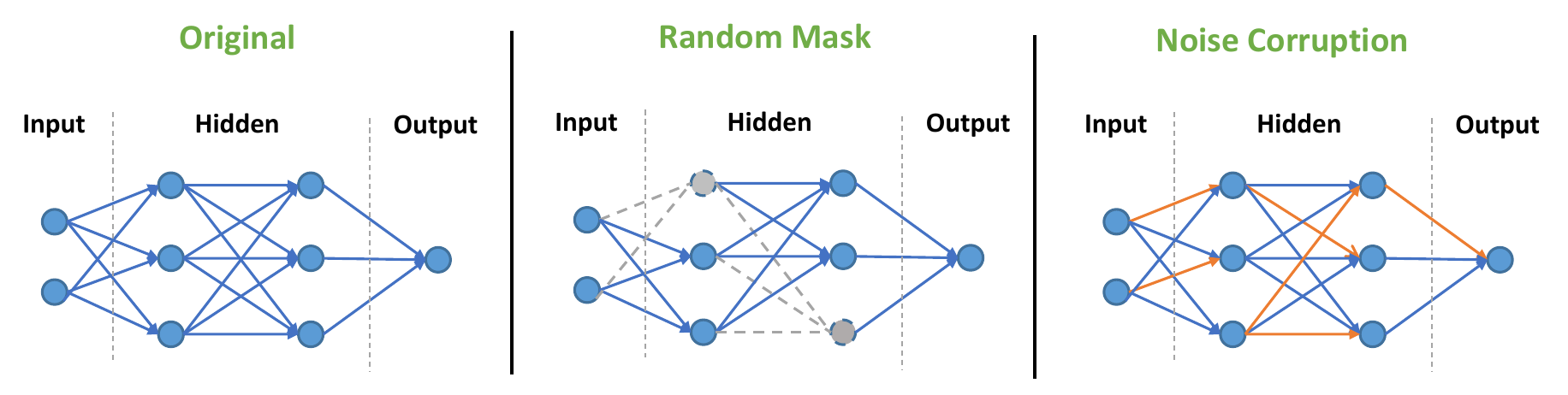}
\caption{
Examples of data augmentation on policy network parameters for Origin Policy Representation (OPR).
\emph{Left}: an example of original policy network.
\emph{Middle}: dropout-like random masks are performed on original policy network, where gray dashed lines represent the weights masked out.
\emph{Right}: randomly selected weights are corrupted by random noises, denoted by orange lines. 
}
\label{figure:opr_data_aug}
\end{figure}

As an unsupervised representation learning method, 
contrastive Learning
encourages policies to be close to similar ones (i.e., positive samples $\pi^{+}$)
and to be apart from different ones (i.e., negative samples $\pi^{-}$) in policy representation space.
The policy representation network is then trained with InfoNCE loss \citep{Oord18CPC},
i.e., to minimize the cross-entropy loss below:
\begin{equation*}
    \mathcal{L}_{\text{CL}} = 
    - \mathbb{E} \left[ \log \frac{\exp(\chi_{\pi}^{T} W \chi_{\pi^{+}})}{\exp(\chi_{\pi}^{T} W \chi_{\pi^{+}}) + \sum_{\pi^{-}} \exp(\chi_{\pi}^{T} W \chi_{\pi^{-}})}
    \right]
\end{equation*}

\subsection{Pseudo-code of Policy Representation Learning Framework}
\label{app:overall_view_pr}

The pseudo-code of the overall framework of policy representation learning is in Algorithm \ref{algorithm:overall_view_pr}.
The policy representation learning is conducted base on a policy dataset, which stores the policy data, i.e., interaction trajectories generated by policies and the parameters of policy networks.
Such a dataset can be obtained in different ways, e.g., pre-collected, online collected and etc.
In our experiments, we do not assume the access to pre-collected or given policy data; instead, we use the data of all historical policies met along the policy improvement path during the online learning process.

Different kinds of policy data (i.e., state-action pairs or policy parameters) are used depending on the policy representation adopted (i.e., SPR or OPR). 
For policy representation learning, the value function approximation loss (E2E) is used as a default choice of training loss in our framework.
In addition, the auxiliary loss (AUX) of policy recovery and contrastive learning (CL) serve as another two options to be optimized for representation learning.
Note that in Line 21, the positive samples $\chi_{\pi_i^+}$ is obtained from a momentum policy encoder \citep{He0WXG20MoCo} with another augmentation for corresponding policy data,
while negative samples $\chi_{\pi_i^-}$ are other policy embeddings in the same batch, i.e., $\chi_{\pi_i^-} \in B \backslash \{\chi_{\pi_i}\}$.

\begin{algorithm}
  \caption{A Framework of Policy Representation Learning}
  \textbf{Input:} policy dataset $\mathbb{D} = \{(\pi_i, \omega_i, \mathcal{D}_{\pi_i})\}_{i=1}^n$, 
  consisting of policy $\pi_i$, policy parameters $\omega_i$ and state-action pairs $\mathcal{D}_{\pi_i} = \{(s_j, a_j)\}_{j=1}^m$
  \begin{algorithmic}[1]
    \State Initialize the policy encoder $g_{\alpha}$ with parameters $\alpha$
    \State Initialize the policy decoder (or master policy) (network) $\Bar{\pi}_{\beta}(a|s,\chi_{\pi})$ for SPR and the weight matrix $W$ for ORP respectively
	\For{iteration $i$ $= 0, 1, 2, \dots$}
	    \State Sample a mini-batch of policy data $\mathcal{B}$ from $\mathbb{D}$
	    \State \textcolor{gray}{\# Encode and obtain the policy embedding $\chi_{\pi_i}$ with SPR or OPR}
	    \If {Use OPR}
	        \If {Use Contrastive Learning}
	            \State Perform data augmentation on each $w_i \in \mathcal{B}$ 
	        \EndIf
	        \State $\chi_{\pi_i} = g_{\alpha}^{\text{OPR}}(\omega_{\pi_i})$ for each $(\pi_i, \omega_i, \cdot) \in \mathcal{B}$ 
	    \ElsIf{Use SPR}
	        \State $\chi_{\pi_i} = g_{\alpha}^{\text{SPR}}(B_{i})$ where $B_{i}$ is a mini-batch of state-action pairs sampled from $\mathcal{D}_{\pi_i}$,
	        for each $(\pi_i, \cdot, \mathcal{D}_{\pi_i}) \in \mathcal{B}$ 
	    \EndIf
	    \State \textcolor{gray}{\# Train policy encoder $g_{\alpha}$ in different ways (i.e., AUX or CL)}
	    \If{Use Auxiliary Loss (AUX)}
	        \State Sample a mini-batch of state-action pairs $B = {(s_i, a_i)}_{i=1}^b$ from $\mathcal{D}_{\pi_{i}}$ for each $\pi_i$
	        \State Compute the auxiliary loss, $\mathcal{L}_{\text{Aux}} = - \sum_{(s_i,a_i) \in B} \log \Bar{\pi}_{\alpha}(a_i|s_i,\chi_{\pi_i})$
	        \State Update parameters $\alpha,\beta$ to minimize $\mathcal{L}^{\text{Aux}}$
	    \EndIf
	    \If{Use Contrastive Learning (CL)}
	        \State Calculate contrastive loss, 
	        $\mathcal{L}_{\text{CL}} = - \sum_{\chi_{\pi_i} \in B} \log \frac{\exp(\chi_{\pi_i}^{T} W \chi_{\pi_i^{+}})}{\exp(\chi_{\pi_i}^{T} W \chi_{\pi_i^{+}}) + \sum_{\pi_i^{-}} \exp(\chi_{\pi_i}^{T} W \chi_{\pi_i^{-}})}$,
	        where $\chi_{\pi_i^+},\chi_{\pi_i^-}$ are positive and negative samples
	        \State Update parameters $\alpha, W$ to minimize $\mathcal{L}^{\text{CL}}$
	    \EndIf
	    \State \textcolor{gray}{\# Train policy encoder $g_{\alpha}$ with the PeVFA approximation loss (E2E)}
	    \State Calculate the value approximation loss of PeVFA, $\mathcal{L}_{\text{Val}}$
	    \State Update parameters $\alpha$ to minimize $\mathcal{L}_{\text{Val}}$
	\EndFor
  \end{algorithmic}
\label{algorithm:overall_view_pr}
\end{algorithm}

\subsection{Criteria of A Good Policy Representation}
\label{app:critira}

To answer the question: what is a good representation for RL policy ought to be? 
We assume the following criteria:
\begin{itemize}
    \item 
    \emph{Dynamics}. 
    Intuitively, a good policy representation should contain the information of how the policy influences the environment (dynamics and rewards).
    \item 
    \emph{Consistency}.
    A good policy representation should keep the consistency among both policy space and presentation space.
    Concretely, the policy representation should be distinguishable, i.e., different policies also differ among their representation.
    In contrast, the representation of similar polices should lay on the close place in the representation space.
    \item 
    \emph{Geometry}.
    Additionally, from the lens of policy geometry as shown in Appendix \ref{app:policy_geom},
    a good policy representation should be an reflection of policy geometry.
    It should show a connection to the policy geometry or be interpretable from the geometric view.
\end{itemize}
From the perspective of above criteria, 
SPR follows \textit{Dynamics} and \textit{Geometry} 
while OPR may render them in an implicit way since network parameters determine the nonlinear function of policy.
Auxiliary loss for policy recovery (AUX) is a learning objective to acquire \textit{Dynamics};
Contrastive Learning (CL) is used to impose \textit{Consistency}.

Based on the above thoughts, we hypothesize the reasons of several findings as shown in the comparison in Tab.\ref{table:overall_evaluations}.
First, AUX naturally overlaps with SPR and OPR to some degree for \textit{Dynamics} while CL is relatively complementary to SPR and OPR for \textit{Consistency}.
This may be the reason why CL improves the E2E representation more than AUX in an overall view.
Second, the noise of state-action samples for SPR may be the reason to OPR's slightly better overall performance than that of SPR (similar results are also found in our visualizations as in Fig.\ref{figure:visual_spr}).

Moreover, the above criteria are mainly considered from an unsupervised or self-supervised perspective.
However, a sufficiently good representation of all the above properties may not be necessary for a specific downstream generalization or learning problem which utilizes the policy representation.
A problem-specific learning signal, e.g., the value approximation loss in our paper (E2E representation), can be efficient since it is to extract the most relevant information in policy representation for the problem.
A recent work \citep{Tsai20Demystify} also studies the relation between self-supervised representation and downstream tasks from the lens of mutual information.
Therefore, we suggest that a trade-off between good unsupervised properties and efficient problem-specific information of policy representation should be considered when using policy representation in a specific problem.

\section{Complete Background and Detailed Related Work}
\label{app:bg_more}

\subsection{Reinforcement Learning}
\label{app:detailed_mdp_description}
\paragraph{Markov Decision Process.}
We consider a Markov Decision Process (MDP) 
defined as
$\langle \mathcal{S}, \mathcal{A}, r, \mathcal{P}, \gamma , \rho_0 \rangle$ with $\mathcal{S}$ the state space, $\mathcal{A}$ the
action space, $r$ the reward function, $\mathcal{P}$ the transition
function, $\gamma \in [0,1)$ the discount factor
and $\rho_0$ the initial state distribution.
A policy $\pi \in \Pi = P(\mathcal{A})^{|\mathcal{S}|}$, defines the distribution over all actions for each state.
The agent interacts with the environment with its policy,
generating the trajectory $s_0, a_0, r_1, s_1, a_1, r_2, ..., s_t, a_t, r_{t+1}, ...$,
where $r_{t+1} = r(s_t,a_t)$.
An RL agent seeks for an optimal policy that
maximizes the expected long-term discounted return, 
$J(\pi) = \mathbb{E}_{s_0 \sim \rho_0, a \sim \pi} \big[ \sum_{t=0}^{\infty} \gamma^{t} r_{t+1} \big]$.

\paragraph{Value Function.}
Almost all RL algorithms involve value functions \citep{SuttonB98}, which estimate how good a state or a state-action pair is conditioned on a given policy.
The \emph{state-value function} $v^{\pi}(s)$ is defined in terms of the expected return obtained through following the policy $\pi$ from a state $s$:
\begin{equation*}
\label{eqation:VF}
    v^{\pi}(s) =  \mathbb{E}_{\pi}  \left[\sum_{t=0}^{\infty} \gamma^{t} r_{t+1}|s_0=s \right] \ \text{for all} \ s \in \mathcal{S}.
\end{equation*}
Similarly, \emph{action-value function} is defined for all state-action pairs
as $q^{\pi}(s,a) = \mathbb{E}_{\pi} \left[\sum_{t=0}^{\infty}\gamma^{t} r_{t+1}|s_0=s, a_0=a \right]$.
Typically, value functions are learned through Monte Carlo (MC) or Temporal Difference (TD) algorithms \citep{SuttonB98}.

Bellman equations defines the recursive relationships among value functions. 
The \emph{Bellman Expectation equation} of $v^{\pi}(s)$ has a matrix form as below \citep{SuttonB98}:
\begin{equation}
\label{equation:vectorBellmanV}
\begin{aligned}
    V^{\pi} = r^{\pi} + \gamma \mathcal{P}^{\pi} V^{\pi} = (\mathcal{I} - \gamma \mathcal{P}^{\pi})^{-1} r^{\pi},
\end{aligned}
\end{equation}
where $V^{\pi}$ is a $|\mathcal{S}|$-dimensional vector, $\mathcal{P}^{\pi}$ is the state-to-state transition matrix $\mathcal{P}^{\pi}(s^{\prime} | s) = \sum_{a \in \mathcal{A}} \pi(a|s) \mathcal{P}(s^{\prime} | s, a)$
and $r^{\pi}$ is the vector of expected rewards $r^{\pi}(s) = \sum_{a \in \mathcal{A}} \pi(a|s) r(s, a)$.
Eq.\ref{equation:vectorBellmanV} indicates that value function is determined by policy $\pi$ and environment models (i.e., $\mathcal{P}$ and $r$.
For a conventional value function, all of them are modeled implicitly within a table or a function approximator, 
i.e., a mapping from only states (and actions) to values.

\paragraph{Generalized Policy Iteration.}
\citet{SuttonB98} consider most RL algorithms can be described in the paradigm of Generalized Policy Iteration (GPI).
In recent decade, RL algorithms usually resort to function approximation (e.g., deep neural networks) to deal with large and continuous state space.
An illustration of GPI with function approximation is on the left of Fig.\ref{figure:GPI}.
We use $\theta$ to denote the parameters of parameterized value functions.
Without loss of generality, we do not plot the parameters of policy since it is not necessary for parameterized policy to exist, e.g., value-based RL algorithms \citep{Mnih2015DQN}.
For policy evaluation, value function approximators are updated in finite times to approximate the true values 
(i.e., $V_{\theta}(s) \rightarrow v^{\pi}(s)$, $Q_{\theta}(s,a) \rightarrow q^{\pi}(s,a)$), 
yet can never be perfect.
For policy improvement, the policy are improved with respected to the approximated value functions in an implicit (e.g., value-based RL) or explicit way (policy-based RL).
In deep RL, perfect policy evaluation and effective policy improvement are non-trivial to obtain with complex non-linear function approximation from deep neural networks,
thus most convergence and optimality results in conventional RL usually no longer hold.
From these two aspects, many works study how to improve the value function approximation 
\citep{Hasselt10DoubleQ,BellemareDM17C51,LanPFW20MMQ} 
and to propose more effective policy optimization or search algorithms 
\citep{Schulman2015TRPO,Schulman2017PPO,HaarnojaZAL18SAC}.

\subsection{A Unified View of Extensions of Conventional Value Function from the Vector Form of Bellman Equation}
\label{app:unified_view_of_extensions}

Recall the vector form of Bellman equation (Eq.\ref{equation:vectorBellmanV}),
it indicates that value function is a function of policy $\pi$ and environmental models (i.e., $\mathcal{P}$ and $r$).
In conventional value functions and approximators, only state (and action) is usually taken as input
while other components in Eq.\ref{equation:vectorBellmanV} are modeled implicitly.
Beyond state (and action), 
consider explicit representation of some of components in Eq.\ref{equation:vectorBellmanV} during value estimation 
can develop the ability of conventional value functions in different ways,
to solve challenging problems,
e.g., goal-conditioned RL \citep{SchaulHGS15UVFA,AndrychowiczCRS17HER}, Hierarchical RL \citep{Nachum19HIRO,WangY0R20I2HRL}, opponent modeling and ad-hoc team \citep{HeB16Opponent,GroverAGBE18MAPR,TacchettiSMZKRG19RMMA}, and context-based Meta-RL \citep{RakellyZFLQ19PEARL,Lee20CADM}.

Most extensions of conventional VFA mentioned above are proposed for the purpose of value generalization (among different space).
Therefore, we suggest such extensions are derived from the same start point (i.e., Eq.\ref{equation:vectorBellmanV}) and differ at the objective to represent and take as additional input explicitly of conventional value functions.
We provide a unified view of such extensions below:
\begin{itemize}
    \item Goal-conditioned RL and context-based meta-RL usually focus on a series of tasks with similar goals and environment models (i.e., $\mathcal{P}$ and $r$).
    With goal representation as input, usually a subspace of state space \citep{SchaulHGS15UVFA,AndrychowiczCRS17HER}, a value function approximation (VFA) can generalize values among goal space.
    Similarly, with context representation \citep{RakellyZFLQ19PEARL,Fu20CMRL,Raileanu20PDVF}, values generalize in meta tasks.
    \item Opponent modeling, ad-hoc team \citep{HeB16Opponent,GroverAGBE18MAPR,TacchettiSMZKRG19RMMA} seek to generalize among different opponents or teammates in a Multiagent System, with learned representation of opponents.
    This can be viewed as a special case of value generalization among environment models since from one agent view,
    other opponents are part of the environment which also determines the dynamics and rewards.
    In multiagent case, one can expand and decompose the corresponded joint policy in Eq.\ref{equation:vectorBellmanV} to see this.
    \item Hierarchical RL is also a special case of value generalization among environment models.
    In goal-reaching fashioned Hierarchical HRL \citep{Nachum19HIRO,LevyKPS19HAC,NachumGLL19NearOpt},
    high-level controllers (policy) issue goals for low-level controls at an abstract temporal scale, 
    while low-level controls take goals also as input and aim to reach the goals.
    For low-level policies, a VFA with a given or learned goal representation space can generalize values among different goals, similar to the goal-conditioned RL case as discussed above.
    Another perspective is to view the separate learning process of hierarchical policies for different levels as a multiagent learning system.
    Recently, a work \citep{WangY0R20I2HRL} follows this view and extends high-level policy with representation of low-level learning.
\end{itemize}
The common thing of above is that, they learn a representation of the environment (we call \emph{external variables}).
In contrast, we study value generalization among agent's own policies in this paper, which cares about \emph{internal variables}, 
i.e., the learning dynamics inside of the agent itself.

\paragraph{Relation between PeVFA Value Approximation and Context-based Meta-RL.}
For a given MDP, performing a policy in the MDP actually induces a Markov Reward Process (MRP) \citep{SuttonB98}.
One can view the policy and actions are absorbed in the transition function of MRP.
A value function defines the expected long-term returns starting from a state.
Therefore, different policies induces different MRPs and PeVFA value approximation can be considered as a meta prediction task.
In analogy to context-based Meta-RL 
where a task context is learned to capture the underlying transition function and reward function of a MDP (i.e., task), 
one can view policy representation as the context of corresponding MRP, 
since it is the underlying variable that determines the transition function of MRPs.

\subsection{A Review of Works on Policy Representation/Embedding Learning}
\label{app:pr_related_works}
Recent years, a few works involve representation or embedding learning for RL policy 
\citep{HausmanS0HR18PESkill,GroverAGBE18MAPR,ArnekvistKS19VPE,Raileanu20PDVF,WangY0R20I2HRL,Harb20PENs}.
We provide a brief review and summary for above works below.

The most common way to learn a policy representation is to extract from interaction trajectories through policy recovery (i.e., behavioral cloning).
For Multiagent Opponent Modeling \citep{GroverAGBE18MAPR},
a policy representation is learned from interaction episodes (i.e., state-action trajectories) through a \textit{generative loss} and \textit{discriminate loss}.
Generative loss is the same as the policy recovery auxiliary loss;
discriminate loss is a triplet loss that minimize the representation distance of the same policy and maximize those of different ones,
which has the similar idea of Contrastive Learning \citep{Oord18CPC,Srinivas20CURL}.
Such opponent policy representations are used for prediction of interaction outcomes for ad-hoc teams and are taken in policy network for some learning agent to facilitate the learning when cooperating or competing with unknown opponents.
More recently, in Hierarchical RL \citep{WangY0R20I2HRL},
a representation is learned to model the low-level policy through \textit{generative loss} mentioned above.
The low-level policy representation is taken in high-level policy to counter the non-stationarity issue of co-learning of hierarchical policies.
Later, \citet{Raileanu20PDVF} resort to almost the same method and the learned policy representation is taken in their proposed PDVF.
Along with a task context, the policy for a specific task can be optimized in policy representation space,
inducing a fast adaptation in new tasks.
In summary, such a representation learning paradigm can be considered as Surface Policy Representation (SPR) for policy data encoding (trajectories as a special form of state-action pairs) plus policy recovery auxiliary loss (AUX) as we introduced in Sec. \ref{sec:policy_repres}.

A recent work \citep{Harb20PENs} proposes Policy Evaluation Network (PVN) to approximate objective function $J(\pi)$.
We consider PVN as an predecessor of PDVF we mentioned above 
since offline policy optimization is also conducted in learned representation space in a single task after similarly well training the PVN on many policies.
The authors propose \textit{Network Fingerprint} to represent policy network.
To circumvent the difficulty of representing the parameters directly, 
policy network outputs (policy distribution) under a set of \textit{probing states} are concatenated and then taken as policy representation.
Such probing states are randomly sampled for initialization and also optimized with gradients through PVN and policies, 
like a search in joint state space.
In principle, we also consider this as a special instance of SPR,
because network fingerprint follows the idea of reflecting the information of how policy can behave under some states.
Intuitively from a geometric view, this can be viewed as using the concatenation of several representative (as denoted by the probing states) cross-sections in policy surface (e.g., Fig.\ref{figure:policy_geom}) to represent a policy.
For another view, one can imagine an equivalent case between SPR and network fingerprint,
when state-action pairs of a deterministic policy are processed in SPR 
and a representation consisting of a number of actions under some key states or representative states is used in network fingerprint.
Two potential issues may exist for network fingerprint.
First, the dimensionality of representation is proportional to the number of probing states (i.e., $n|\mathcal{A}|$), where a dilemma exists:
more probing states are more representative while dimensionality can increase correspondingly.
Second, it can be non-trivial and even unpractical to optimize probing states in the case with relatively state space of high dimension,
which introduces additional computational complexity and optimization difficulty.

In another concurrent work \citep{Faccio20PVFs}, a class of Parameter-based Value Functions (PVFs) are proposed.
Instead of learning or designing a representation of policy, PVFs simply parse all the policy weights as inputs to the value function (i.e., Raw Policy Representation as also mentioned in our paper), even in the nonlinear case.
Apparently, this can result in a unnecessarily large representation space which increase the difficulty of approximation and generalization.
The issues of naively flattening the policy into a vector input are also pointed out in PVN \citep{Harb20PENs}.

Others, several works in Policy Adaptation and Transfer \citep{HausmanS0HR18PESkill,ArnekvistKS19VPE}, 
Gaussian policy embedding representations are construct through Variantional Inference.

\subsection{More Discussions on GTPI}
\label{app:GTPI}
A widely known fact in conventional policy gradient RL is that, value functions are not directly differentiable to policy parameters, i.e., $\nabla_{\theta}q^{\pi_{\theta}}(s,a)$ and $\nabla_{\theta}v^{\pi_{\theta}}(s)$.
Therefore, for classical on-policy PG theorem proposed by \cite{SuttonB98}, $\nabla_{\theta}v^{\pi_{\theta}}(s)$ is recursively expanded and to add up infinite $\nabla_{\theta}\log \pi_{\theta}(a|s) \cdot q^{\pi_{\theta}(s,a)}$ terms which are differentiable.
For conventional off-policy PG theorems for both stochastic PG \cite{Degris2012OffPAC} and deterministic PG \cite{Silver2014DPG},
the $\nabla_{\theta}q^{\pi_{\theta}}(s,a)$ is dropped and only approximate gradients are used \cite{Faccio20PVFs}:
\begin{equation}
\begin{aligned}
    \nabla_{\theta}J(\pi_{\theta}) = \mathbb{E}_{s \sim d_{\mu}, a \sim \pi_{\theta}} \left[ \nabla_{\theta} \log \pi_{\theta}(a|s)q^{\pi_{\theta}(s,a)} + \nabla_{\theta} q^{\pi_{\theta}(s,a)} \right] \approx & \
    \mathbb{E}_{s \sim d_{\mu}, a \sim \pi_{\theta}} \left[ \nabla_{\theta} \log \pi_{\theta}(a|s)q^{\pi_{\theta}(s,a)}\right], \\
    \nabla_{\theta}J(\pi_{\theta}) = \mathbb{E}_{s \sim d_{\mu}} \left[ \nabla_{\theta} \pi_{\theta}(a|s) \nabla_{a}q^{\pi_{\theta}(s,a)}|_{a=\pi_{\theta}(s)} + \nabla_{\theta} q^{\pi_{\theta}(s,a)}|_{a=\pi_{\theta}(s)} \right] \approx & \
    \mathbb{E}_{s \sim d_{\mu}} \left[ \nabla_{\theta} \pi_{\theta}(a|s) \nabla_{a}q^{\pi_{\theta}(s,a)}|_{a=\pi_{\theta}(s)} \right], \\
\end{aligned}
\end{equation}
where $\mu$ is the behavior policy for off-policy sampling and $d_{\mu}$ is stationary state distribution of $\mu$.

Our related works, PVN \cite{Harb20PENs} and PVFs \cite{Faccio20PVFs} make use of the policy-based extension of conventional value function to let the extended value function be differentiable with policy parameters.
We denote such new gradients as the policy \textbf{g}radients by backpropagting \textbf{t}hrough the differentiable \textbf{p}olicy \textbf{i}nput, shortly GTPI,
with a general formal definition below:
\begin{defini}
\label{def:GTPI}
Consider a policy-extended value function $\mathbb{Q}(s,a,\chi_{\pi_{\theta}})$, where $\chi_{\pi_{\theta}}$ is a differentiable representation of policy $\pi_{\theta}$, obtained by representation function $g$, i.e., $\chi_{\pi_{\theta}} = g(\pi_{\theta})$.
We define GTPI to be the policy gradients $\nabla_{\theta}\mathbb{V}(s,\chi_{\pi_{\theta}})= \nabla_{\theta}\pi_{\theta} \nabla_{\pi_{\theta}}g(\pi_{\theta}) \nabla_{\chi_{\pi_{\theta}}} \mathbb{V}(s,\chi_{\pi_{\theta}})|_{\chi_{\pi_{\theta}} = g(\pi_{\theta})}$.
\end{defini}
According to Definition \ref{def:GTPI}, we can conclude the gradients used in PVN and PVFs by only considering different representation function $g$ used.
As mentioned in the main body of this paper, PVFs use policy network parameters as a straightforward representation.
PVN uses Network Fingerprint, i.e., concatenation of policy distribution under a set of representative states.
With GTPI, new ways of policy optimization can be obtained, as the ones proposed and evaluated in PVN and PVFs papers.
In general, we view such policy optimization based on GTPI as policy optimization in policy representation space.

We can see a few connections here:
\begin{itemize}
    \item First, we can view DPG \cite{Silver2014DPG} $\nabla_{\theta} \pi_{\theta}(a|s) \nabla_{a}q^{\pi_{\theta}(s,a)}|_{a=\pi_{\theta}(s)}$ as a special GTPI with $\chi_{\pi_{\theta}}(s) = a = \pi_{\theta}(s)$ as a local policy representation to optimize policy for state $s$.
    In this view, PVFs can be considered to perform a global policy update;
    while PVN is like to perform intermediate policy update on the states in the set of Network Fingerprint.
    
    \item Policy optimization in policy representation space can have a connection to gradient-based Bayesian Optimization (BO) \cite{Notin2021BlackBox,Grosnit2021HighD},
    policy (parameter) is a high-dimensional variable we want to optimize.
    According to the literature of BO, GTPI can face some optimization challenges.
    The objective function (i.e., policy-extended value function) is non-trivial to approximate due to the small number of policy samples, noisy reward signals, high-dimensional policy parameters. Without high-quality estimation of objective function, gradient optimization direction can be incorrect and noisy.
    A gradient issue on \textit{compatibility} \cite{Lillicrap2015DDPG,DOroJ20UsefulCritic} in such cases should also be considered.
    
\end{itemize}

Towards the scalable utilization of GTPI, an effective and efficient (e.g., low-dimensional) representation is necessary.
This adds more difficulties in using GTPI for policy optimization.
First, learning policy representation adds non-stationarity in the process of objective function approximation.
Moreover, the extrapolation generalization desired by policy optimization is non-trivial to have any guarantee in policy representation space,
especially when objective function is only trained on finite policies (usually much fewer than state-action samples) thus further resulting in erroneous policy gradients.
Due to the poor knowledge on GTPI,
we do not resort to GTPI for the policy update in our algorithms but focus on utilizing value generalization for more efficient value estimation in GTPI. 
Actually, we believe the potentials of GTPI and policy optimization in representation space,
and we leave this for one of our main future direction of PeVFA.

\section{Experimental Details and Complete Results}
\label{app:exp_details}

\subsection{Experimental Details}
\label{app:experimental_details}

\paragraph{Environment.}
We conduct our experiments on commonly adopted OpenAI Gym\footnote{\url{http://gym.openai.com/}} continuous control tasks \citep{Brockman2016Gym,Todorov2012MuJoCo}.
We use the OpenAI Gym with version 0.9.1, the mujoco-py with version 0.5.4 and the MuJoCo products with version \texttt{MJPRO131}.
Our codes are implemented with Python 3.6 and \texttt{Tensorflow}.

\paragraph{Resources/Equipment Used and Time/Space Cost.}
Our experiments are mainly conducted on a NVIDIA GeForce RTX 2080 Ti with 11 GB memory.
It takes 3-4 hours for PPO-PeVFA and about 2 hours for PPO to finish a trial of 2 million steps MuJoCo training and evaluation.
In our experiments, about 6 trials are running simultaneously on the same GPU.
Since PPO itself is quite light-weight (than off-policy RL agents like DDPG), computational cost is not a worry in our experiments.

For the additional space cost, PeVFA needs to store historical policies and their interaction experiences.
The interaction experiences (i.e., trajectories) are stored in a replay buffer as done conventionally in an off-policy RL algorithm.
Actually, we store the experiences in recent 200k time steps.
For the policies, about 1k to 2k policies in total are encountered during each run, for each of which we store the network parameters, about 5k-20k parameters depends on the state dimension of the environment.
In our experiments, the storage space for historical policies are negligible to the storage for experiences (e.g., 2 million interaction steps).

Overall, the time and space cost for PeVFA are influenced by the algorithm choice (e.g., policy representation training) and training scheme design.
There is much room for further improvement.

\paragraph{Implementation.}
We use Proximal Policy Optimization (PPO) \citep{Schulman2017PPO} with Generalized Advantage Estimator (GAE) \citep{Schulman2016GAE} as our baseline algorithm.
Recent works \citep{EngstromISTJRM20ImpMatters,Andrychowicz20OPMatters} point out code-level optimizations influence the performance of PPO a lot.
For a fair comparison and clear evaluation, we perform no code-level optimization in our experiments, e.g., state standardization, reward scaling, gradient clipping, parameter sharing and etc.
Our proposed algorithm PPO-PeVFA is implemented based on PPO, 
which only differs at the replacement for conventional value function network with PeVFA network.
Policy network is a 2-layer MLP with 64 units per layer and ReLU activation, 
outputting a Gaussian policy, i.e., a tanh-activated mean along with a state-independent vector-parameterized log standard deviation.
For PPO, the conventional value network $V_{\phi}$(s) (VFA) 
is a 2-layer 128-unit ReLU-activated MLP 
with state as input and value as output.
For PPO-PeVFA, 
the PeVFA network $\mathbb{V}_{\theta}(s,\chi_{\pi})$ takes as input state and policy representation $\chi_{\pi}$
which has the dimensionality of 64,
with the structure illustrated in Fig.\ref{figure:pevfa_architecture}.
We do not use parameter sharing between policy and value function approximators for more clear evaluation.

\paragraph{Training and Evaluation.}
We use Monte Carlo returns for value approximation.
In contrast to conventional VFA $V_{\phi}$ which approximates the value of current policy (e.g., Algorithm \ref{algorithm:PG_GPI}),
PeVFA $\mathbb{V}_{\theta}(s,\chi_{\pi})$ 
is additionally trained to approximate the values of 
all historical policies ($\{\pi_i\}_{i=0}^{t}$) along the policy improvement path (e.g., Algorithm \ref{algorithm:PG_PeVFA}).
The policy network parameterized by $\omega$ is then updated with following loss function:
\begin{equation}
\begin{aligned}
    \mathcal{L}^{\text{PPO}}(\omega) & =  - \mathbb{E}_{\pi_{\omega^{-}}} \left[ \min \left(\rho_t \hat{A}(s_t,a_t), \text{clip}(\rho_t, 1 - \epsilon, 1 + \epsilon) \hat{A}(s_t,a_t) \right) \right],
\end{aligned}
\label{eqation:PPO_objective}
\end{equation}
where $\hat{A}(s_t,a_t)$ is advantage estimation of old policy $\pi_{\omega^{-}}$, which is calculated by GAE based on conventional VFA $V_{\phi}$ or PeVFA $\mathbb{V}_{\theta}(s,\chi_{\pi})$ respectively,
and $\rho_t = \frac{\pi_{\omega}(a_t, s_t)}{\pi_{\omega^{-}}(a_t, s_t)}$ is the importance sampling ratio.
Note that both PPO and PPO-PeVFA update the policy according to Eq.\ref{eqation:PPO_objective} and only differ at advantage estimation based on conventional VFA $V_{\phi}$ or PeVFA $\mathbb{V}_{\theta}(s,\chi_{\pi})$.
This ensures that the performance difference comes only from different approximation of policy values.
Common learning parameters for PPO and PPO-PeVFA are shown in Tab.\ref{table:common_hyperparameter}.
For each iteration, we update value function approximators first and then the policy with updated values.
Such a training scheme is used for both PPO and PPO-PeVFA.
For evaluation, we evaluate the learning algorithm every 20k time steps, averaging the returns of 10 episodes.
Fewer evaluation points are selected and smoothed over neighbors and then plotted in our learning curves below.

\begin{table}[ht]
  \caption{Common hyperparameter choices of PPO and PPO-PeVFA.
  }
  \vspace{0.2cm}
  \centering
  \scalebox{1.0}{
  \begin{tabular}{c|c}
    \toprule
    \multicolumn{2}{c}{Hyperparameters for PPO \& PPO-PeVFA} \\
    \midrule
    Policy Learning Rate & 10$^{-4}$ \\
    Value Learning Rate & 10$^{-3}$ \\
    Clipping Range Parameter ($\epsilon$) & 0.2 \\
    GAE Parameter ($\lambda$) & 0.95 \\
    \midrule
    Discount Factor ($\gamma$) & 0.99  \\
    On-policy Samples Per Iteration & 5 episodes or 2000 time steps\\
    Batch Size & 128 \\
    Actor Epoch & 10 \\
    Critic Epoch & 10 \\
    Optimizer & Adam \\
    \bottomrule
  \end{tabular}
  }
\label{table:common_hyperparameter}
\end{table}

\paragraph{Details for PPO-PeVFA.}
For PeVFA, the training process also involves value approximation of historical policies and learning of policy representation.
Training details are shown in Tab.\ref{table:detail_pevfa}.
PeVFA $\mathbb{V}_{\theta}(s,\chi_{\pi})$ is trained every 10 steps with a batch of 64 samples
from an experience buffer with size of 200k steps.
Policy representation model is trained at intervals of 10 or 20 steps depending on OPR or SPR adopted.
Due to 1k - 2k policies are collected in total in each trial, a relatively small batch size of policy is used.
For OPR, Random Mask (Fig.\ref{figure:opr_data_aug}) is performed on all weights and biases of policy network except for the output layer (i.e., mean and log-std).
For SPR, two buffers of state-action pairs are maintained for each policy:
a small one is sampled for calculating SPR and the relatively larger one is sampled for auxiliary training (policy recovery).

\begin{table}[ht]
  \caption{Training details for PPO-PeVFA, including value approximation of historical policies and policy representation learning.
  CL is abbreviation for Contrastive Learning and AUX is for auxiliary loss of policy recovery.
  In our experiments, grid search is performed for the best hyperparamter configuration 
  regarding terms with multiple alternatives (i.e., \{\}).
  }
  \vspace{0.2cm}
  \centering
  \scalebox{1.0}{
  \begin{tabular}{c|c}
    \toprule
    \multicolumn{2}{c}{Value Approximation for Historical Policies} \\
    \midrule
    Value Learning Rate & 10$^{-3}$ \\
    Training Frequency & Every 10 time steps \\
    Batch Size & 64 \\
    Experience Buffer Size & 200k (steps) \\
    \midrule
    \multicolumn{2}{c}{Policy Representation Learning} \\
    \midrule
    Training Frequency & Every \{10, 20\} time steps \\
    Policy Num Per Batch & \{16, 32\} \\
    SPR $s,a$ Pair Num & \{200, 500\} \\
    CL Learning Rate & \{10$^{-3}$, 5 $\cdot$10$^{-4}$\} \\
    CL Momentum & \{5$\cdot$10$^{-2}$, 10$^{-2}$, 5$\cdot$10$^{-3}$\} \\
    CL Mask Ratio for OPR & \{0.1, 0.2\} \\
    CL Sample Ratio for SPR & 0.8 \\
    AUX Learning Rate & 10$^{-3}$ \\
    AUX Batch Size & \{128, 256\} \\
    \bottomrule
  \end{tabular}
  }
\label{table:detail_pevfa}
\end{table}

\subsection{Complete Learning Curves for Evaluation Results in Tab.\ref{table:overall_evaluations}}
\label{app:complete_learning_curves}
Corresponding to Tab.\ref{table:overall_evaluations},
an overall view of learning curves of all variants of PPO-PeVFA as well as baseline algorithms are shown in Fig.\ref{figure:exp_curves_overall}.
One can refer to Fig.\ref{figure:exp_curves_e2e} for a clearer view of the effects of PeVFA (with E2E policy representation),
and Fig.\ref{figure:exp_curves_cl}, \ref{figure:exp_curves_aux} 
for the effects of self-supervised policy representations, i.e., CL and AUX.

\begin{figure}
\centering
\subfigure[HalfCheetah-v1]{
\includegraphics[width=0.33\textwidth]{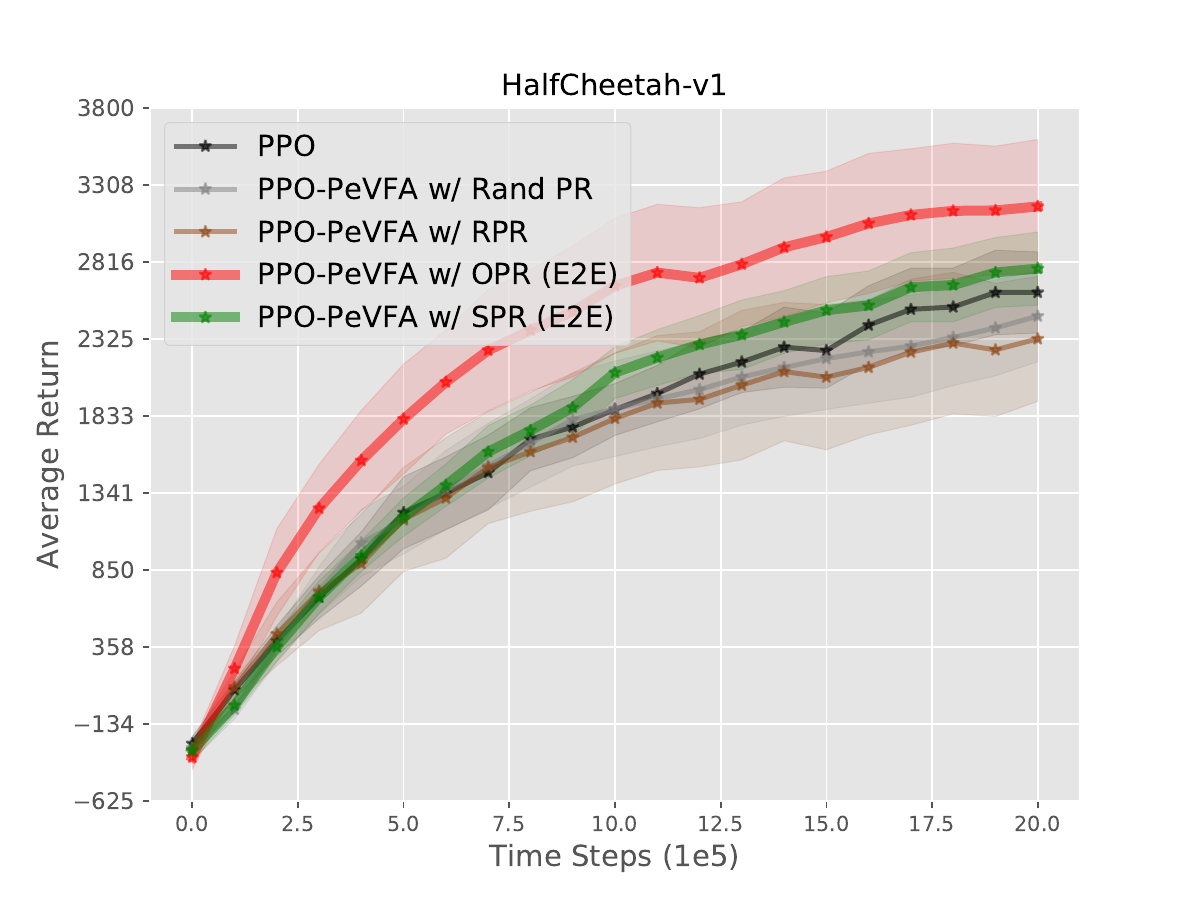}
}
\hspace{-0.5cm}
\subfigure[Hopper-v1]{
\includegraphics[width=0.33\textwidth]{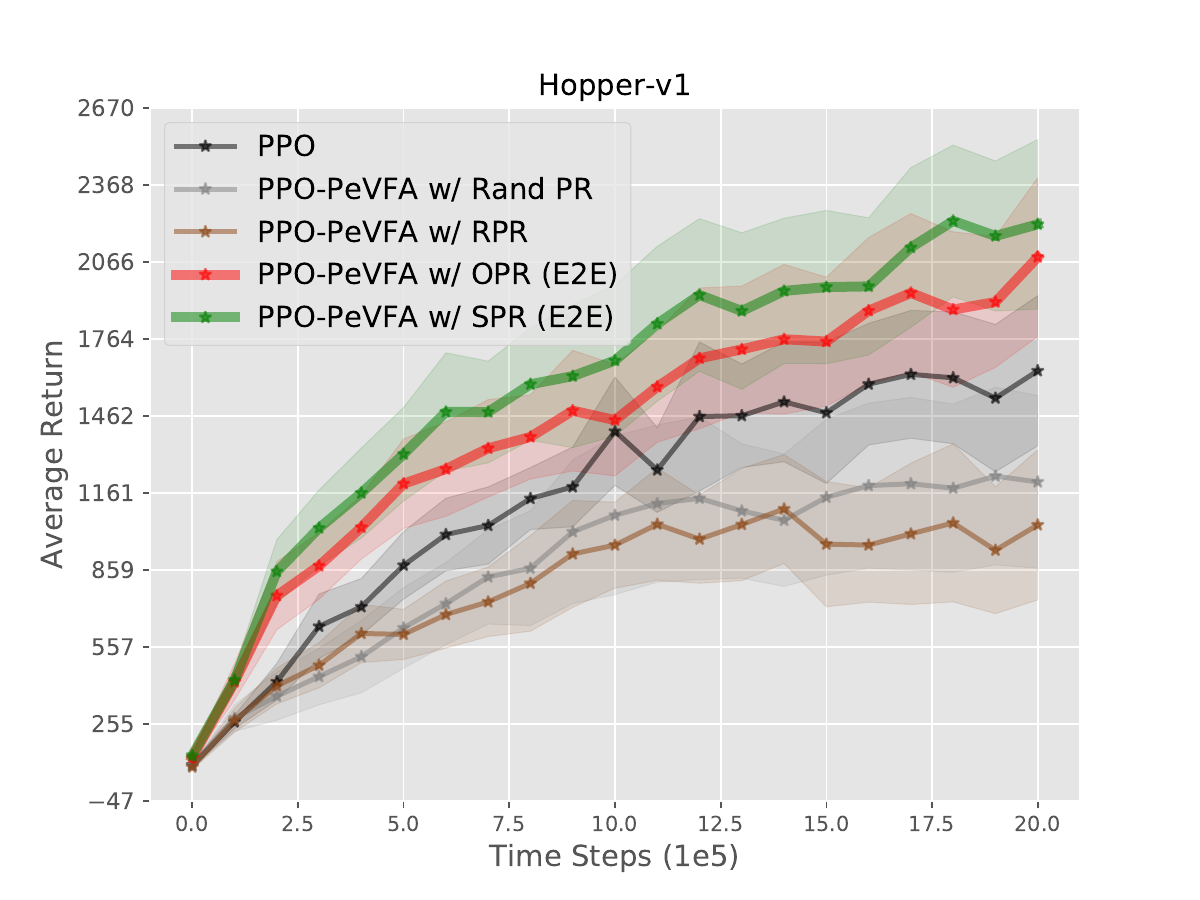}
}
\hspace{-0.5cm}
\subfigure[Walker2d-v1]{
\includegraphics[width=0.33\textwidth]{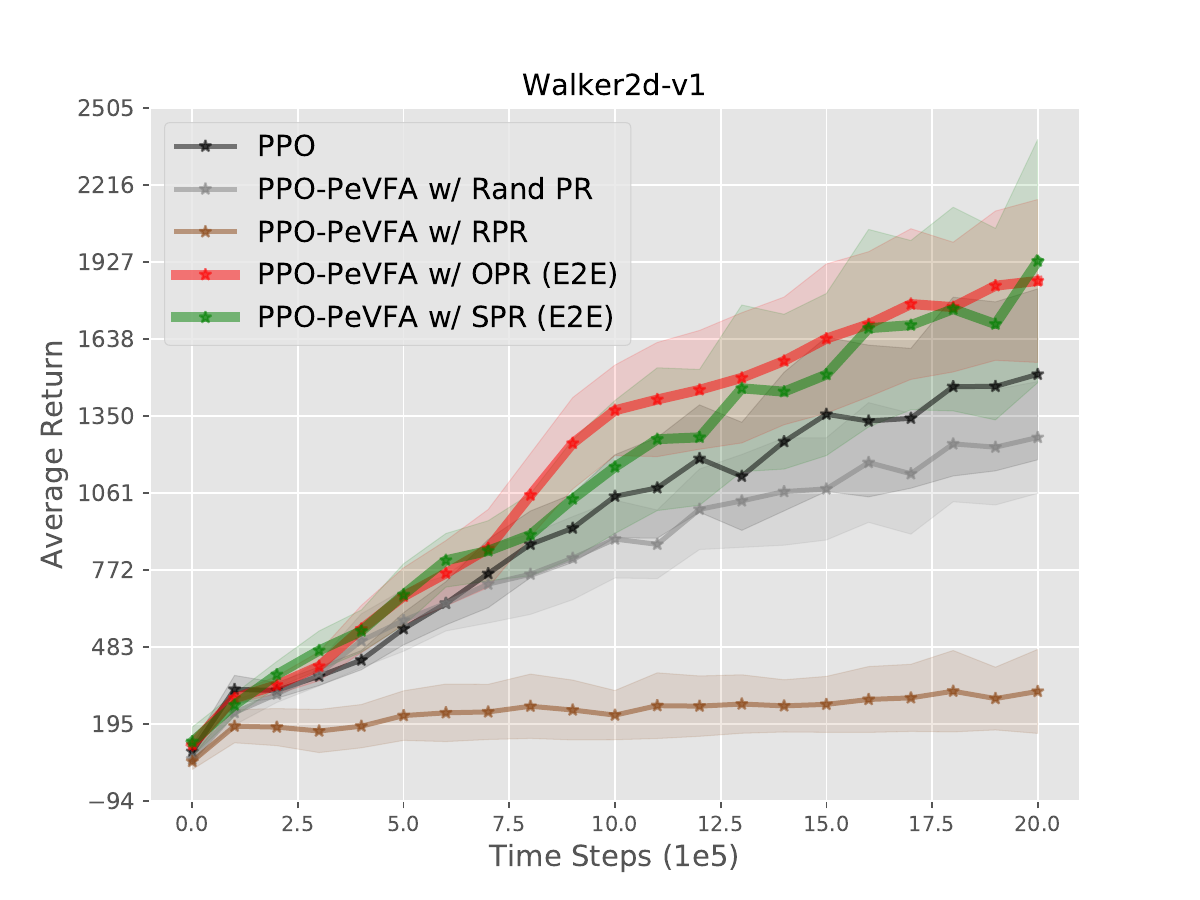}
}
\subfigure[Ant-v1]{
\includegraphics[width=0.33\textwidth]{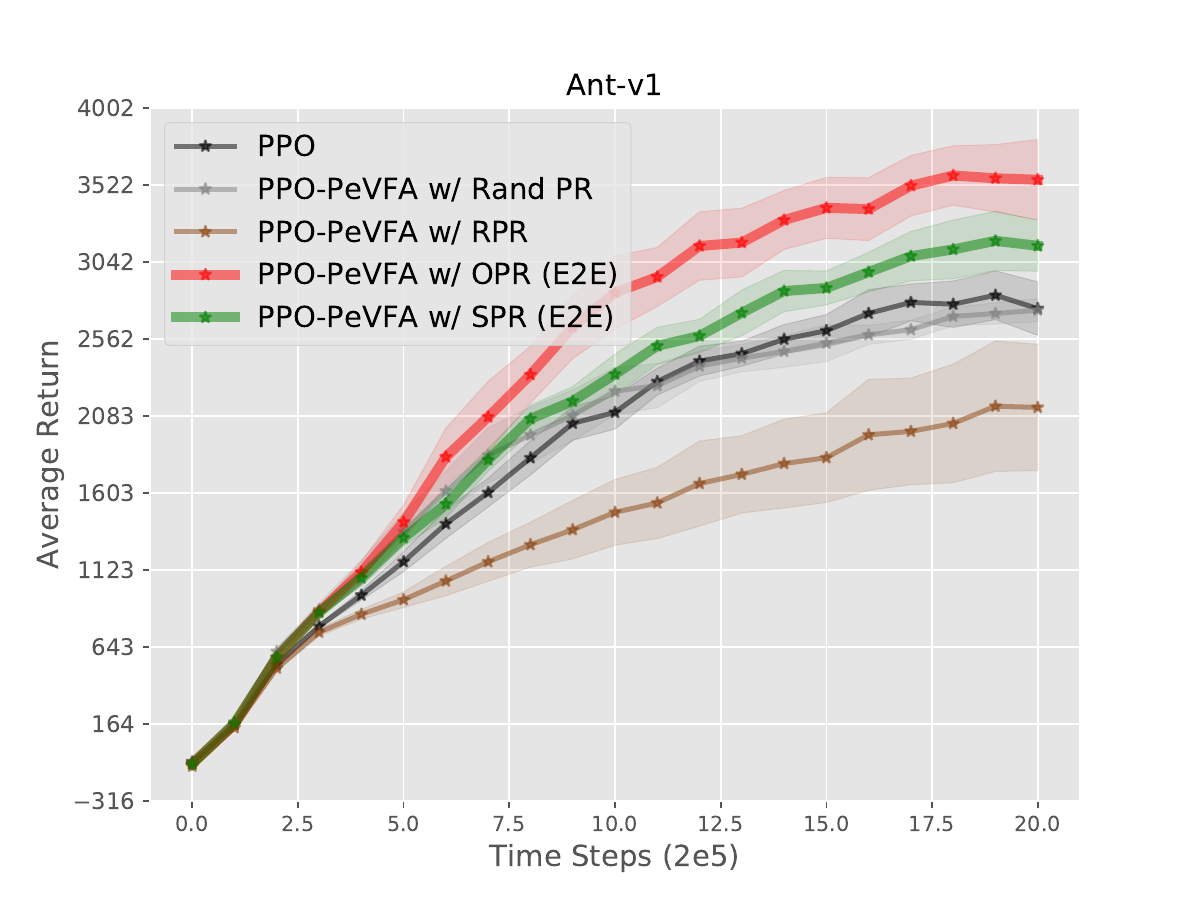}
}
\hspace{-0.5cm}
\subfigure[InvertedDoublePendulum-v1]{
\includegraphics[width=0.33\textwidth]{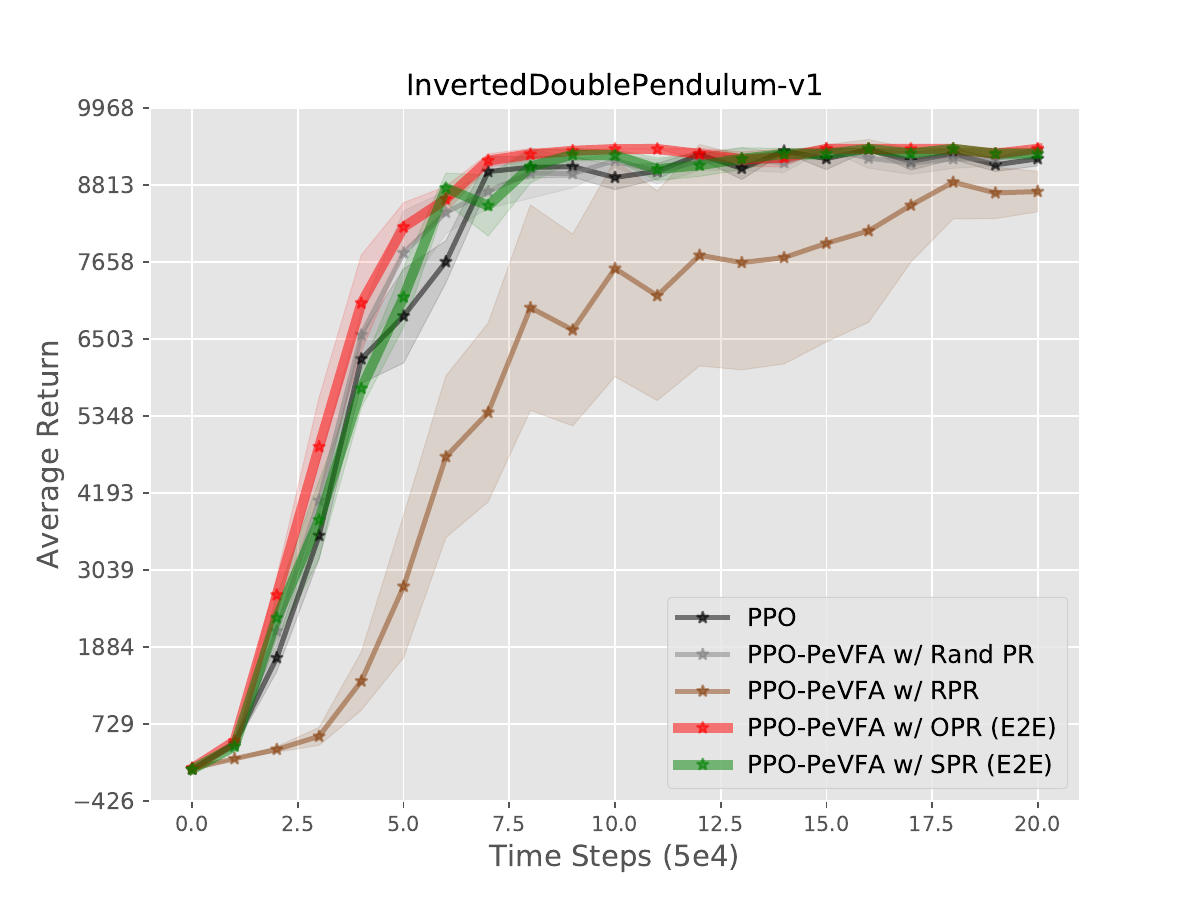}
}
\hspace{-0.5cm}
\subfigure[LunarLanderContinuous-v2]{
\includegraphics[width=0.33\textwidth]{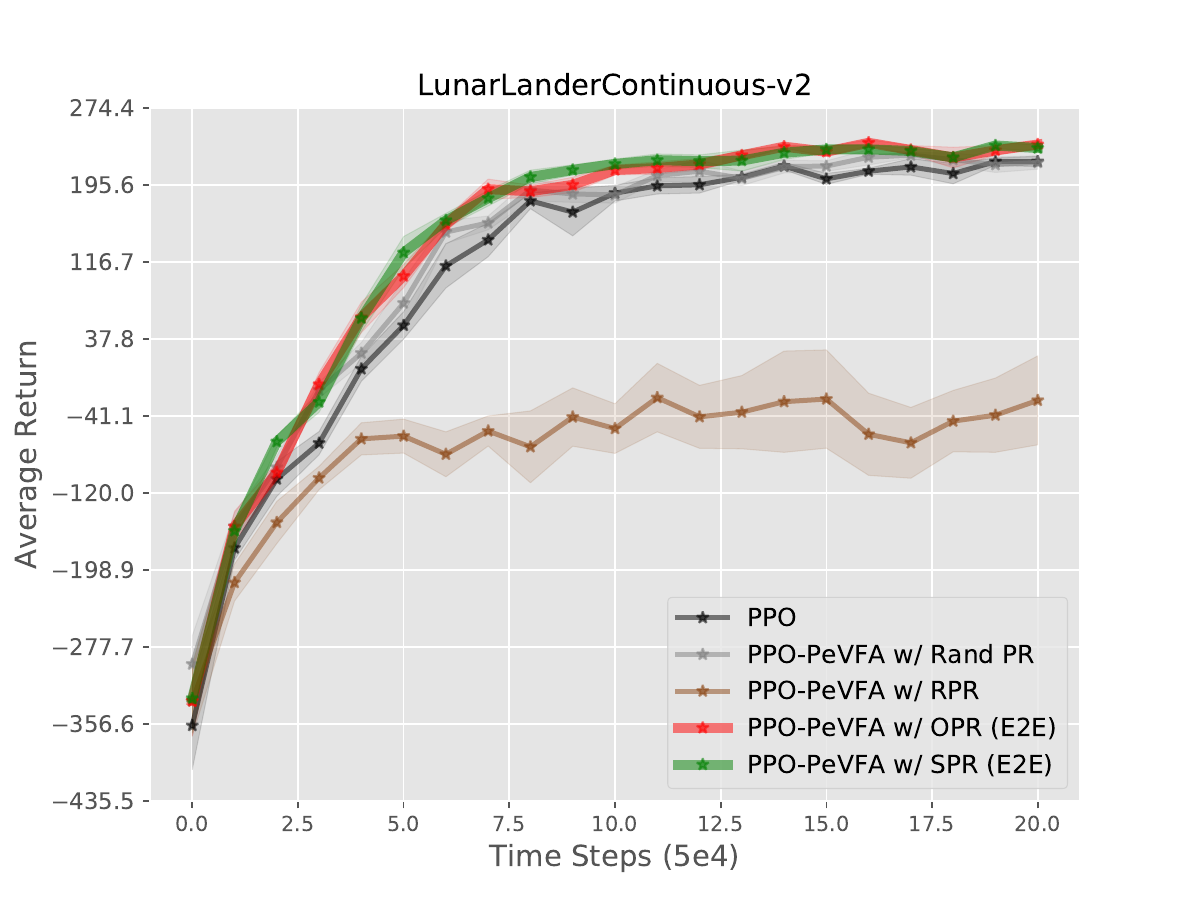}
}
\subfigure[InvertedPendulum-v1]{
\includegraphics[width=0.33\textwidth]{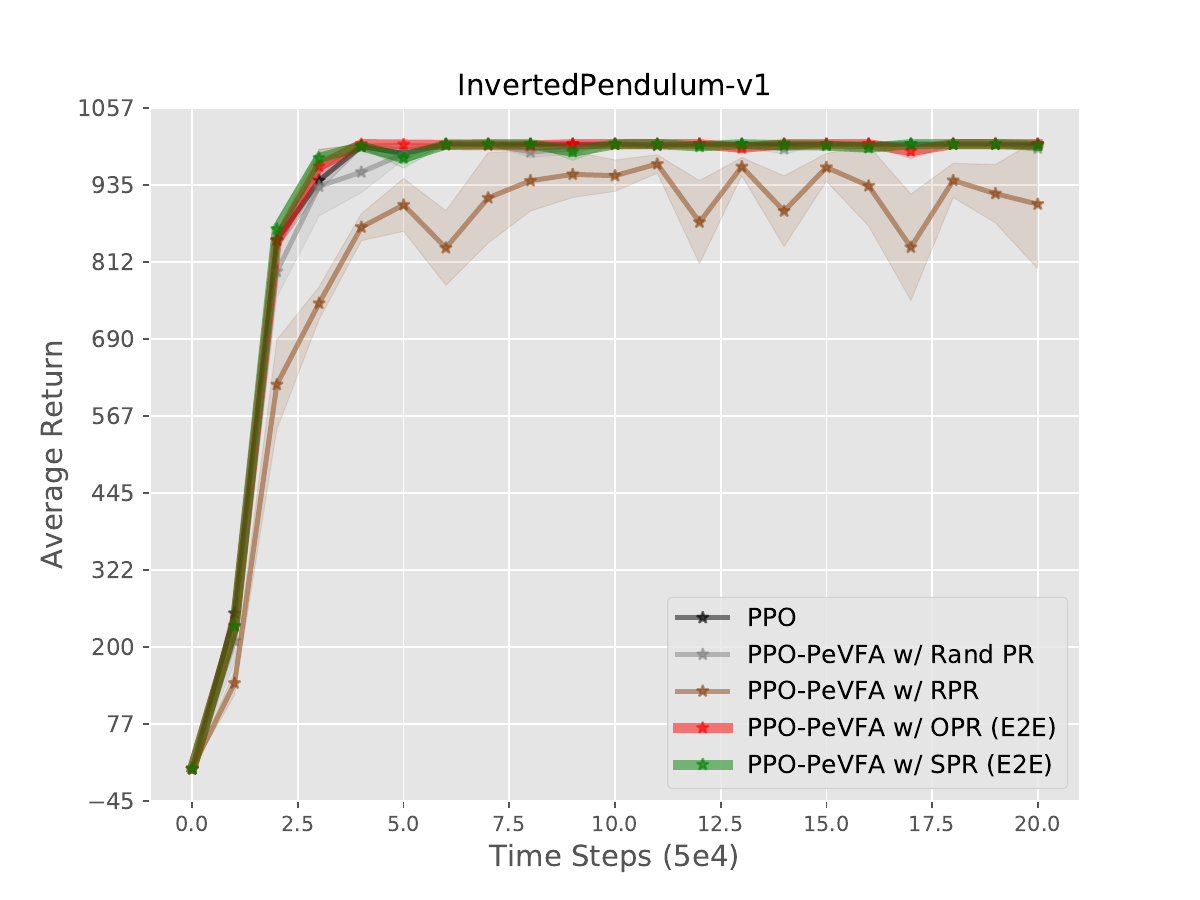}
}
\caption{
Evaluations of PPO-PeVFA with end-to-end (E2E) trained OPR and SPR in MuJoCo continuous control tasks.
The results demonstrate the effectiveness of PeVFA and two kinds of policy representation, 
answering the Question \ref{question:pevfa}.
The results are average returns and the shaded region denotes half a standard deviation over 10 trials.
}
\label{figure:exp_curves_e2e}
\end{figure}

\begin{figure}
\centering
\subfigure[HalfCheetah-v1]{
\includegraphics[width=0.33\textwidth]{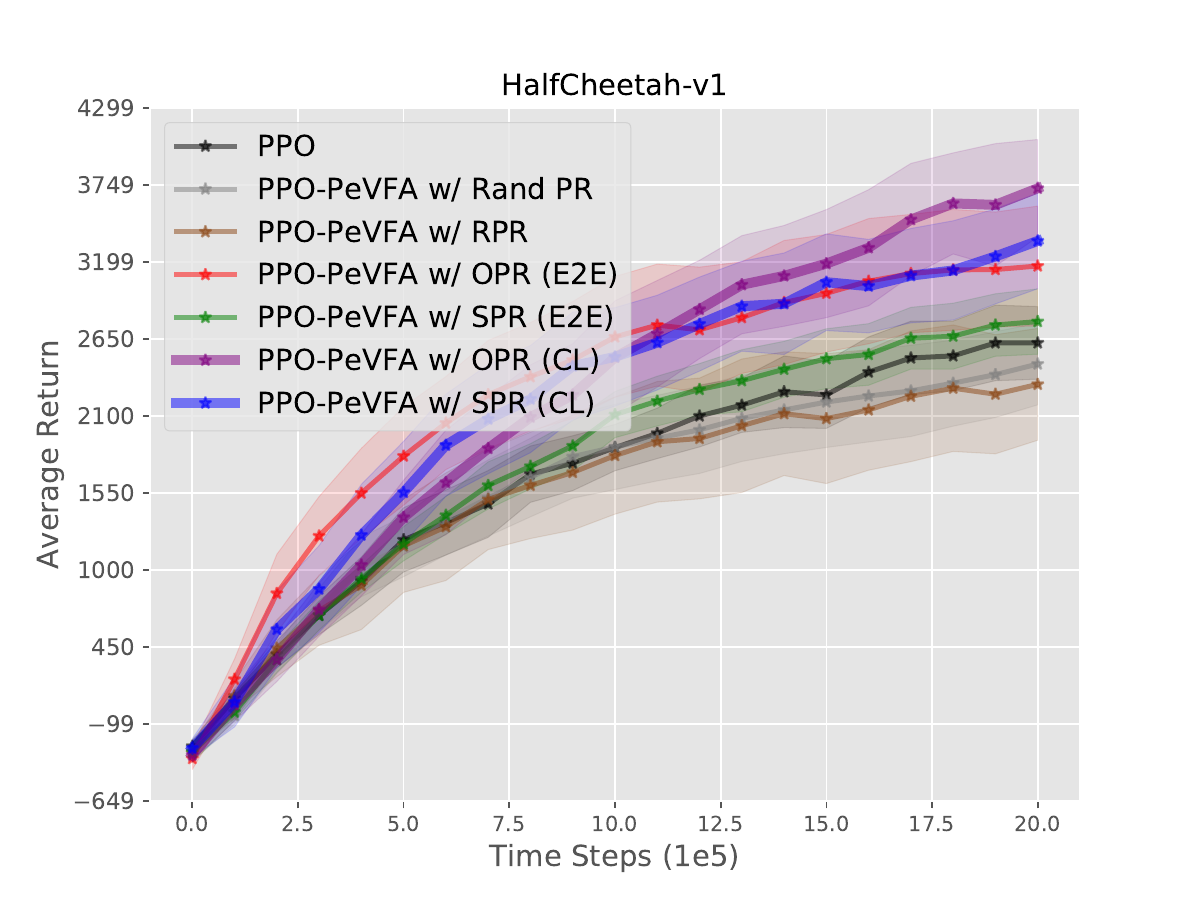}
}
\hspace{-0.5cm}
\subfigure[Hopper-v1]{
\includegraphics[width=0.33\textwidth]{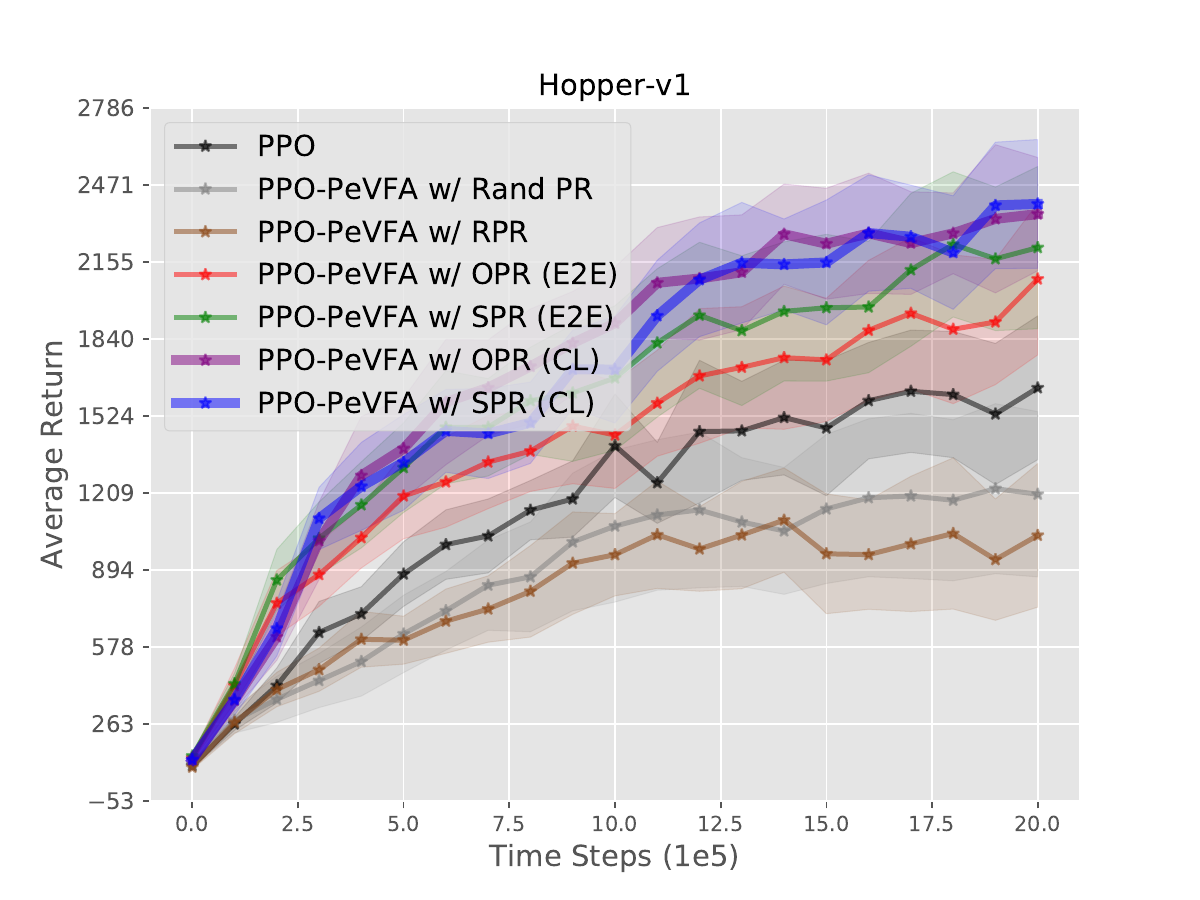}
}
\hspace{-0.5cm}
\subfigure[Walker2d-v1]{
\includegraphics[width=0.33\textwidth]{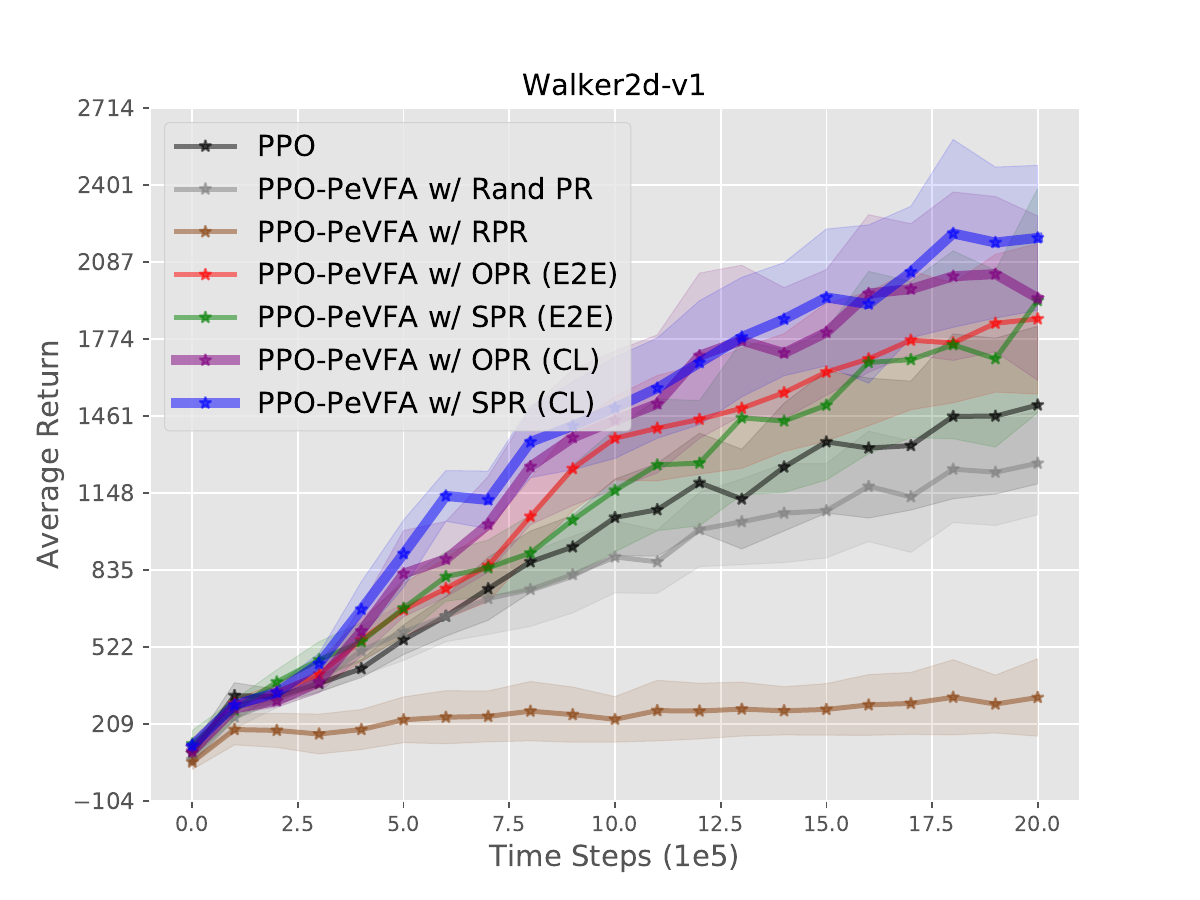}
}
\subfigure[Ant-v1]{
\includegraphics[width=0.33\textwidth]{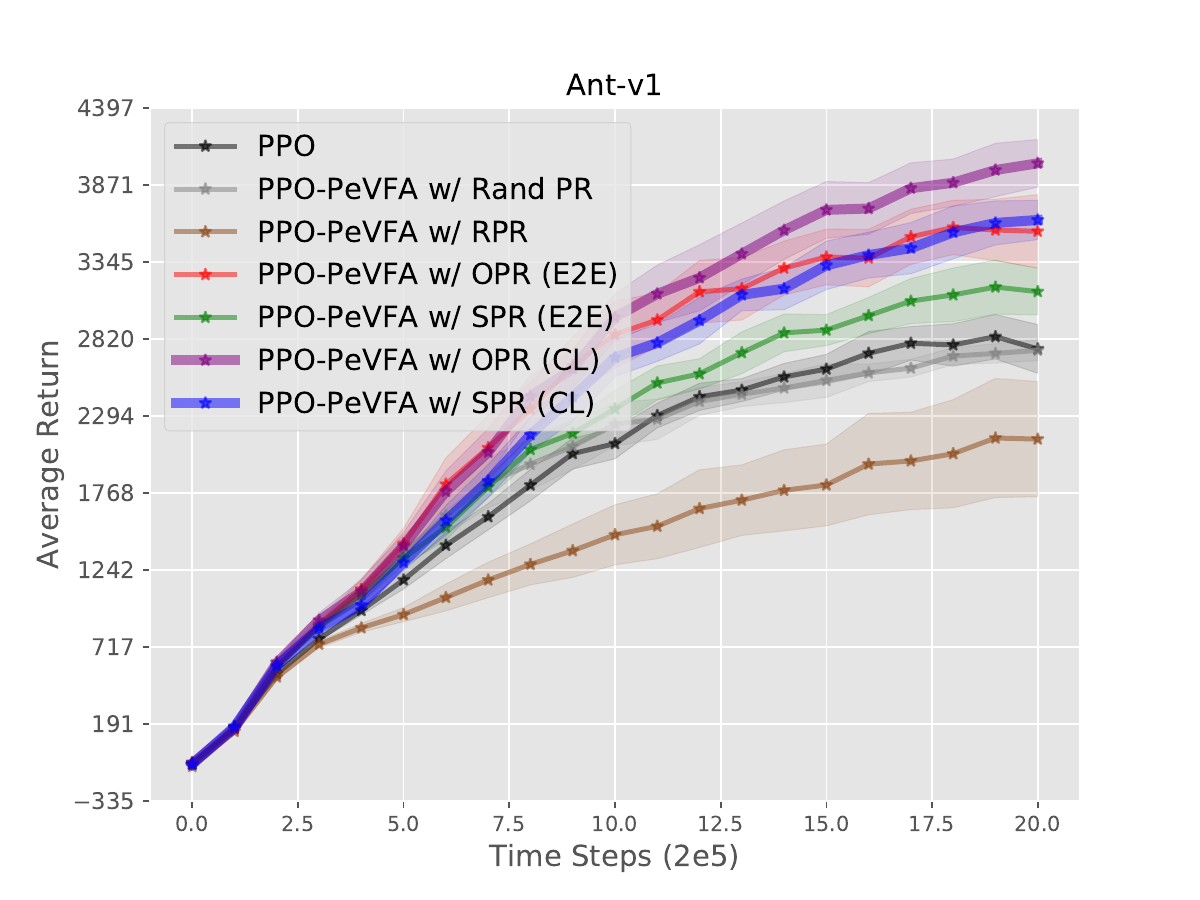}
}
\hspace{-0.5cm}
\subfigure[InvertedDoublePendulum-v1]{
\includegraphics[width=0.33\textwidth]{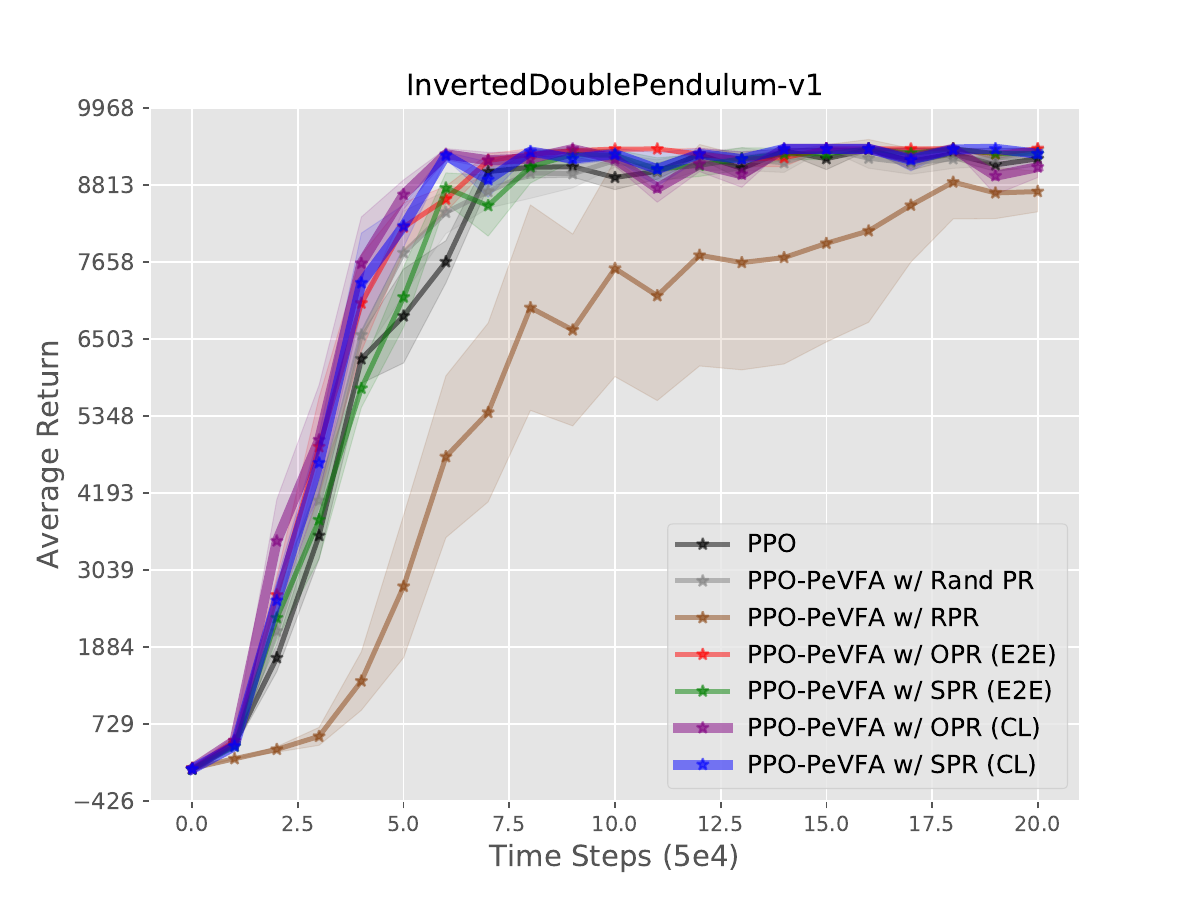}
}
\hspace{-0.5cm}
\subfigure[LunarLanderContinuous-v2]{
\includegraphics[width=0.33\textwidth]{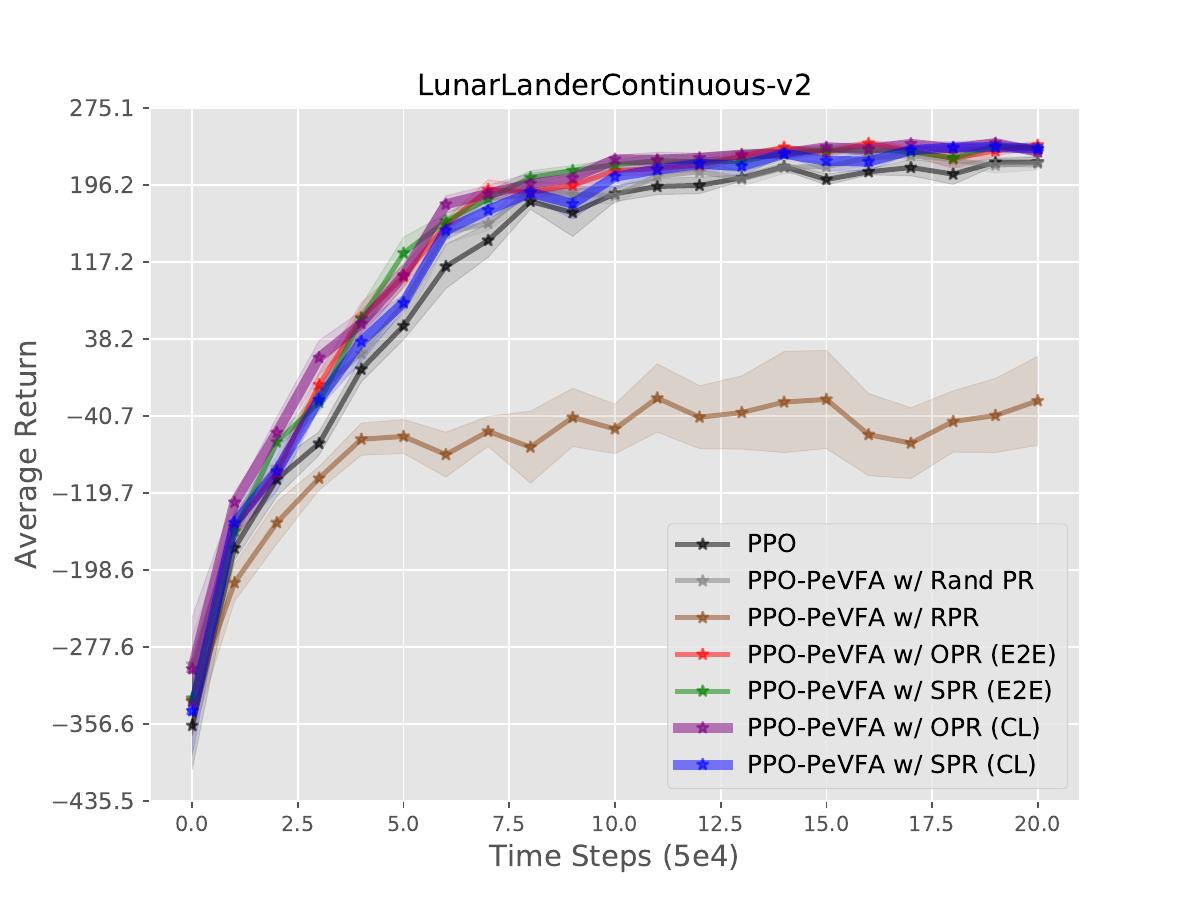}
}
\caption{
Evaluations of PPO-PeVFA with  OPR and SPR trained through contrastive learning (CL) in MuJoCo continuous control tasks.
The results are average returns and the shaded region denotes half a standard deviation over 10 trials.
}
\label{figure:exp_curves_cl}
\end{figure}

\begin{figure}
\centering
\subfigure[HalfCheetah-v1]{
\includegraphics[width=0.33\textwidth]{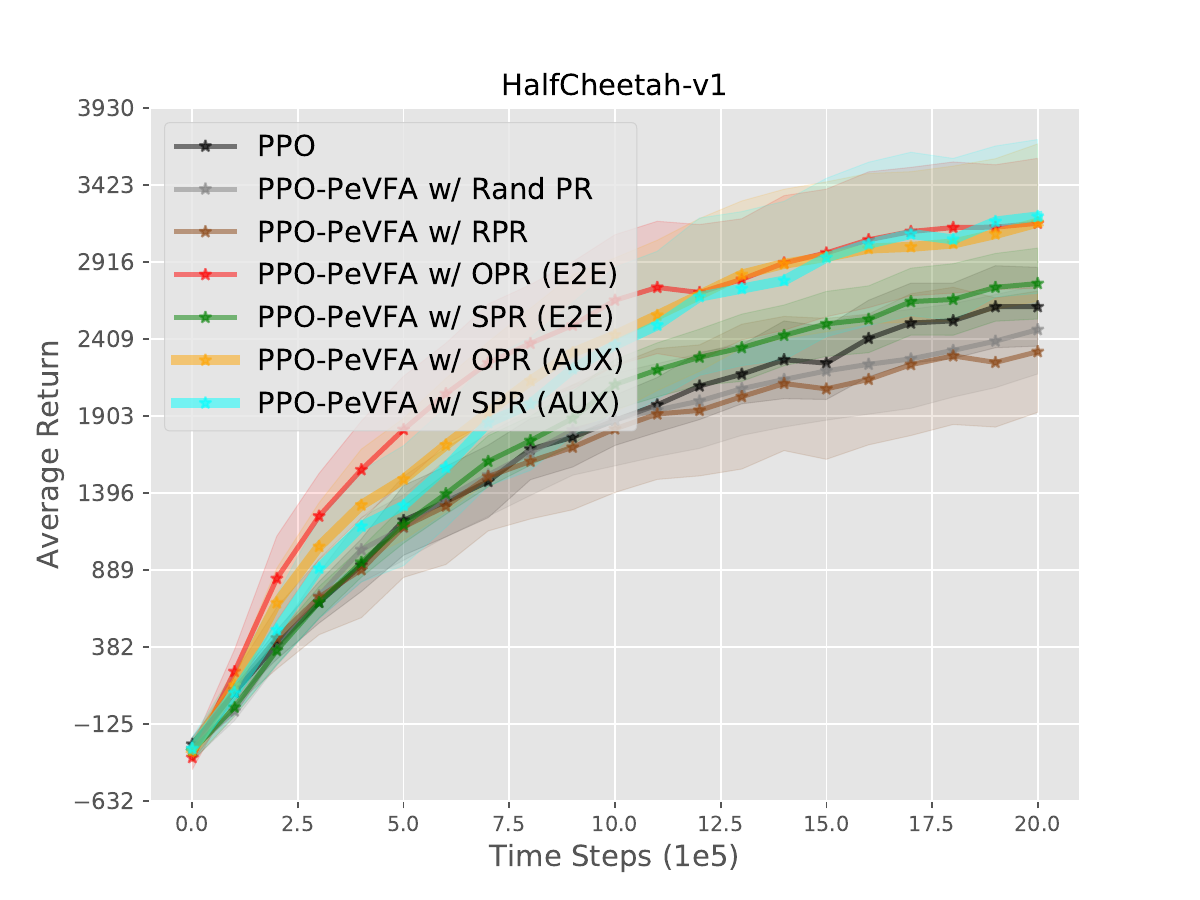}
}
\hspace{-0.5cm}
\subfigure[Hopper-v1]{
\includegraphics[width=0.33\textwidth]{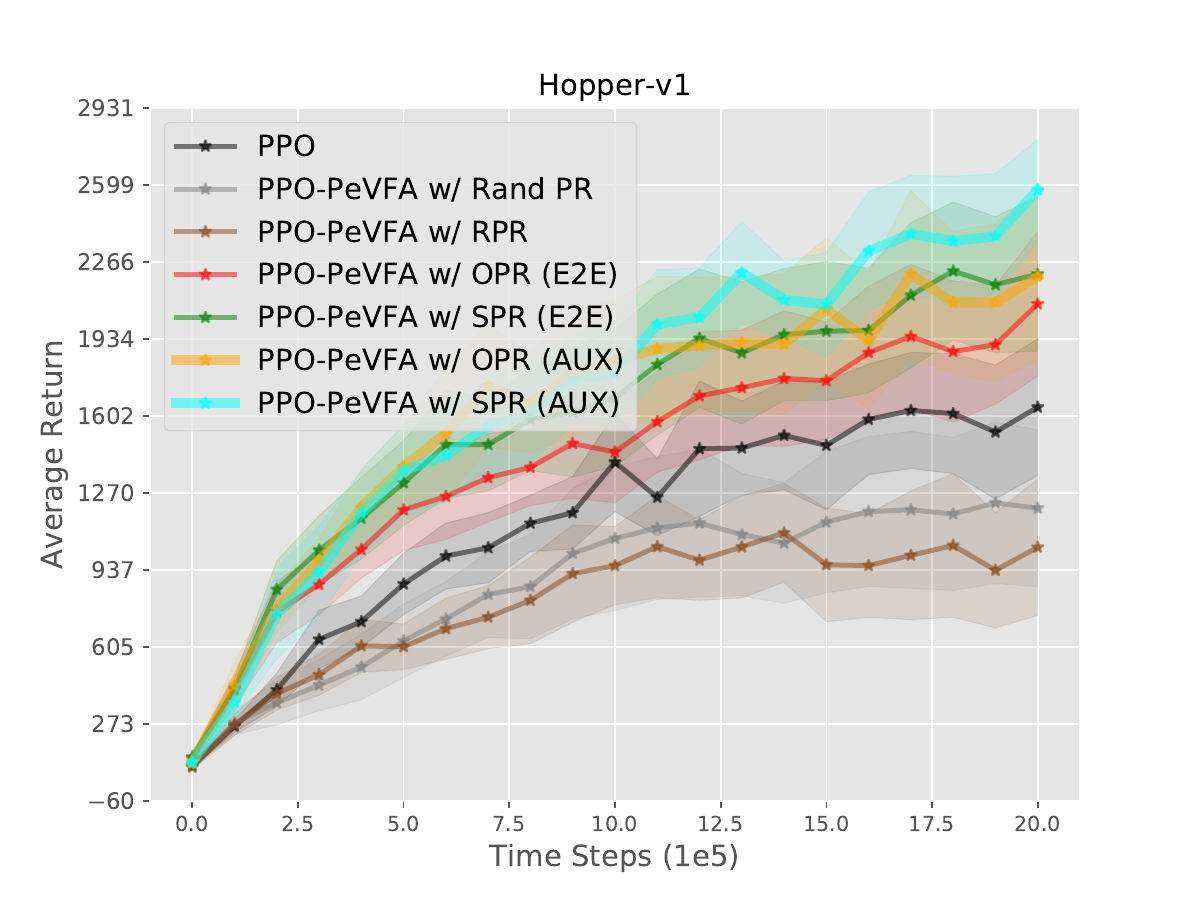}
}
\hspace{-0.5cm}
\subfigure[Walker2d-v1]{
\includegraphics[width=0.33\textwidth]{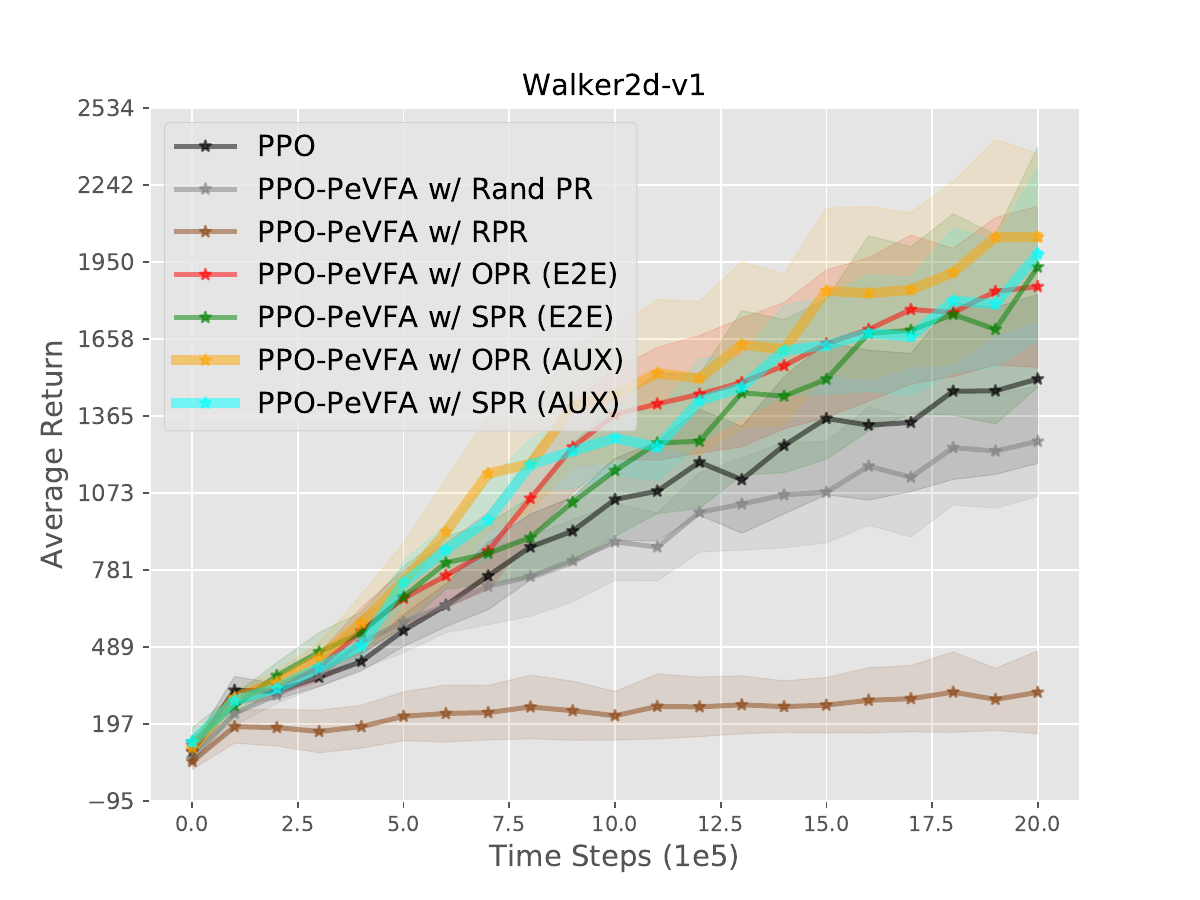}
}
\subfigure[Ant-v1]{
\includegraphics[width=0.33\textwidth]{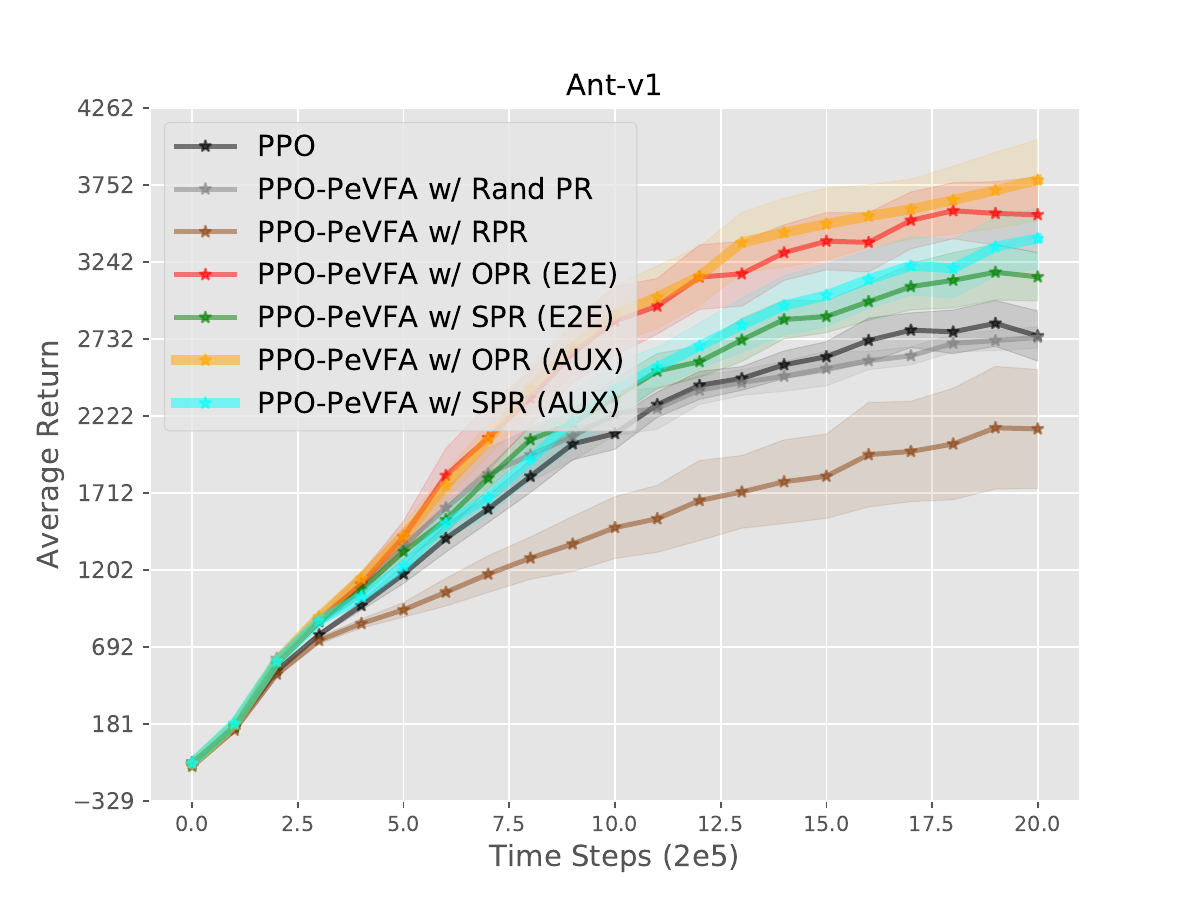}
}
\hspace{-0.5cm}
\subfigure[InvertedDoublePendulum-v1]{
\includegraphics[width=0.33\textwidth]{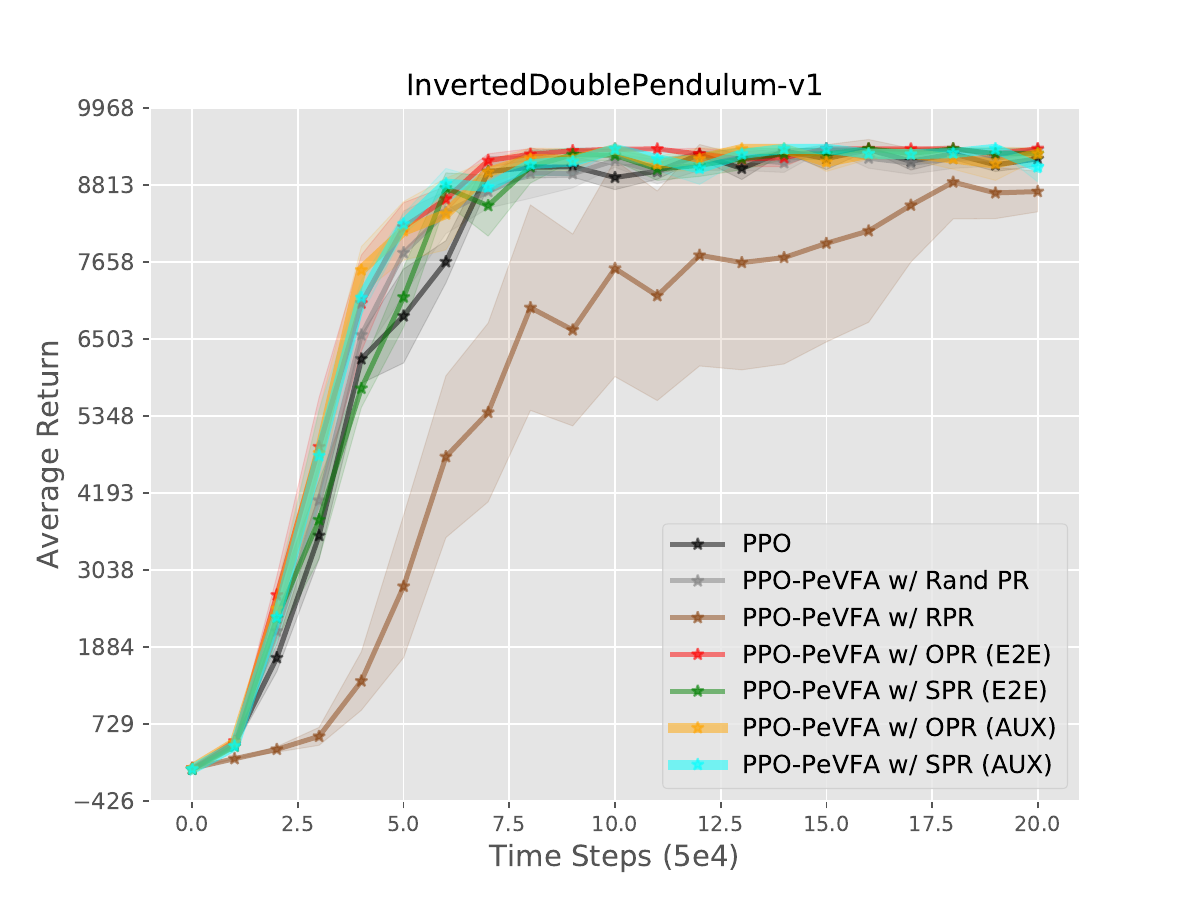}
}
\hspace{-0.5cm}
\subfigure[LunarLanderContinuous-v2]{
\includegraphics[width=0.33\textwidth]{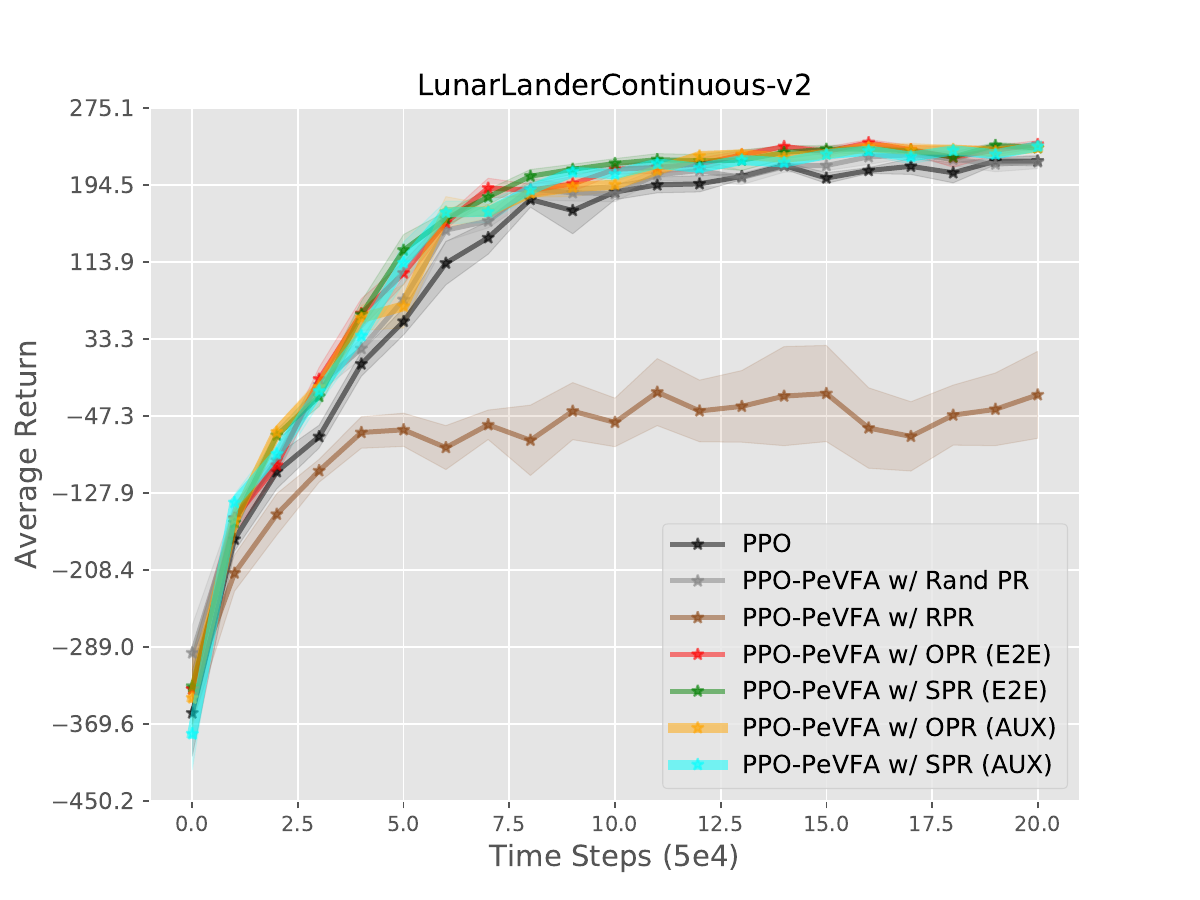}
}
\caption{
Evaluations of PPO-PeVFA with  OPR and SPR trained through auxiliary loss of policy recovery (AUX) in MuJoCo continuous control tasks.
The results are average returns and the shaded region denotes half a standard deviation over 10 trials.
}
\label{figure:exp_curves_aux}
\end{figure}

\begin{figure}[ht]
\centering
\subfigure[HalfCheetah-v1]{
\includegraphics[width=0.48\textwidth]{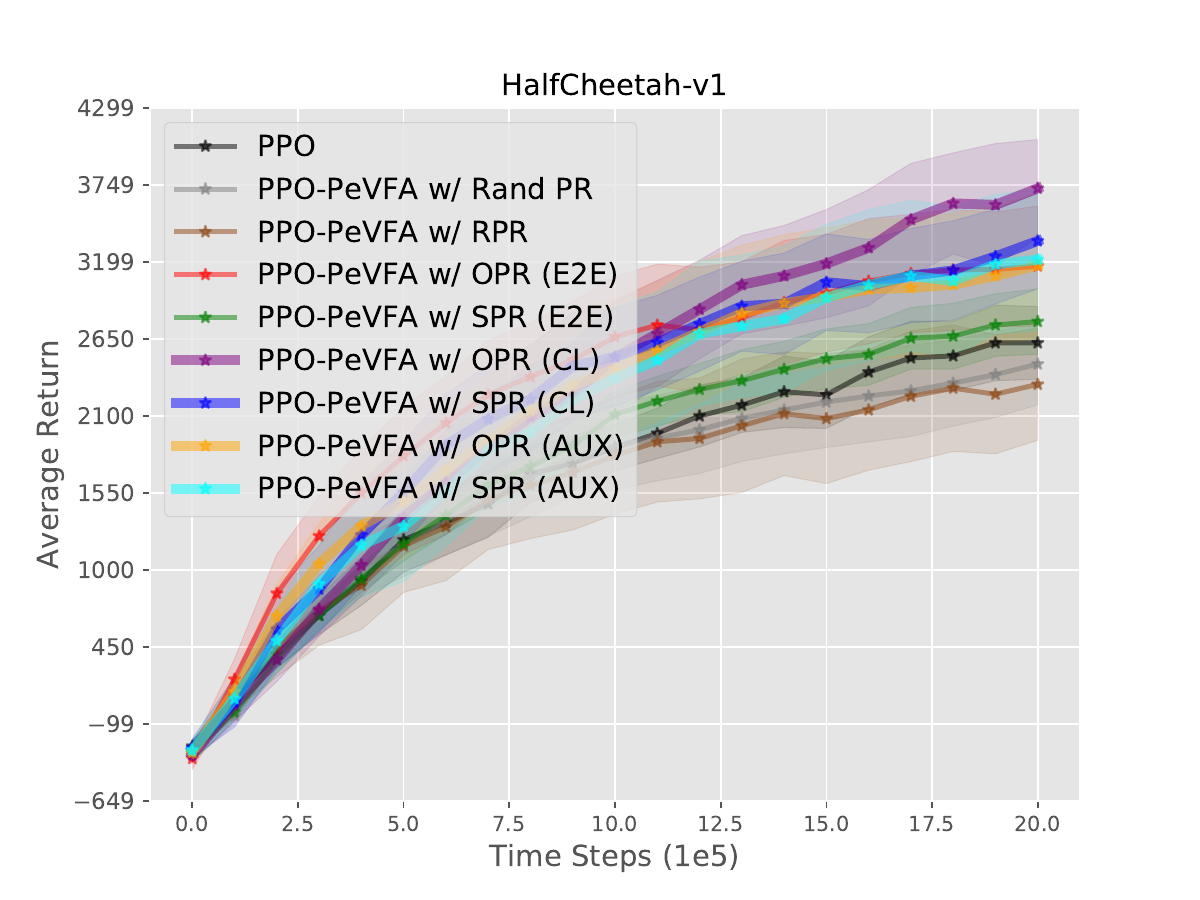}
}
\hspace{-0.8cm}
\subfigure[Hopper-v1]{
\includegraphics[width=0.48\textwidth]{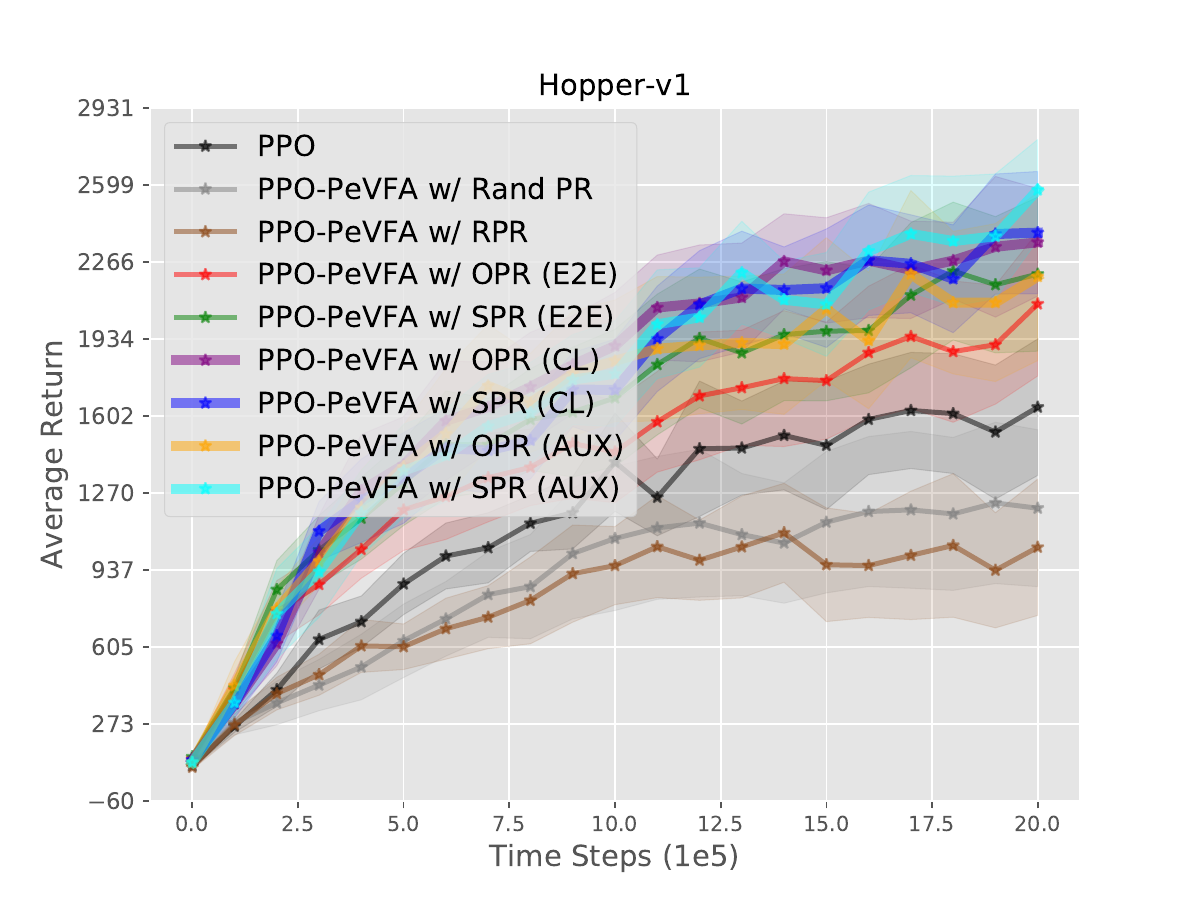}
}
\subfigure[Walker2d-v1]{
\includegraphics[width=0.48\textwidth]{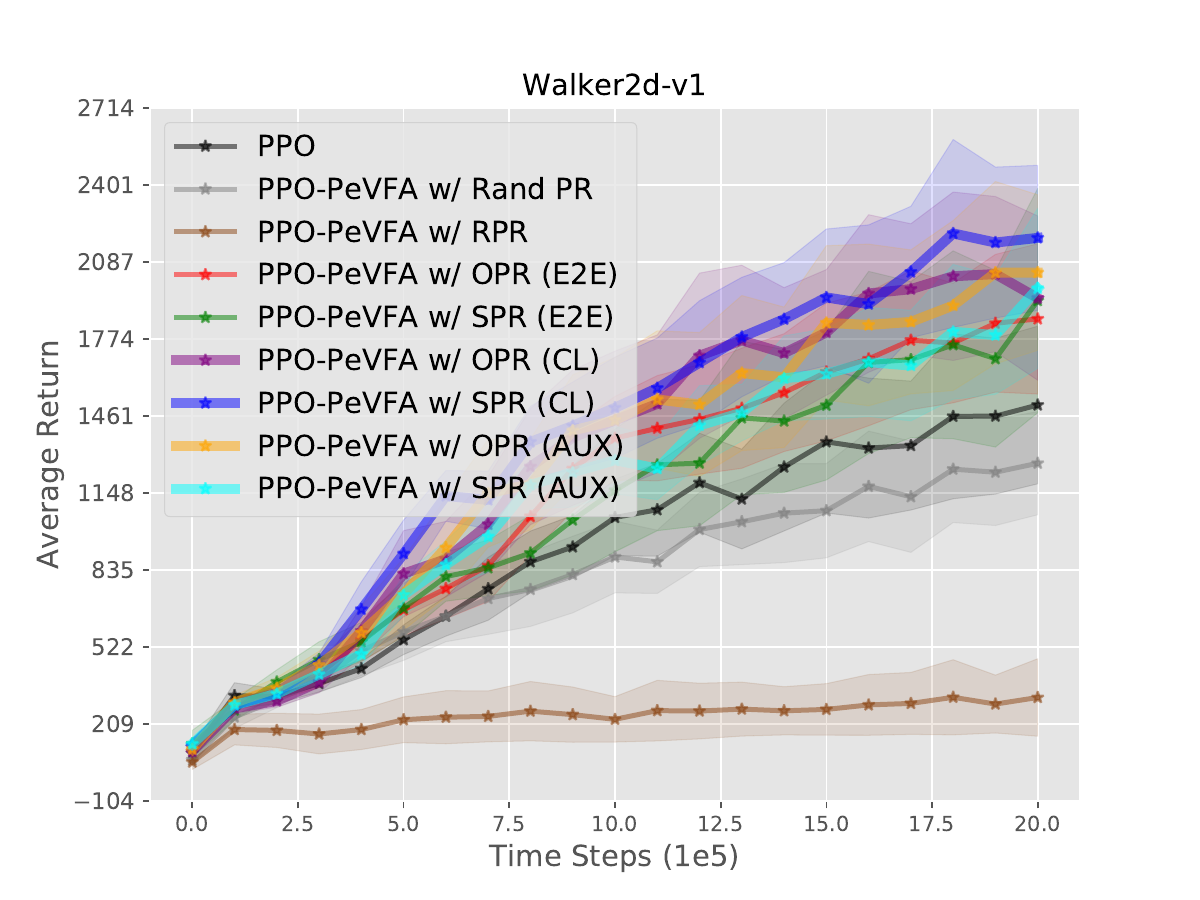}
}
\hspace{-0.8cm}
\subfigure[Ant-v1]{
\includegraphics[width=0.48\textwidth]{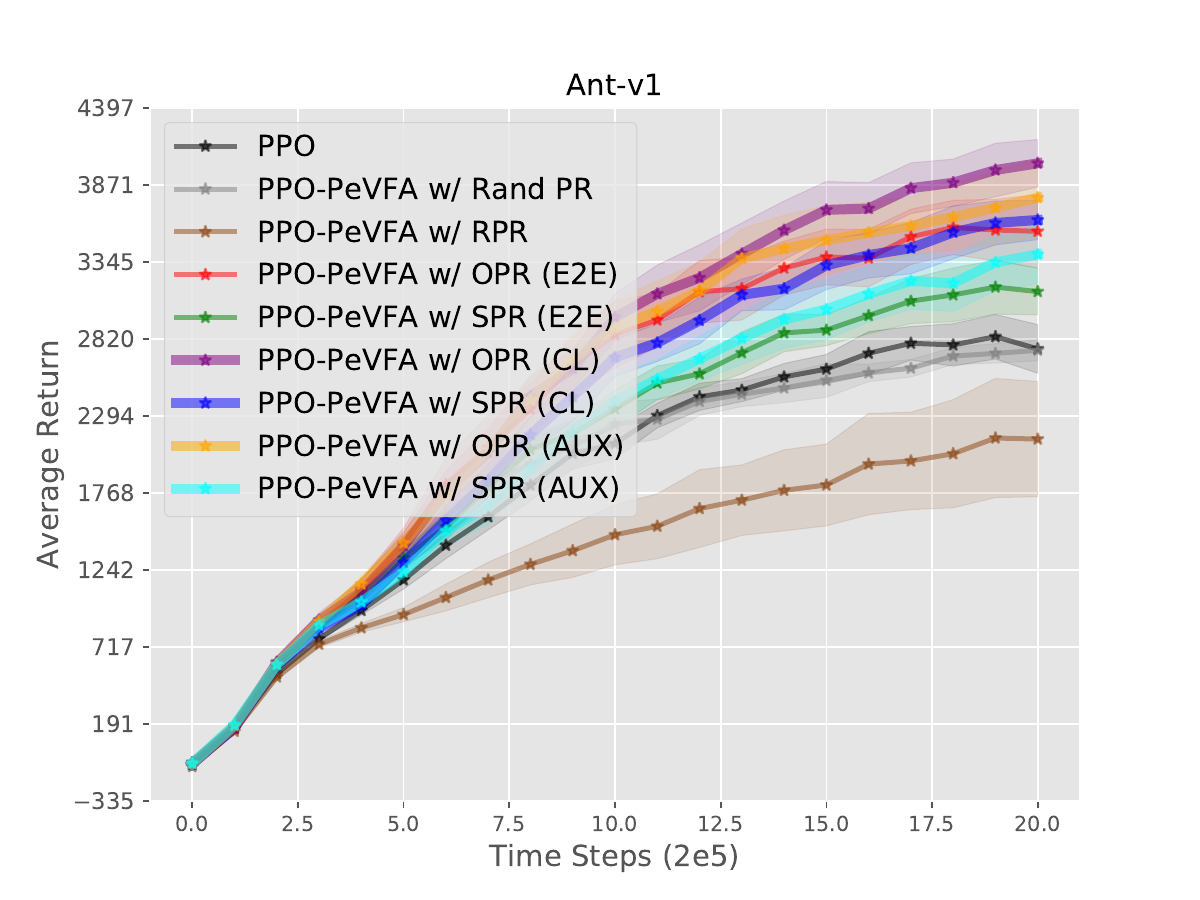}
}
\subfigure[InvertedDoublePendulum-v1]{
\includegraphics[width=0.48\textwidth]{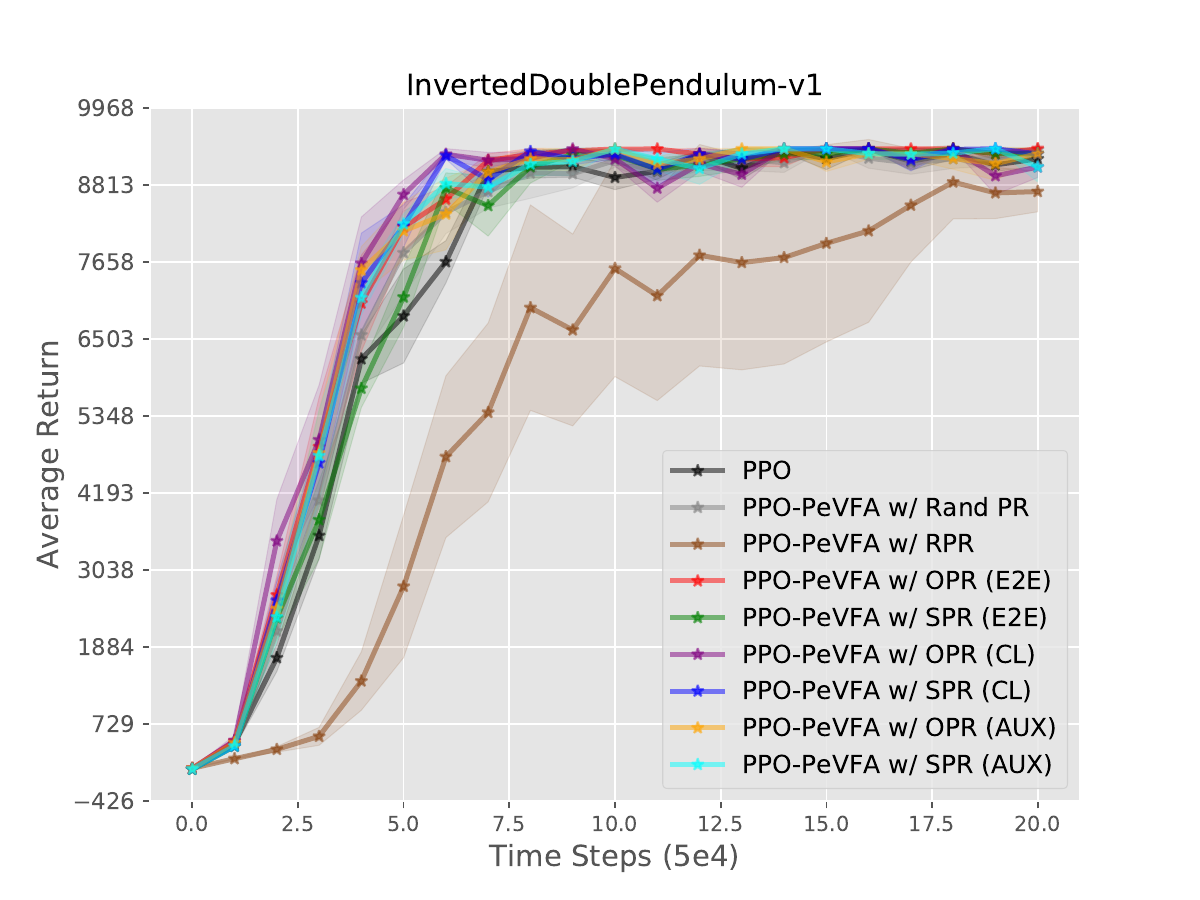}
}
\hspace{-0.8cm}
\subfigure[LunarLanderContinuous-v2]{
\includegraphics[width=0.48\textwidth]{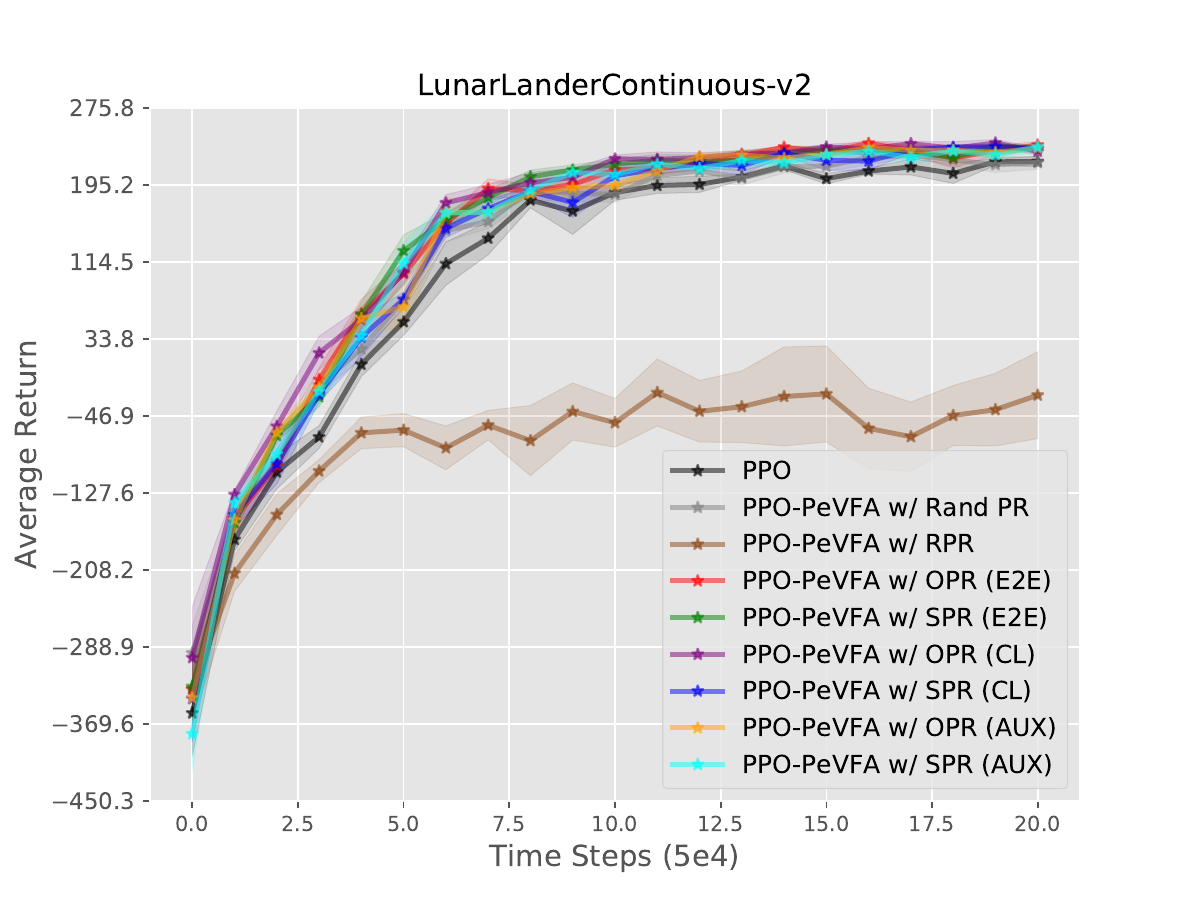}
}
\caption{
An overall view of performance evaluations of different algorithms in MuJoCo continuous control tasks.
The results are average returns and the shaded region denotes half a standard deviation over 10 trials.
}
\label{figure:exp_curves_overall}
\end{figure}




\subsection{Visualization of Learned Policy Representation}
\label{app:complete_visualization}
To show how the learned representation is like in a low-dimensional space,
we visualize the learned representation of policies encountered during the learning process.

\paragraph{Visualization Design.}
We record all policies on the policy improvement path during the learning process of a PPO-PeVFA agent.
For each trial in our experiments in MuJoCo continuous control tasks,
about 1k - 2k policies are collected.
We run 5 trials and 5k - 12k policies are collected in total for each task.
We also store the policy representation model at intervals for each trial,
and we use last three checkpoints to compute the representation of each policy collected.
For each policy collected during 5 trials, its representation for visualization is obtained by averaging the results of 3 checkpoints of each trial and then concatenating the results from 5 trials. 
Finally, we plot 2D embedding of policy representations prepared above through t-SNE \citep{Maaten2008TSNE} and Principal Component Analysis (PCA) in \texttt{sklearn}\footnote{\url{https://scikit-learn.org/stable/index.html}}.

\paragraph{Results and Analysis.}
Visualizations of OPR and SPR learned in an end-to-end fashion in HalfCheetah-v1 and Ant-v1 are in Fig.\ref{figure:visual_opr} and \ref{figure:visual_spr}.
We use different types of markers to distinguish policies from different trials to see how policy evolves in representation space from different random initialization.
Moreover, we provide two views: performance view and process view,
to see how policies are aligned in representation space regarding performance and `age' of policies respectively.

Visualization of OPR trained in end-to-end fashion is shown in Fig.\ref{figure:visual_opr}.
From the performance view, it is obvious that policies of poor and good performances are aligned from left to right in t-SNE representation space and are aligned at two distinct directions in PCA representation space.
An evolvement of policies from different trials can be observed in subplot (b) and (d).
Thus, policies from different trials are locally continuous;
while policies are globally consistent in representation space with respect to policy performance.
Moreover, we can observe multimodality for policies with comparable performance.
This means that the obtained representation not only reflects optimality information but also maintains the behavioral characteristic of policy.

\begin{figure}[ht]
\centering
\hspace{-0.3cm}
\subfigure[Performance View in HalfCheetah-v1]{
\includegraphics[width=0.49\textwidth]{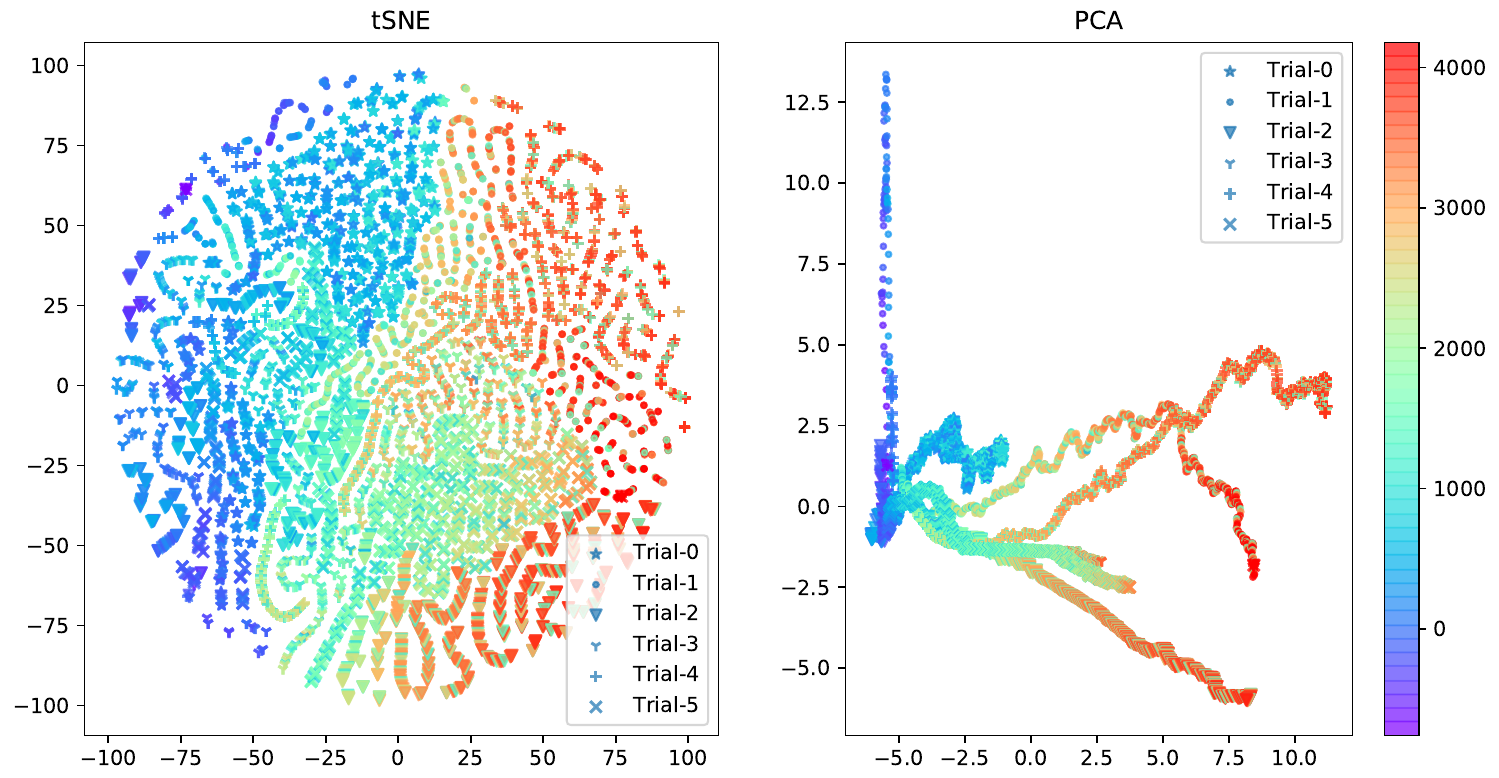}
}
\hspace{-0.25cm}
\subfigure[Process View in HalfCheetah-v1]{
\includegraphics[width=0.49\textwidth]{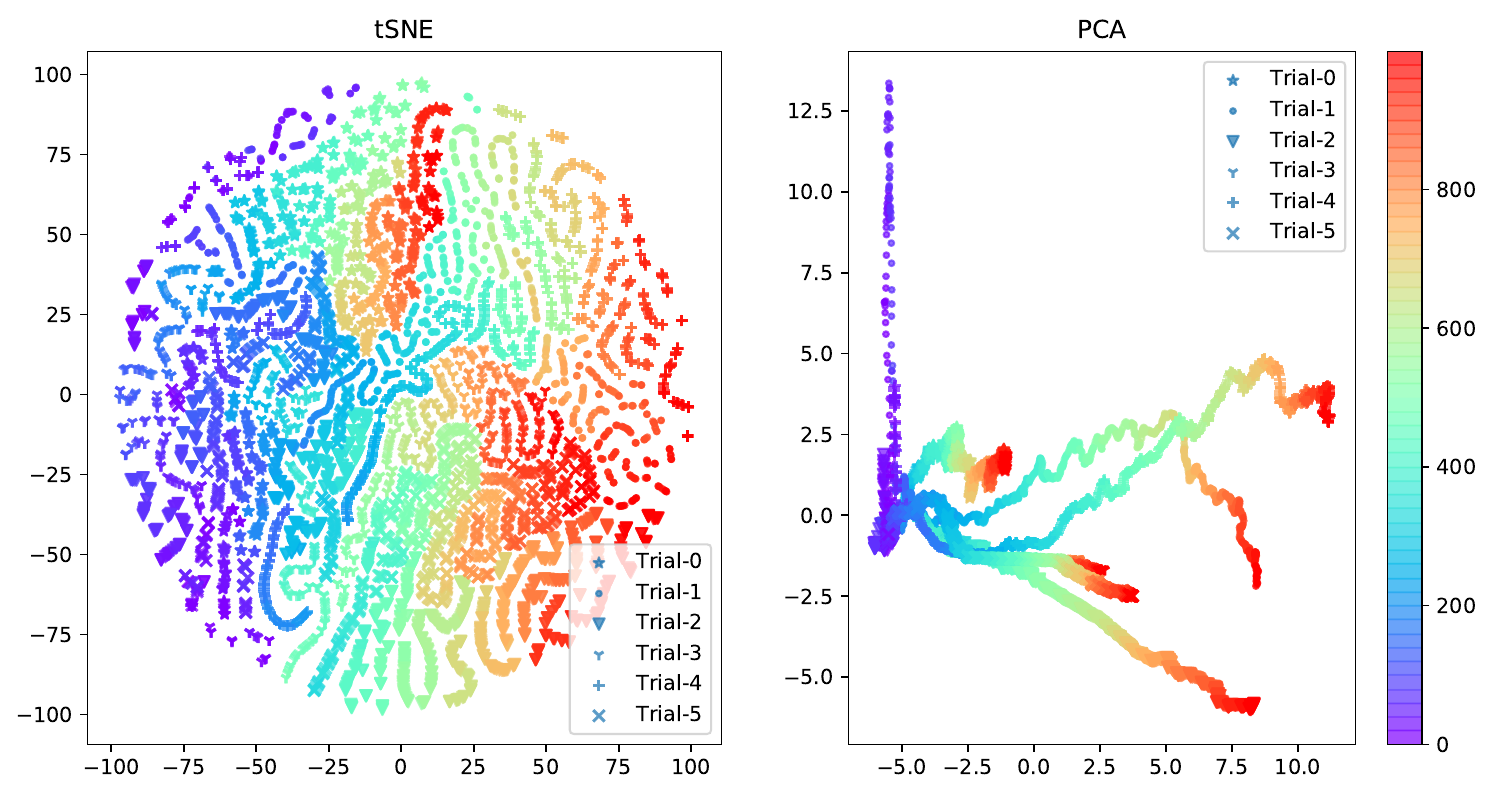}
}
\hspace{-0.3cm}
\subfigure[Performance View in Ant-v1]{
\includegraphics[width=0.49\textwidth]{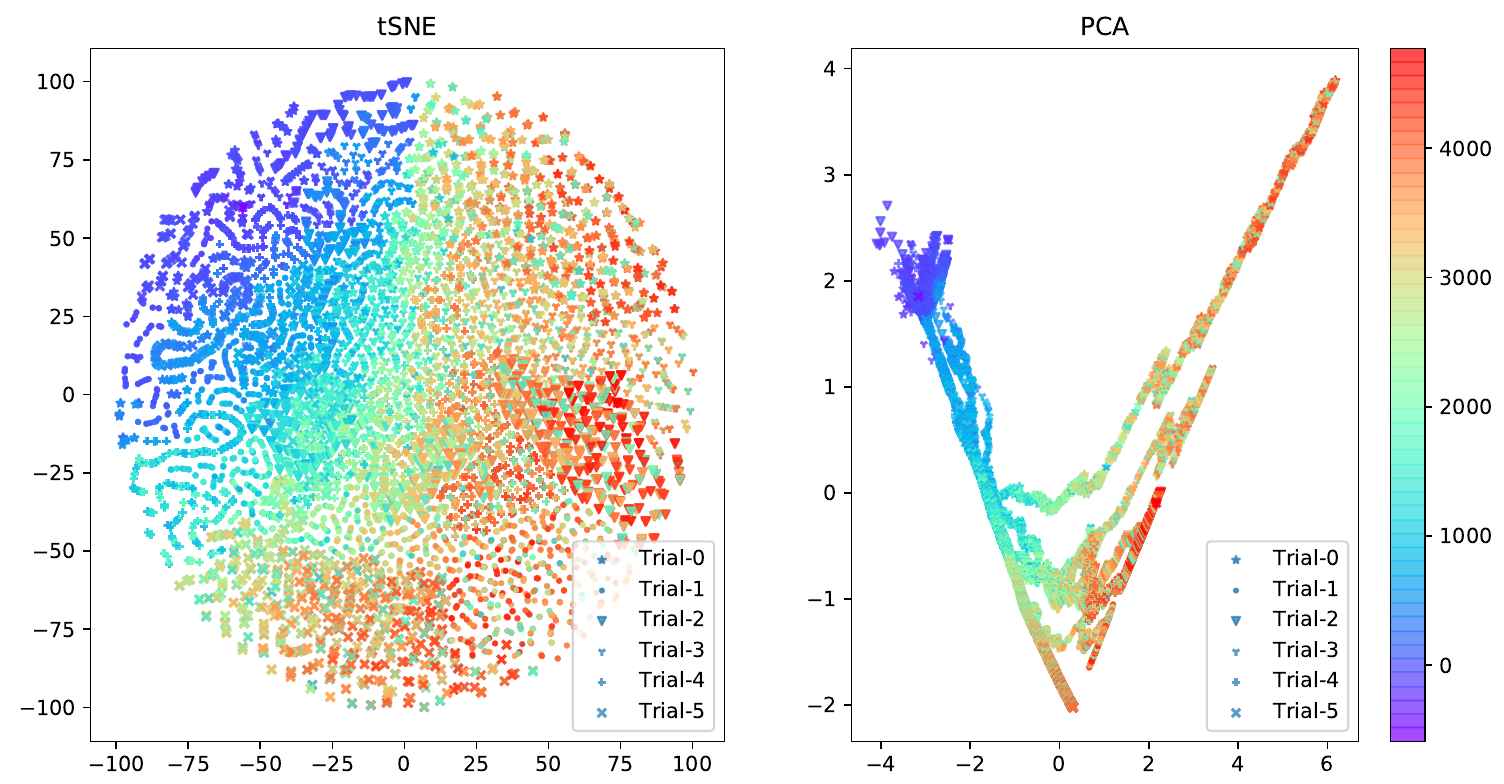}
}
\hspace{-0.25cm}
\subfigure[Process View in Ant-v1]{
\includegraphics[width=0.49\textwidth]{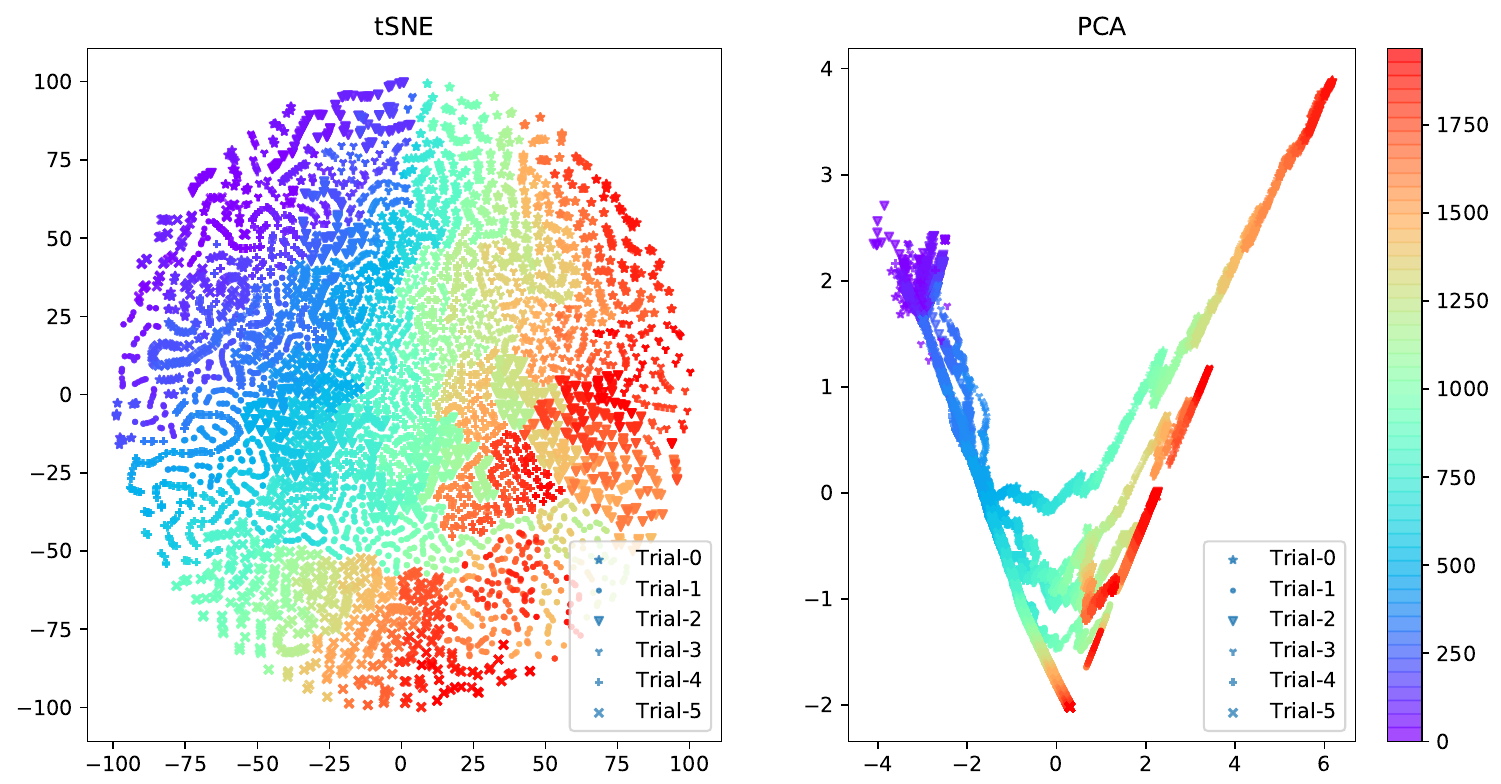}
}
\caption{
Visualizations of end-to-end (E2E) learned Origin Policy Representation (OPR) for policies collected during 5 trials (denoted by different kinds of markers).
In total, about 6k policies are plotted for HalfCheetah-v1 (\textit{a-b}) and 12k for Ant-v1 (\textit{c-d}).
In each subplot, t-SNE and PCA 2D embeddings are at left and right respectively.
In performance view, 
each policy (i.e., marker) is colored by its performance evaluation (averaged return).
In process view, each policy is colored by its corresponding iteration ID during GPI process.}
\label{figure:visual_opr}
\end{figure}

Parallel to OPR, end-to-end trained SPR is visualized in Fig.\ref{figure:visual_spr}.
A more obvious multimodality can be observed in both t-SNE and PCA space:
policies from different trials start from the same region and then diverge during the following learning process. 
Different from OPR, SPR shows more distinction among different trials since SPR is a more direct reflection of policy behavior (\textit{dynamics} property as mentioned in Sec. \ref{app:critira}).
Another thing is, policies from different trials forms wide `strands' especially in t-SNE representation space.
We conjecture that it is because SPR is a more stochastic way to obtain representation as random selected state-action pairs are used.

\begin{figure}[ht]
\centering
\hspace{-0.3cm}
\subfigure[Performance View in HalfCheetah-v1]{
\includegraphics[width=0.49\textwidth]{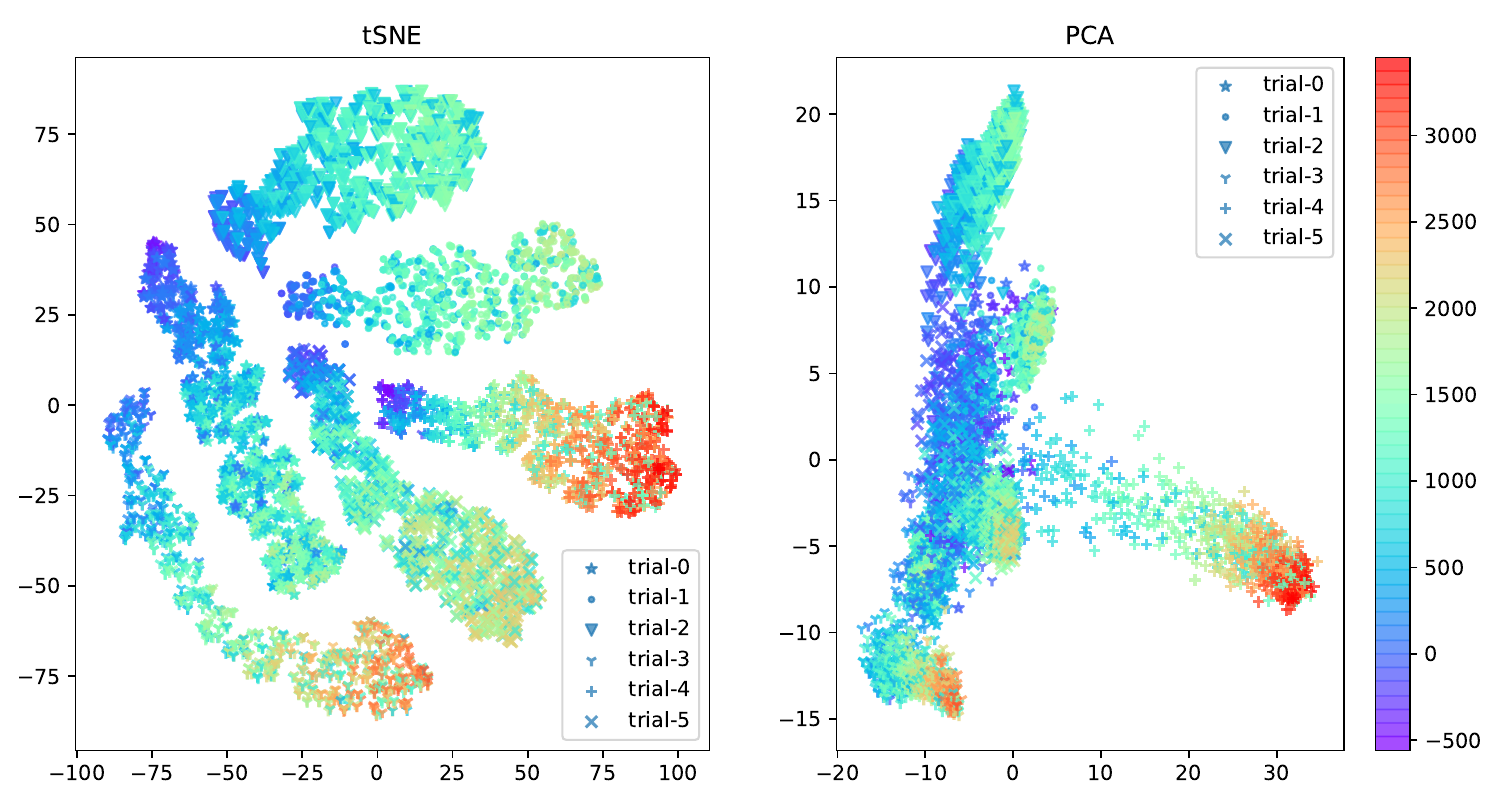}
}
\hspace{-0.25cm}
\subfigure[Process View in HalfCheetah-v1]{
\includegraphics[width=0.49\textwidth]{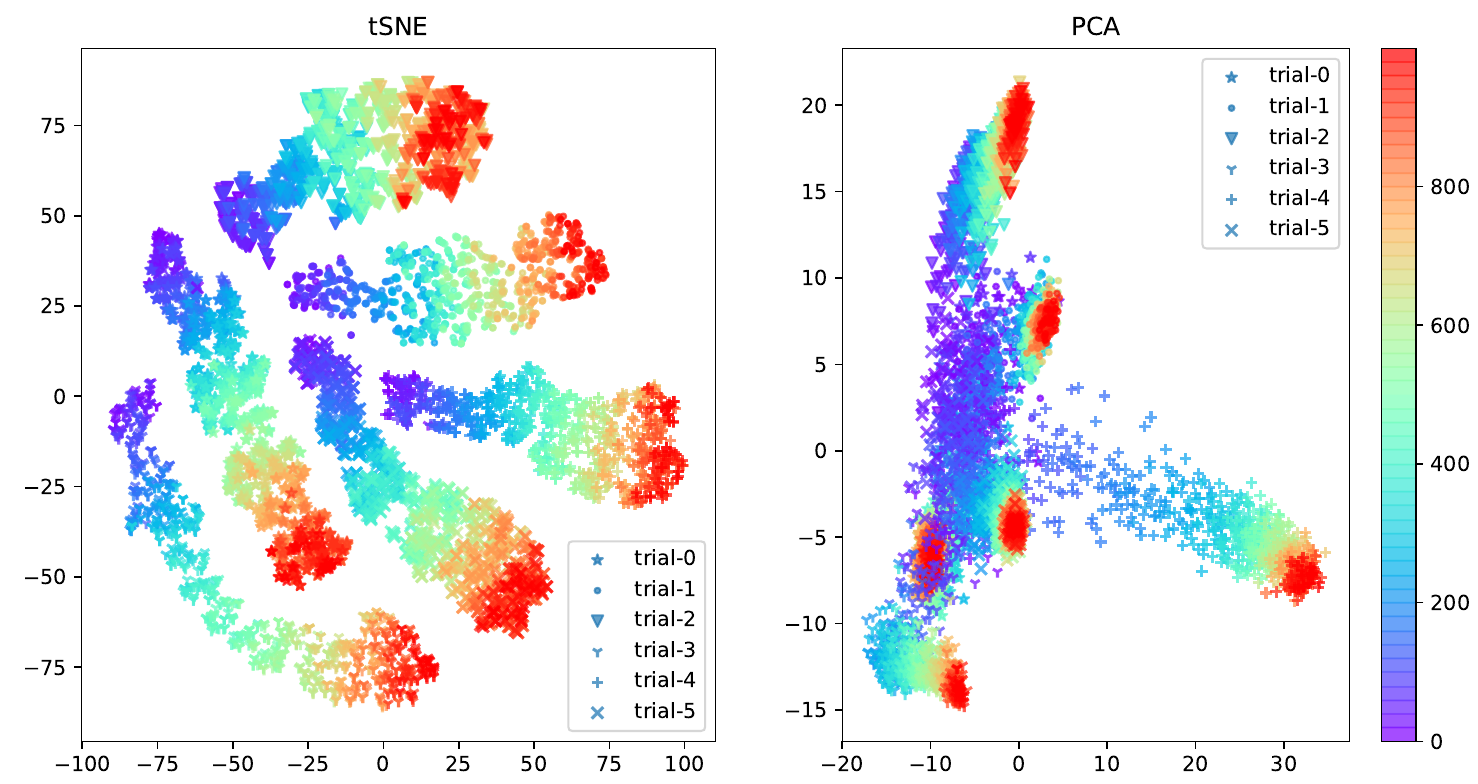}
}
\hspace{-0.3cm}
\subfigure[Performance View in Ant-v1]{
\includegraphics[width=0.49\textwidth]{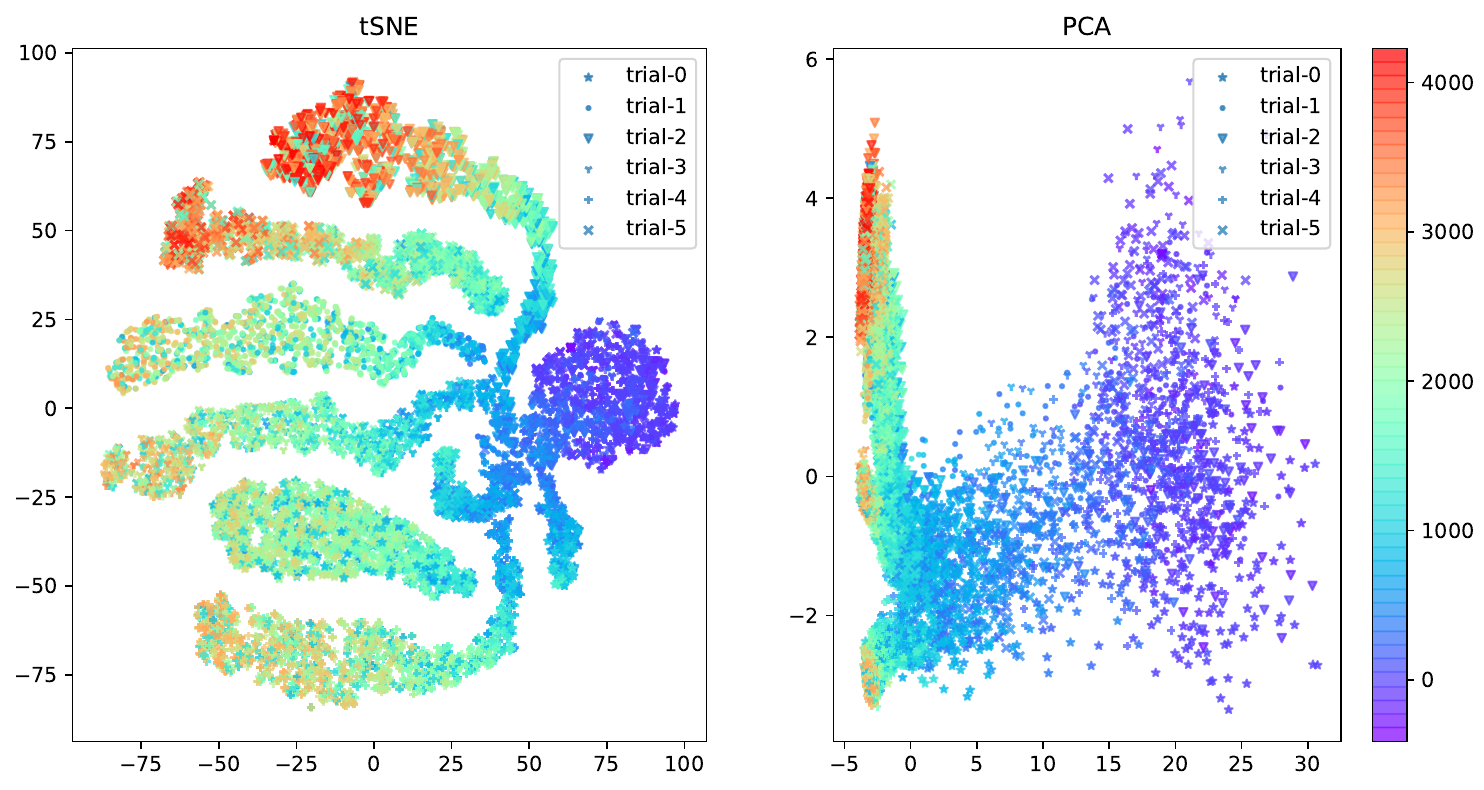}
}
\hspace{-0.25cm}
\subfigure[Process View in Ant-v1]{
\includegraphics[width=0.49\textwidth]{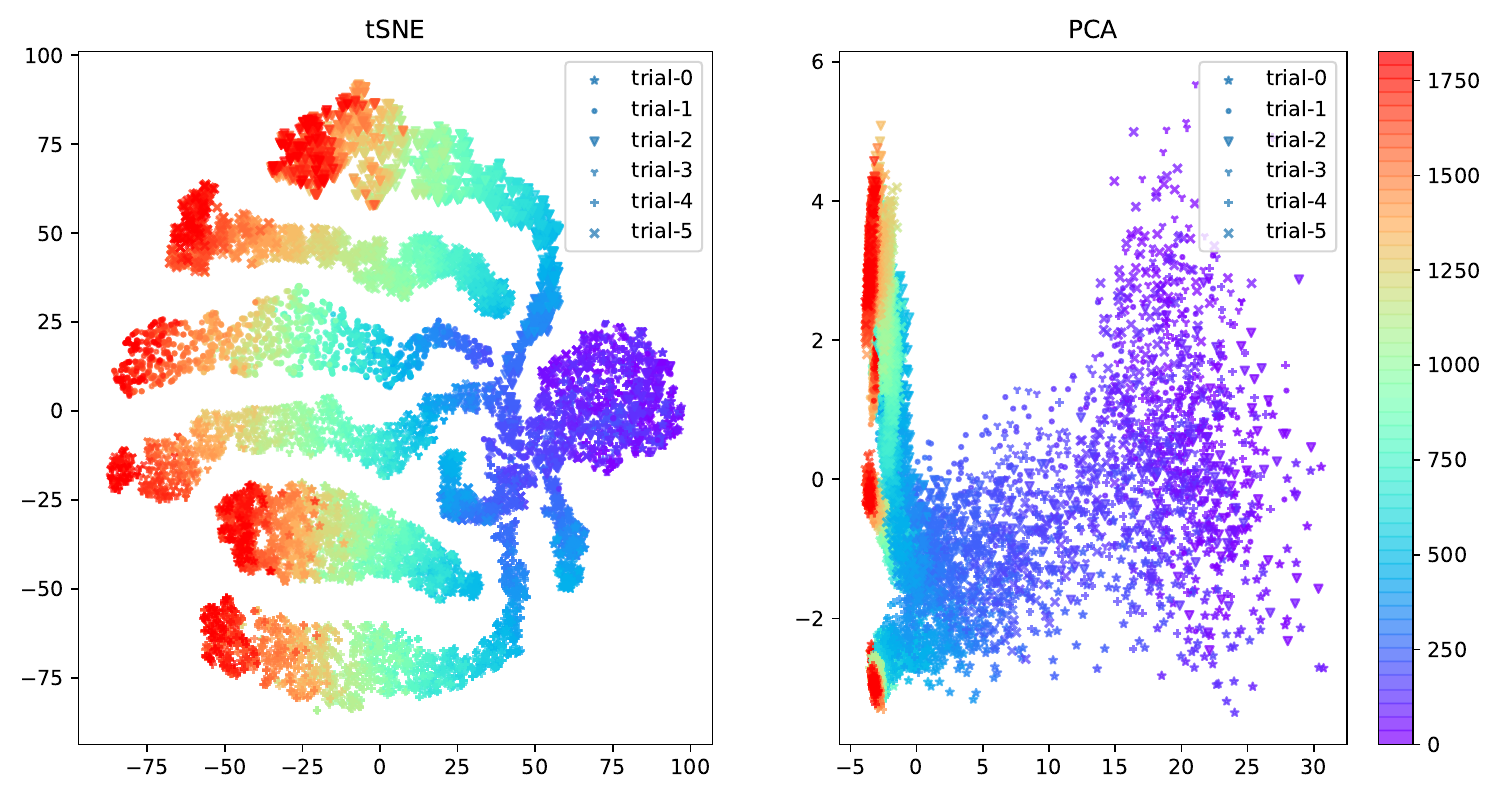}
}
\caption{
Visualizations of end-to-end (E2E) learned Surface Policy Representation (SPR) for policies collected during 5 trials (denoted by different kinds of markers).
In performance view, 
each policy (i.e., marker) is colored by its performance evaluation (averaged return).
In process view, each policy is colored by its corresponding iteration ID during GPI process.
}
\label{figure:visual_spr}
\end{figure}


\end{document}